\documentclass[letterpaper]{article} 
\usepackage[preprint]{aaai2027}
\usepackage[hyphens]{url}  
\usepackage{graphicx} 
\graphicspath{{figures/}}
\usepackage{natbib}  
\usepackage{caption} 
\usepackage{algorithm}
\usepackage{algpseudocode} 
\usepackage{amsthm}

\usepackage{newfloat}
\usepackage{adjustbox}
\usepackage{listings}
\DeclareCaptionStyle{ruled}{labelfont=normalfont,labelsep=colon,strut=off} 
\floatstyle{ruled}
\newfloat{listing}{tb}{lst}{}
\floatname{listing}{Listing}

\usepackage{booktabs}
\usepackage{amssymb}

\newtheorem{lemma}{Lemma}
\newtheorem{proposition}{Proposition}
\newtheorem{definition}{Definition}
\newtheorem{remark}{Remark}

\title{GraspMeanFlow: SE(3)-Equivariant MeanFlow for Few-Step 6-DoF Grasp Generation}
\author{
    Jiyong Kwon\textsuperscript{\rm 1}\equalcontrib,
    Yikun Bai\textsuperscript{\rm 2}\equalcontrib\corresponding,
    Amirhossein Mollaali\textsuperscript{\rm 1},
    Guang Lin\textsuperscript{\rm 1,\rm 2}\corresponding
}

\affiliations{
    \textsuperscript{\rm 1}School of Mechanical Engineering,
    Purdue University\\
    \textsuperscript{\rm 2}Department of Mathematics,
    Purdue University\\
    West Lafayette, IN 47907, USA\\
    kwon165@purdue.edu,
    bai195@purdue.edu,
    amollaal@purdue.edu,
    guanglin@purdue.edu
}

\usepackage{amsmath}
\usepackage{amsfonts}
\usepackage{etoc}

\begin{document}

\pagestyle{plain}
\pagenumbering{arabic}
\setcounter{page}{1}

\maketitle
\thispagestyle{plain}
\begin{abstract}
Recent data-driven methods for synthesizing 6-DoF grasp poses use generative
models to learn complex grasp pose distributions and generate diverse candidate
poses. In particular, SE(3)-equivariant flow-based models generate grasp poses
that transform consistently with object rotations and translations. However,
these methods sample by iterative numerical integration, requiring tens of
function evaluations per grasp and limiting their use in real-time manipulation.
We propose GraspMeanFlow, an SE(3)-equivariant MeanFlow framework for few-step
6-DoF grasp generation. Our method learns the average velocity over a finite
time interval, defined through the time-ordered exponential so that it
reproduces exactly the rigid-body displacement accumulated over that interval.
We prove that a point-cloud-conditioned distribution transported by an
equivariant average-velocity flow map remains invariant, so equivariance is
retained under few-step sampling, and we condition the field on a pair of times
by lifting both to equivariant vectors, leaving the backbone otherwise
unchanged. For stable training, we pair a flow-matching boundary term with either of two
consistency terms: the differential MeanFlow identity, whose target requires a
Jacobian-vector product, or an equivalent semigroup loss that avoids it. Experiments on ACRONYM show that a single function evaluation of GraspMeanFlow
reaches the EMD that an iterative SE(3) flow model needs five steps to approach, 
that a second instantiation of the same framework improves grasp success by up
to 24.3 points in the few-step regime, and that both generate grasp
distributions transforming exactly with the object.
\end{abstract}
\section{Introduction}

Synthesizing six degrees of freedom (DoF) grasp poses from 3D observations of an
object, such as a surface point cloud $\mathcal{P}$, is a fundamental task in
robotics. A grasp pose for a parallel-jaw gripper is a rigid transformation
$T=(R,x)\in\mathrm{SE}(3)$, and since the set of successful grasps is highly
multimodal, the task is naturally posed as conditional generative modeling of
$p(T\mid\mathcal{P})$ on the $\mathrm{SE}(3)$ manifold. Data-driven generative
approaches have accordingly become dominant, from variational
autoencoders~\citep{mousavian20196} and energy-based
diffusion~\citep{urain2023se3dif} to stochastic-interpolant
bridges~\citep{chen2024bridger}. Most recently, EquiGraspFlow
\citep{lim2024equigraspflow} proposed a conditional continuous normalizing flow
whose architecture guarantees $\mathrm{SE}(3)$-equivariance by construction, so
that generated grasps transform identically with the object without any data
augmentation.

However, a limitation shared by all of these models is that sampling is
\emph{iterative}. Flow-based models numerically integrate a time-dependent
velocity field and diffusion-based models repeatedly denoise, so a single sample
costs tens of network evaluations; EquiGraspFlow, for instance, uses a
fourth-order Runge--Kutta--Munthe-Kaas solver over $20$ steps, amounting to $80$
evaluations of the velocity field. Since a manipulation pipeline generates,
filters, and ranks a hundred candidates every time the scene changes, an ideal
model should reproduce the same grasp distribution in one or a few evaluations.

MeanFlow~\citep{geng2025meanflow,geng2025improved} offers a route to such a model, replacing the
instantaneous velocity of flow matching with the \emph{average velocity} over a
finite interval and yielding a training objective for few-step generation that
requires neither a teacher nor distillation. Grasp poses, however, do not live
in a Euclidean vector space, and the extension is not merely notational. Because
$\mathfrak{so}(3)$ is non-commutative, the average angular velocity cannot be
obtained by integrating instantaneous velocities, as the naive integral does not
reproduce the endpoint rotation of the trajectory. Flattening rotations into a
Euclidean representation sidesteps this difficulty, but forfeits both the
geometry of the manifold and the equivariance that makes $\mathrm{SE}(3)$ grasp
generators robust to object rotations.

In this paper, we propose \emph{GraspMeanFlow}, an $\mathrm{SE}(3)$-equivariant
few-step 6-DoF grasp pose generative model. We define the average velocity on
$\mathrm{SE}(3)$ through the \emph{time-ordered} exponential, so that the
learned Lie-algebra element reproduces exactly the rigid-body displacement
accumulated over an interval, and develop the conditional and equivariant theory
that grasp generation requires but that existing manifold MeanFlow formulations
do not address. Realizing this model raises two further difficulties---a MeanFlow
field is conditioned on a \emph{pair} of times, which equivariant point-cloud
backbones cannot accommodate through the usual scalar embeddings, and the
differential MeanFlow identity is numerically fragile on $\mathrm{SO}(3)$---both
of which we resolve while leaving the equivariance guarantee intact. Our
contributions are as follows:

    \begin{itemize}
    \item \textbf{Conditional and Equivariant SE(3) MeanFlow.} We formulate
    average-velocity generation for point-cloud-conditioned distributions on
    $\mathrm{SE}(3)$, defining the average velocity through the time-ordered exponential. We prove that a
    conditional distribution transported by an equivariant average-velocity flow
    map remains invariant, so equivariance survives the reduction in sampling
    steps.

    \item \textbf{Equivariance-Preserving Network for Two-Time Fields.} We
    extend the Vector Neuron~\citep{deng2021vector} backbone of EquiGraspFlow
    from a single time to a time pair $(s,t)$ while retaining its
    $\mathrm{SO}(3)$-equivariance, by lifting both times through its
    existing equivariant lifting mechanism. The extension
    adds no parameters beyond the embedding, so our comparison isolates the
    training objective from model capacity.

\item \textbf{Two Instantiations and Practical Recipes.} The interval field can
be trained by any consistency term anchored to a flow-matching boundary. We
report two: \textbf{GMF-SG}, which uses a finite semigroup identity and avoids
the Jacobian--vector product entirely, and \textbf{GMF-JVP}, which uses the
differential identity in an inverse-Jacobian target form together with a
per-object optimal-transport coupling. They are complementary --- GMF-SG is
stronger on distributional fidelity, GMF-JVP on simulated grasp success --- and
on four ACRONYM categories improve success by up to $24.3$ points in the
few-step regime.
\end{itemize}

\section{Background}
\subsection{Rigid Transformations and Grasp Poses}\label{sec:se3-basics}
The special Euclidean group is
\begin{equation}
    \mathrm{SE}(3)=\{(R,x): R\in\mathrm{SO}(3),\;x\in\mathbb{R}^3\},\nonumber
\end{equation}
acting on a point $\mathbf{p}\in\mathbb{R}^3$ by $\mathbf{p}\mapsto R\mathbf{p}+x$, and we represent a grasp pose as $T=(R,x)$. For an angular velocity $\omega\in\mathbb{R}^3$ and a linear velocity $v\in\mathbb{R}^3$, a rigid-pose trajectory satisfies
\begin{equation}
    \dot R_t = [\omega_t]\, R_t, \qquad \dot x_t = v_t,
    \label{eq:rigid-ode}
\end{equation}
where $[\cdot] : \mathbb{R}^3 \to \mathfrak{so}(3)$ maps a three-vector to the skew-symmetric matrix satisfying $[\omega]\,\mathbf{p} = \omega \times \mathbf{p}$ for any $\mathbf{p} \in \mathbb{R}^3$:
\begin{equation}
    [\omega] = \begin{pmatrix} 0 & -\omega_3 & \omega_2 \\ \omega_3 & 0 & -\omega_1 \\ -\omega_2 & \omega_1 & 0 \end{pmatrix}, \qquad \omega = (\omega_1, \omega_2, \omega_3)^\top .\nonumber
\end{equation}
Its inverse $(\cdot)^\vee : \mathfrak{so}(3) \to \mathbb{R}^3$ recovers the three-vector from a skew-symmetric matrix. We identify $\mathfrak{so}(3)\cong\mathbb{R}^3$ through this pair and write angular velocities as three-vectors throughout, the matrix form $[\omega]$ appearing only inside $\exp$ and $\mathcal{T}\!\exp$.

\subsection{Flow matching for 6-DoF grasp generation}\label{sec:grasp-fm}

A flow-based grasp generator constructs a probability path from a simple prior distribution $p_1(T\mid\mathcal{P})$ to the data distribution $p_0(T\mid\mathcal{P})$. For training pairs $(T_0, T_1)$ with $T_0$ a successful grasp and $T_1$ a prior sample, we define an interpolation
\begin{align}
    R_t &= \exp\!\left(t\,\log(R_1R_0^\top)\right) R_0, \label{eq:geodesic_R}\\
    x_t &= (1-t)x_0 + t x_1.\label{eq:geodesic_x}
\end{align}
The corresponding instantaneous velocity under this convention is
\begin{equation}
    \omega_{0\rightarrow 1} = \log(R_1R_0^\top)^\vee,\qquad v_{0\rightarrow 1}=x_1-x_0.\nonumber 
\end{equation}
A standard flow-matching model learns a time-dependent vector field $(\omega_\theta,v_\theta)(T_t,t,\mathcal{P})$ and samples by numerical integration from $t=1$ to $t=0$.

\subsection{MeanFlow and Finite-Interval Average Velocity}
\label{sec:meanflow-bg}

MeanFlow~\citep{geng2025meanflow} predicts the finite-interval \emph{average
velocity} in place of the instantaneous velocity of flow matching. For a
Euclidean trajectory $z_\tau\in\mathbb{R}^n$ with $\dot z_\tau=v_\tau$, the
average velocity over $[s,t]$ is
\begin{equation}
    u_{s,t}\;=\;\frac{1}{t-s}\int_s^t v_\tau\,d\tau,
    \label{eq:euclid-avg}
\end{equation}
which the model predicts from the state $z_t$ and the time pair $(s,t)$.
Differentiating $(t-s)\,u_{s,t}$ in $t$ gives the \emph{MeanFlow identity}
\begin{equation}
    v_t\;=\;u_{s,t}\;+\;(t-s)\,\frac{d}{dt}\,u_{s,t},
    \label{eq:x-identity}
\end{equation}
whose second term is a Jacobian--vector product; regressing the left-hand side
against the model output is the MeanFlow training objective.

Two concurrent works~\citep{zhong2026riemannian,woo2026riemannian} extend MeanFlow to Riemannian manifolds, evaluated on synthetic geometries. We instead develop MeanFlow on $\mathrm{SE}(3)$ through its closed form Lie group structure (Section~\ref{sec:se3-meanflow}), integrating spatial-frame angular and linear velocities in the Lie algebra; for the geodesic interpolation used here, the average velocity has a closed form through the Lie logarithm, which serves as the training target.
The same $\mathrm{SE}(3)$ average-velocity formulation is applied to protein backbone generation by \citet{bai2026se3meanflow}, in the body frame $\dot R_t=R_t[\omega_t]$ rather than the spatial frame used here, which our equivariance guarantee requires (Section~\ref{sec:equivariance}).

\section{Our Method: SE(3)-MeanFlow for 6-DoF Grasp Generation}\label{sec:se3-meanflow}

\begin{figure}
    \centering
    \includegraphics[width=1\linewidth]{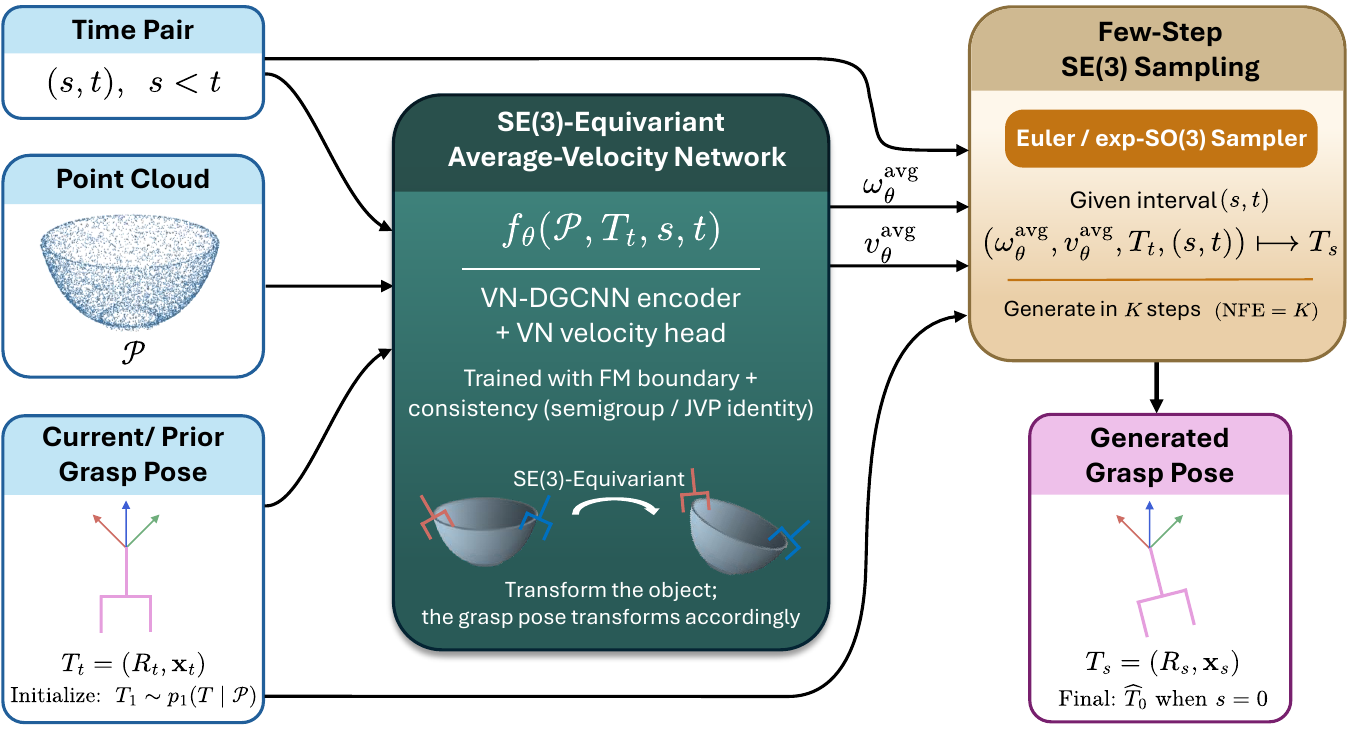}
    \caption{\textbf{Overview of the proposed GraspMeanFlow framework.}
    Given an object point cloud $\mathcal{P}$, a current grasp pose $T_t$ initialized from the object-centered prior at $t=1$, and a time pair $(s,t)$ with $s<t$, the SE(3)-equivariant average-velocity network predicts the interval-averaged angular and linear velocities $(\omega_{\theta}^{\mathrm{avg}},\,{v}_{\theta}^{\mathrm{avg}})$. An Euler or exp-$SO(3)$ sampler then uses these predictions to transport $T_t$ to the earlier-time pose $T_s$ and generate a grasp pose in a few steps. The network is trained using a flow-matching boundary loss together with a consistency term (semigroup or differential/JVP), while preserving SE(3)-equivariance.}
    \label{fig:main_fig}
\end{figure}

\subsection{Problem Formulation}\label{sec:problem}
\paragraph{Point cloud, pose and dataset.}
Given an object point cloud $\mathcal{P}:=\{\mathbf{p}_k\in\mathbb{R}^3\}_{k=1}^K$, we aim to sample a set of candidate grasps
\begin{equation}
    \{T^{(m)}=(R^{(m)},x^{(m)})\}_{m=1}^M \sim p(T\mid\mathcal{P}),\nonumber
\end{equation}
where $R\in \mathrm{SO}(3), x\in \mathbb{R}^3$. At training time, the dataset
\begin{equation}
    \mathcal{D}=\{(\mathcal{P}_i,\{T_{ij}\}_{j=1}^{J_i})\}_{i=1}^N\nonumber
\end{equation}
contains $N$ objects, where each point cloud $\mathcal{P}_i$ is annotated with $J_i$ successful grasp poses $\{T_{ij}\}$ drawn from the ground-truth conditional distribution $q(T\mid\mathcal{P}_i)$. Our generator is trained on these positive grasps to approximate $q$. Figure~\ref{fig:main_fig} gives an overview of the framework.

\paragraph{Conditional prior and interpolation.}
We use an object-centered prior $p_1(T\mid\mathcal{P})=p_1(R)\,p_1(x\mid\mathcal{P})$, with $p_1(R)$ uniform on $\mathrm{SO}(3)$ and $p_1(x\mid\mathcal{P})$ Gaussian centered at the point-cloud mean $\bar{\mathbf{p}}=\frac{1}{K}\sum_k \mathbf{p}_k$:
\begin{equation}
    R_1\sim \mathcal{U}(\mathrm{SO}(3)), \qquad x_1\sim\mathcal N(\bar{\mathbf{p}}, \sigma^2 I).\nonumber
\end{equation}
We use $\sigma=1$ throughout, so that in the mean-centered frame in which the
network operates the translation prior is standard normal. Centering at $\bar{\mathbf{p}}$ makes the prior $\mathrm{SE}(3)$-invariant, which is a prerequisite for the equivariance guarantee of Section~\ref{sec:equivariance}. Given a data grasp $T_0=(R_0,x_0)$ and a prior sample $T_1=(R_1,x_1)$, the intermediate pose $T_t$ is obtained by geodesic interpolation on $\mathrm{SO}(3)$ and linear interpolation in $\mathbb{R}^3$.

\subsection{Average Velocity on SE(3)}
\label{sec:avg-velocity}

We decompose the flow on $\mathrm{SE}(3)$ into its $\mathrm{SO}(3)$ and
$\mathbb{R}^3$ components. The rotation branch is where non-commutativity of the
group enters and is our focus below; the translation branch is the abelian
degenerate case and coincides with Euclidean MeanFlow.

Consider a rigid-pose trajectory $T_\tau=(R_\tau,x_\tau)$ on $\mathrm{SE}(3)$
governed by instantaneous angular and linear velocities
$(\omega_\tau,v_\tau)\in\mathbb{R}^3\times\mathbb{R}^3$,
\begin{equation}
    \dot R_\tau=[\omega_\tau]\,R_\tau,
    \qquad
    \dot x_\tau=v_\tau .
    \label{eq:se3-ode}
\end{equation}
For an interval $[s,t]$ with $s<t$, we define the $\mathrm{SE}(3)$ average
velocity $(\omega^{\mathrm{avg}},v^{\mathrm{avg}})(s,t,T_t)$ as the constant
Lie-algebra velocity whose endpoint update over $[s,t]$ matches that of the
trajectory:
\begin{align}
    \exp\!\big((t-s)\,[\omega^{\mathrm{avg}}]\big)
    &=\mathcal{T}\!\exp\!\Big(\int_s^t [\omega_\tau]\, d\tau\Big),
    \label{eq:se3-mean-velocity}\\
    (t-s)\,v^{\mathrm{avg}}
    &=\int_s^t v_\tau\, d\tau .
    \label{eq:se3-mean-velocity-x}
\end{align}
Here $\mathcal{T}\!\exp$ denotes the time-ordered matrix exponential, required
because $\mathfrak{so}(3)$ is non-commutative: integrating $\omega_\tau$ as a
vector does not in general recover the endpoint rotation. Integrating
\eqref{eq:se3-ode} gives $R_t=D(s,t)\,R_s$ with
$D(s,t)=R_tR_s^\top=\exp\!\big((t-s)[\omega^{\mathrm{avg}}]\big)$, so that
$(\omega^{\mathrm{avg}},v^{\mathrm{avg}})$ encode the rigid-body displacement
accumulated over $[s,t]$ in a single Lie-algebra element. We suppress the
dependence on $\mathcal{P}$ where unambiguous.

\paragraph{Differential form.}
Along the geodesic interpolation of Section~\ref{sec:grasp-fm}, the instantaneous
angular velocity is the constant $\omega_t=\log(R_1R_0^\top)^\vee$, so that
$R_t=\exp(t\,[\omega_t])\,R_0$. Writing
\begin{equation}
    \omega^{s\to t}:=(t-s)\,\omega^{\mathrm{avg}}(s,t,T_t)\in\mathbb{R}^3
    \label{eq:log-displacement}
\end{equation}
for the rotation log-displacement, so that
$\exp\!\big([\omega^{s\to t}]\big)=D(s,t)$, and differentiating
\eqref{eq:se3-mean-velocity} with respect to $t$ yields the $\mathrm{SE}(3)$
counterpart of the MeanFlow identity,
\begin{equation}
    J\big(\omega^{s\to t}\big)\;\tfrac{d}{dt}\omega^{s\to t}\;=\;\omega_t ,
    \label{eq:omega_identity}
\end{equation}
where $\tfrac{d}{dt}$ is the total derivative along the trajectory,
\begin{equation}
    \tfrac{d}{dt}\omega^{s\to t}
    =\partial_t \omega^{s\to t}
    +\big\langle\nabla_R \omega^{s\to t},\dot R_t\big\rangle
    +\big\langle\nabla_x \omega^{s\to t},\dot x_t\big\rangle ,
    \label{eq:domega-dt}
\end{equation}
and $J:\mathbb{R}^3\to\mathbb{R}^{3\times3}$ is the left Jacobian of
$\mathrm{SO}(3)$,
\begin{equation}
    J(\phi):=\int_0^1 e^{\alpha[\phi]}\,d\alpha
    =\sum_{k=0}^{\infty}\frac{1}{(k+1)!}\big([\phi]\big)^{k} .
    \label{eq:left_jacobian}
\end{equation}
Its closed form and small-$\|\phi\|$ expansion are given in
Appendix~\ref{sec:se3-mf-details}\footnote{All appendices referenced in this paper are provided in the technical supplement.}, together with a derivation of
\eqref{eq:omega_identity}.

Equation~\eqref{eq:omega_identity} is the differential characterization of
the definition \eqref{eq:se3-mean-velocity} and suggests a regression target in the manner of Euclidean MeanFlow. 

\begin{remark}[Translation branch]
Since $\mathbb{R}^3$ is abelian, \eqref{eq:se3-mean-velocity-x} is a plain time
integral, the left Jacobian satisfies $J\equiv I$, and
\eqref{eq:omega_identity} collapses to the Euclidean MeanFlow identity
\eqref{eq:x-identity}. We therefore treat the translation branch exactly as
in MeanFlow~\citep{geng2025meanflow}.
\end{remark}

\subsection{Training Objective}
\label{sec:objective}

\paragraph{The differential objective.}
Equation~\eqref{eq:omega_identity} provides a regression target for the average
velocity, in direct analogy with Euclidean MeanFlow. Let $f_\theta$ output
$(\omega^{\mathrm{avg}}_\theta,v^{\mathrm{avg}}_\theta)(s,t,T_t)$ and write
$\omega^{s\to t}_\theta:=(t-s)\,\omega^{\mathrm{avg}}_\theta$. Solving
\eqref{eq:omega_identity} for the average velocity and applying a stop-gradient,
as in~\citep{geng2025meanflow}, gives the target
\begin{equation}
    \omega^{\mathrm{avg}}_{\mathrm{tgt}}
    =J\big(\omega^{s\to t}_\theta\big)^{-1}\omega_t
     -(t-s)\,\tfrac{d}{dt}\omega^{\mathrm{avg}}_\theta ,
    \label{eq:diff-target}
\end{equation}
and the $\mathrm{SE}(3)$ MeanFlow objective
\begin{equation}
    \mathcal{L}^{\mathrm{diff}}
    =\mathbb{E}_{s<t}\Big[\big\|\omega^{\mathrm{avg}}_\theta
      -\mathrm{sg}\big(\omega^{\mathrm{avg}}_{\mathrm{tgt}}\big)\big\|_2^2\Big],
    \label{eq:diff-loss}
\end{equation}
where $\tfrac{d}{dt}\omega^{\mathrm{avg}}_\theta$ is computed as a single
Jacobian--vector product along the tangent
$(\dot R_t,\dot x_t,\dot t,\dot s)=([\omega_t]R_t,\,v_t,\,1,\,0)$. The
translation branch is the $J\equiv I$ case, recovering the Euclidean MeanFlow
target $v_t-(t-s)\tfrac{d}{dt}v^{\mathrm{avg}}_\theta$ verbatim.

In practice, however, we find the Jacobian--vector product in
\eqref{eq:diff-loss} to be numerically unstable on $\mathrm{SO}(3)$. We therefore turn to an equivalent
characterization of the same definition that avoids it.

\paragraph{Semigroup property.}
Write $D(s,t)=R_tR_s^\top=\exp\!\big([\omega^{s\to t}]\big)$ for the
rotation accumulated over $[s,t]$ and
$p^{s\to t}:=(t-s)\,v^{\mathrm{avg}}(s,t,T_t)$ for the translation
displacement. Inserting $R_m^\top R_m=I$ gives, for any intermediate
$m\in[s,t]$,
\begin{equation}\label{eq:sg-identity}
\begin{cases}
\exp\!\big([\omega^{s\to t}]\big)
    =\exp\!\big([\omega^{m\to t}]\big)\,
     \exp\!\big([\omega^{s\to m}]\big),\\
p^{s\to t}=p^{s\to m}+p^{m\to t}.
\end{cases}
\end{equation}
The rotation composition is \emph{exact} and curvature-free: it is nothing but
the associativity of group multiplication. Curvature enters only if one collapses
the product into a single Lie-algebra sum, where Baker--Campbell--Hausdorff cross
terms appear; we never do, and therefore never need $J$.

\paragraph{Consistency loss.}
Given $m\sim\mathcal{U}([s,t])$, we form the intermediate state by stepping back
the near segment $[m,t]$,
\begin{equation}
    R_m=\exp\!\big(-[\omega^{m\to t}_\theta]\big)R_t,
    \qquad
    x_m=x_t-p^{m\to t}_\theta,
    \label{eq:sg-intermediate}
\end{equation}
and regress the model's \emph{direct} one-step prediction on $[s,t]$ onto the
stop-gradient \emph{composed} two-step target given by \eqref{eq:sg-identity}:
{
\begin{align}
\mathcal{L}^{\mathrm{sg}}
    &=\mathbb{E}\Big[\big\|\omega^{s\to t}_\theta
       -\mathrm{sg}\big(\omega^{s\to t}_{\mathrm{tgt}}\big)\big\|_2^2
      +\big\|p^{s\to t}_\theta
       -\mathrm{sg}\big(p^{s\to t}_{\mathrm{tgt}}\big)\big\|_2^2\Big],
    \label{eq:sg-loss}\\
    \omega^{s\to t}_{\mathrm{tgt}}
    &=\log\!\Big(\exp\big([\omega^{m\to t}_\theta]\big)\,
       \exp\big([\omega^{s\to m}_\theta]\big)\Big)^{\!\vee},
    \nonumber\\
    p^{s\to t}_{\mathrm{tgt}}&=p^{s\to m}_\theta+p^{m\to t}_\theta .
    \nonumber
\end{align}}

Equivalently, the same constraint can be imposed on the transported
\emph{states} rather than the displacements,
\begin{equation}
\mathcal{L}^{\mathrm{sg}}_{\mathrm{geo}}
=\mathbb{E}\Big[\big\|\log(R_dR_c^\top)^{\vee}\big\|_2^2
+\big\|x_d-x_c\big\|_2^2\Big],
\label{eq:sg-loss-geo}
\end{equation}
where $(R_d,x_d)$ and $(R_c,x_c)$ are the poses reached from $(R_t,x_t)$ by the
direct update on $[s,t]$ and by the composed update through $m$. The two
residuals agree to first order and share the same minimizer, differing only in
how they weight errors away from it (Appendix~\ref{sec:semigroup}); we train
with \eqref{eq:sg-loss-geo}. The state-space form \eqref{eq:sg-loss-geo} is
also arrived at, independently and concurrently, by
\citet{woo2026riemannian} as a manifold flow-map consistency objective; we
reach it instead as the finite, derivative-free counterpart of the differential
$\mathrm{SE}(3)$ MeanFlow identity \eqref{eq:omega_identity}, which is what lets
us drop both the Jacobian--vector product and the left Jacobian $J$.
 
 Neither form carries a $1/(t-s)$ factor, since both compare bounded quantities
rather than velocities; the normalization scalar must in any case stay outside
$\log(\exp\cdot\exp)$, since rescaling the two non-commuting
segment generators would corrupt the BCH cross term.

\paragraph{Boundary anchor.}
The semigroup term alone admits collapsed minimizers, so we anchor it at the
diagonal $s\to t$ with a flow-matching boundary,
\begin{equation}
    \mathcal{L}^{\mathrm{bd}}
    =\mathbb{E}_t\Big[\big\|\omega^{\mathrm{avg}}_\theta(t,t,T_t)-\omega_t\big\|_2^2
     +\big\|v^{\mathrm{avg}}_\theta(t,t,T_t)-v_t\big\|_2^2\Big],
    \label{eq:bd-loss}
\end{equation}
giving the total objective
$\mathcal{L}=\lambda_{\mathrm{bd}}\mathcal{L}^{\mathrm{bd}}
+\lambda_{\mathrm{sg}}\mathcal{L}^{\mathrm{sg}}$
(Appendix~\ref{sec:semigroup}).

\paragraph{$\alpha$-Flow warm-up.}
The consistency loss \eqref{eq:sg-loss} uses the model's own prediction on the
near segment $[m,t]$, which is unreliable early in training. We therefore warm up
with an $\mathrm{SE}(3)$ $\alpha$-Flow objective~\citep{cheng2025alpha}, a special
case of \eqref{eq:sg-loss} in which the near segment is supplied by the
\emph{data} velocity rather than the model. We describe the rotation branch; the
translation branch is the Euclidean $\alpha$-Flow objective
of~\citet{cheng2025alpha} verbatim. Place the split point at
$m=\alpha s+(1-\alpha)t$ with $\alpha\in[\alpha_{\min},1]$, so that the near
segment spans $[m,t]$ of length $\alpha(t-s)$. Its accumulated rotation is taken
from the data velocity, $D(m,t)=\exp\big(\alpha(t\!-\!s)[\omega_t]\big)$,
while the far segment $[s,m]$ uses the stop-gradient model evaluation at the
stepped-back state, $D(s,m)=\exp\big([\omega^{s\to m}_\theta]\big)$. By the
semigroup identity \eqref{eq:sg-identity}, the composed target is
\begin{equation}
    \omega^{s\to t}_{\mathrm{tgt}}
    =\log\!\Big(\exp\big(\alpha(t\!-\!s)[\omega_t]\big)\,
      \exp\big([\omega^{s\to m}_\theta]\big)\Big)^{\!\vee},
    \label{eq:af-target-main}
\end{equation}
which the model regresses against as in \eqref{eq:sg-loss}. Like the semigroup
loss, this uses only $\exp$ and $\log$ on $\mathrm{SO}(3)$ and no
Jacobian--vector product. The parameter $\alpha$ interpolates between flow
matching at $\alpha=1$, where the near segment fills the interval and
\eqref{eq:af-target-main} reduces to the boundary term \eqref{eq:bd-loss}, and the
differential target \eqref{eq:diff-loss} as $\alpha\to0$; we anneal $\alpha$
downward over warm-up before switching to the semigroup objective. Full
derivations, including the $\alpha\to0$ limit, are in
Appendix~\ref{sec:alpha-flow}.

\subsection{SE(3)-Equivariant Conditional Generation}
\label{sec:equivariance}

An element $T'=(R',x')\in\mathrm{SE}(3)$ acts on a point cloud by
$T'\mathcal{P}:=\{R'\mathbf{p}_k+x'\}_{k=1}^K$ and on a grasp pose by
$T'T:=(R'R,\,R'x+x')$. Equivariant grasp generation means that rigidly
transforming the object transforms the generated grasps identically, which in
the language of distributions is the following invariance.

\begin{definition}[Invariant conditional distribution]
\label{def:invariant}
A distribution on $\mathrm{SE}(3)$ conditioned on a point cloud is
$\mathrm{SE}(3)$-invariant if $p(T'T\mid T'\mathcal{P})=p(T\mid\mathcal{P})$ for
all $T'\in\mathrm{SE}(3)$.
\end{definition}

\begin{definition}[Equivariant average-velocity field]
\label{def:equi-field}
The pair $(\omega^{\mathrm{avg}}_\theta,v^{\mathrm{avg}}_\theta)$ is
$\mathrm{SE}(3)$-equivariant if, for all $T'=(R',x')$,
\begin{equation}
    (\omega^{\mathrm{avg}}_\theta,v^{\mathrm{avg}}_\theta)(s,t,T'\mathcal{P},T'T)
    =R'\,(\omega^{\mathrm{avg}}_\theta,v^{\mathrm{avg}}_\theta)(s,t,\mathcal{P},T).
\end{equation}
\end{definition}

Sampling applies the average-velocity step (see details in Appendix~\ref{sec:inference}), which
for $s\le t$ defines the map
\begin{equation}
    \Phi_{t\to s}(\mathcal{P},T_t)
    =\Big(\exp\!\big(-(t\!-\!s)[\omega^{\mathrm{avg}}_\theta]\big)R_t,\;
          x_t-(t\!-\!s)\,v^{\mathrm{avg}}_\theta\Big).
    \label{eq:flow-map}
\end{equation}

\begin{proposition}[Equivariance of the flow map]
\label{prop:flowmap-equi}
If $(\omega^{\mathrm{avg}}_\theta,v^{\mathrm{avg}}_\theta)$ is equivariant, then
$\Phi_{t\to s}(T'\mathcal{P},T'T_t)=T'\,\Phi_{t\to s}(\mathcal{P},T_t)$, and so is
any composition $\Phi_{t_1\to t_0}\circ\cdots\circ\Phi_{t_K\to t_{K-1}}$.
\end{proposition}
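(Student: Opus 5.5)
The proof is a direct computation from the definition \eqref{eq:flow-map}, using only the equivariance hypothesis of Definition~\ref{def:equi-field} and the conjugation identity for skew matrices. Fix $T'=(R',x')$ and $T_t=(R_t,x_t)$, and set $h=t-s$. Write $(\omega,v):=(\omega^{\mathrm{avg}}_\theta,v^{\mathrm{avg}}_\theta)(s,t,\mathcal{P},T_t)$. By the hypothesis, the field evaluated at the transformed inputs $(T'\mathcal{P},T'T_t)$ equals $(R'\omega,R'v)$.

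\textbf{Rotation component.} Substituting into \eqref{eq:flow-map}, the rotation of $\Phi_{t\to s}(T'\mathcal{P},T'T_t)$ is $\exp(-h[R'\omega])\,R'R_t$. For $R'\in\mathrm{SO}(3)$ one has $[R'\omega]=R'[\omega]R'^\top$, since $R'(\omega\times R'^\top\mathbf{p})=(R'\omega)\times\mathbf{p}$. Because $\exp$ commutes with conjugation (term by term in its power series), $\exp(-h[R'\omega])=R'\exp(-h[\omega])R'^\top$. The rotation therefore equals $R'\exp(-h[\omega])R_t$, which is exactly the rotation component of $T'\,\Phi_{t\to s}(\mathcal{P},T_t)$ under the action $T'T=(R'R,R'x+x')$.

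\textbf{Translation component.} The translation of $\Phi_{t\to s}(T'\mathcal{P},T'T_t)$ is $(R'x_t+x')-hR'v=R'(x_t-hv)+x'$. This matches the translation of $T'\,\Phi_{t\to s}(\mathcal{P},T_t)$. Together with the rotation step, this gives the one-step claim.

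\textbf{Compositions.} I would proceed by induction on $K$. Assume $\Psi_{K-1}:=\Phi_{t_{K-1}\to t_{K-2}}\circ\cdots$ satisfies $\Psi_{K-1}(T'\mathcal{P},T'T)=T'\Psi_{K-1}(\mathcal{P},T)$. One point needs care: each map is conditioned on the same point cloud, so the composition is understood with $\mathcal{P}$ fixed. Concretely, $\Phi_{t_K\to t_{K-1}}(T'\mathcal{P},T'T)=T'\Phi_{t_K\to t_{K-1}}(\mathcal{P},T)$ is a pose of the form $T'\cdot(\text{pose})$, paired with the condition $T'\mathcal{P}$. This is exactly the input form required by the inductive hypothesis, so the next map again pulls $T'$ out. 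The step sizes $t_k-t_{k-1}$ are deterministic and unaffected by $T'$.

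\textbf{Main obstacle.} The only step that needs care is the conjugation identity for the exponential. I would state and verify it explicitly, including the fact that $\det R'=1$ is needed for $R'(a\times b)=(R'a)\times(R'b)$. Everything else is bookkeeping about the group action.
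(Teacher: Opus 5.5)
Your proposal is correct and follows the paper's proof essentially verbatim. The rotation step uses $[R'\omega]=R'[\omega]R'^{\top}$ lifted to $\exp$, the translation step is the same linearity check, and your induction with $\mathcal{P}$ held fixed just spells out the closure under composition that the paper calls immediate.
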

\begin{proposition}[Few-step invariance]
\label{prop:fewstep-inv}
Let the prior $p_1(T\mid\mathcal{P})$ be $\mathrm{SE}(3)$-invariant and let
$\Phi_{\mathcal{P}}$ be any composition of equivariant steps
\eqref{eq:flow-map}. Then the generated distribution
$(\Phi_{\mathcal{P}})_{\#}p_1$ is $\mathrm{SE}(3)$-invariant.
\end{proposition}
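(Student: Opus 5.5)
The plan is a change of variables in the pushforward, using Proposition~\ref{prop:flowmap-equi} as the only structural input. Write $\Phi_{\mathcal{P}}:=\Phi_{t_1\to t_0}\circ\cdots\circ\Phi_{t_K\to t_{K-1}}(\mathcal{P},\cdot)$ for the composite map at a fixed point cloud. Proposition~\ref{prop:flowmap-equi} gives the intertwining relation
\begin{equation*}
    \Phi_{T'\mathcal{P}}(T'T)=T'\,\Phi_{\mathcal{P}}(T)\qquad\text{for all }T'\in\mathrm{SE}(3),\ T\in\mathrm{SE}(3).
\end{equation*}
Equivalently, $\Phi_{T'\mathcal{P}}\circ L_{T'}=L_{T'}\circ\Phi_{\mathcal{P}}$, where $L_{T'}:T\mapsto T'T$ is left translation.

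I would state the invariance of Definition~\ref{def:invariant} in measure form rather than density form. This avoids any Jacobian factors, since $\Phi_{\mathcal{P}}$ need not be a diffeomorphism in the few-step regime. Prior invariance then reads $p_1(\cdot\mid T'\mathcal{P})=(L_{T'})_{\#}p_1(\cdot\mid\mathcal{P})$. Because $L_{T'}$ preserves the Haar measure on $\mathrm{SE}(3)$, this matches the density statement $p_1(T'T\mid T'\mathcal{P})=p_1(T\mid\mathcal{P})$.

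The target is $(\Phi_{T'\mathcal{P}})_{\#}p_1(\cdot\mid T'\mathcal{P})=(L_{T'})_{\#}(\Phi_{\mathcal{P}})_{\#}p_1(\cdot\mid\mathcal{P})$. I would derive it by chaining pushforwards:
\begin{equation*}
(\Phi_{T'\mathcal{P}})_{\#}(L_{T'})_{\#}p_1
=(\Phi_{T'\mathcal{P}}\circ L_{T'})_{\#}p_1
=(L_{T'}\circ\Phi_{\mathcal{P}})_{\#}p_1
=(L_{T'})_{\#}(\Phi_{\mathcal{P}})_{\#}p_1 .
\end{equation*}
For concreteness, I would verify the identity on test functions. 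For bounded measurable $g$, compute $\mathbb{E}_{T\sim p_1(\cdot\mid T'\mathcal{P})}[g(\Phi_{T'\mathcal{P}}(T))]$. Substitute $T=T'S$ with $S\sim p_1(\cdot\mid\mathcal{P})$, which is the invariance of the prior. Then apply the intertwining relation to get $\mathbb{E}_S[g(T'\Phi_{\mathcal{P}}(S))]$.

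If the pushforward density exists, left-invariance of Haar measure converts this back to $(\Phi_{\mathcal{P}})_{\#}p_1(T'T\mid T'\mathcal{P})=(\Phi_{\mathcal{P}})_{\#}p_1(T\mid\mathcal{P})$.

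I expect the main obstacle to be the bookkeeping, not the algebra. The ingredients are: stating invariance of the prior correctly, including checking it for the given prior, and converting between density and measure forms. The prior is $\mathcal{U}(\mathrm{SO}(3))\times\mathcal{N}(\bar{\mathbf{p}},\sigma^2I)$. Under $T'$, the rotation $R'R$ of a uniform $R$ is again uniform, and $\bar{\mathbf{p}}\mapsto R'\bar{\mathbf{p}}+x'$. Thus $R'x+x'$ is Gaussian centered at the new mean, with the isotropic covariance preserved.

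A subtle point is that the conditioning on $\mathcal{P}$ enters $\Phi$ itself. The argument therefore must use equivariance jointly in $(\mathcal{P},T)$, which Proposition~\ref{prop:flowmap-equi} supplies.
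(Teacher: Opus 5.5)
Your proposal is correct and is essentially the paper's argument: the paper uses Proposition~\ref{prop:flowmap-equi} to get the preimage identity $\Phi_{T'\mathcal{P}}^{-1}(T'A)=T'\,\Phi_{\mathcal{P}}^{-1}(A)$, then applies prior invariance to conclude $(\Phi_{\#}p_1)(T'A\mid T'\mathcal{P})=(\Phi_{\#}p_1)(A\mid\mathcal{P})$. That is your chain $(\Phi_{T'\mathcal{P}}\circ L_{T'})_{\#}p_1=(L_{T'}\circ\Phi_{\mathcal{P}})_{\#}p_1$ evaluated on indicator functions. The paper also works with measures rather than densities, for the same reason you give, and it checks the prior's invariance in the main text rather than inside the proof.
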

We refer to Appendix~\ref{sec:mf-theory} for detailed proofs.

\begin{remark}
Proposition~\ref{prop:fewstep-inv} is stated for pushforward measures rather than
densities, which is what makes it valid in the few-step regime: the
change-of-variables argument used for continuous normalizing
flows~\citep{katsman2021equivariant,lim2024equigraspflow} requires the transport
map to be a diffeomorphism, whereas a learned one-step map carries no such
guarantee. Invariance of the generated distribution needs only equivariance of
the map.
\end{remark}

Both hypotheses hold in our construction: the prior of Section~\ref{sec:problem} is invariant because the rotation component is uniform
on $\mathrm{SO}(3)$ and the translation component is centered at
$\bar{\mathbf{p}}$, which transforms with the object; and the two-time velocity
network (See Appendix~\ref{sec:arch}) is $\mathrm{SE}(3)$-equivariant, which we prove
in Proposition~\ref{prop:net-equivariant}. The training targets
inherit equivariance as well --- \eqref{eq:sg-intermediate} and
\eqref{eq:sg-loss} are built from equivariant field evaluations composed by
$\exp$, $\log$ and addition, all of which commute with the group action
--- so the objective is invariant, and training does not break the guarantee.

\subsection{Sampling}
\label{sec:sampling}

Because $f_\theta$ already represents the interval-averaged field, sampling
replaces numerical integration by a direct average-velocity step. Given a prior
sample $T_1=(R_1,x_1)\sim p_1(T\mid\mathcal{P})$, few-step generation applies
this update on each subinterval of a schedule $1=t_K>\cdots>t_0=0$,
\begin{align}
&R_{t_{k-1}}=\exp\!\big(-(t_k\!-\!t_{k-1})[\omega^{\mathrm{avg}}_\theta]\big)R_{t_k},\nonumber\\
&x_{t_{k-1}}=x_{t_k}-(t_k\!-\!t_{k-1})\,v^{\mathrm{avg}}_\theta ,\label{eq:few-step}
\end{align}
so that the number of function evaluations equals the number of steps; one-step
generation is the case $K=1$. We report
results for two samplers built on \eqref{eq:few-step}: Euler integration as
written, and an exp-$\mathrm{SO}(3)$ variant that takes the rotation update
toward the endpoint implied by the interval-averaged field; pseudocode for both is given in Appendix~\ref{sec:inference}.

Guidance is applied at no additional cost. Following
EquiGraspFlow~\citep{lim2024equigraspflow}, the point cloud is replaced by a
null condition with fixed probability during training, and the network models
the \emph{guided} average velocity directly rather than combining two
evaluations at sampling time. Being an affine combination of the conditional and
unconditional fields, the guided field is equivariant whenever both are, so
Propositions~\ref{prop:flowmap-equi} and~\ref{prop:fewstep-inv} apply unchanged.

\begin{table*}[h]
\centering
\setlength{\tabcolsep}{2pt}
\renewcommand{\arraystretch}{0.95}
\begin{adjustbox}{max width=\textwidth}
\begin{tabular}{l cccc c cccc c cccc c cccc}
\toprule
& \multicolumn{4}{c}{Bowl} && \multicolumn{4}{c}{Laptop} && \multicolumn{4}{c}{Mug} && \multicolumn{4}{c}{Pencil} \\
\cmidrule{2-5}\cmidrule{7-10}\cmidrule{12-15}\cmidrule{17-20}
NFE & 2 & 5 & 10 & 20 && 2 & 5 & 10 & 20 && 2 & 5 & 10 & 20 && 2 & 5 & 10 & 20 \\
\midrule
SE(3)-DiF & 14.1 & 21.7 & 34.3 & 52.3 && 8.0 & 14.9 & 22.9 & 34.5 && 10.4 & 19.9 & 31.0 & 44.6 && 11.4 & 23.6 & 36.2 & 51.2 \\
BRIDGER & $\mathbf{22.0}$ & 77.8 & 90.9 & 93.0 && $\mathbf{9.8}$ & 56.4 & 79.0 & 84.1 && $\mathbf{40.0}$ & $\mathbf{90.6}^{*}$ & 93.0 & 94.4 && 5.4 & 72.6 & $\mathbf{98.8}^{*}$ & $\mathbf{99.0}$ \\
EquiGraspFlow & $\mathbf{58.6}^{*}$ & 75.1 & $95.1$ & 98.0 && $\mathbf{25.6}^{*}$ & 61.2 & $\mathbf{85.5}$ & $\mathbf{95.8}^{*}$ && $\mathbf{57.8}^{*}$ & 85.1 & 91.7 & $\mathbf{97.5}^*$ && $\mathbf{65.0}^{*}$ & $\mathbf{81.3}$ & 92.5 & $\mathbf{99.6}^{*}$ \\
\midrule
GMF-SG (exp-$\mathrm{SO}(3)$) & 15.0 & $\mathbf{93.6}$ & $\mathbf{98.7}$ & $\mathbf{99.2}^{*}$ && 8.2 & $\mathbf{82.6}$ & 83.6 & 85.6 && 23.4 & 87.8 & $\mathbf{94.8}$ & $\mathbf{95.0}$ && $\mathbf{30.4}$ & 77.9 & 85.3 & 84.8 \\
GMF-JVP (exp-$\mathrm{SO}(3)$) & 10.7 & $\mathbf{96.8}^{*}$ & $\mathbf{98.9}^{*}$ & $\mathbf{99.0}$ && 6.3 & $\mathbf{85.5}^{*}$ & $\mathbf{90.5}^{*}$ & $\mathbf{87.6}$ && 17.5 & $\mathbf{88.1}$ & $\mathbf{97.5}^{*}$ & $\mathbf{97.5}^{*}$ && 26.9 & $\mathbf{94.1}^{*}$ & $\mathbf{98.2}$ & 98.1 \\
\midrule
Ground truth & \multicolumn{4}{c}{97.7} && \multicolumn{4}{c}{97.1} && \multicolumn{4}{c}{96.7} && \multicolumn{4}{c}{99.7} \\
\bottomrule
\end{tabular}
\end{adjustbox}
\caption{Per-category grasp success rate (\%) vs.\ NFE, with the
exp-$\mathrm{SO}(3)$ sampler for both GraspMeanFlow variants. Higher is
better. Best and second-best in each column are highlighted; the best is marked with $^{*}$.}
\label{tab:success}
\end{table*}

\section{Experiments}
\label{sec:experiments}
We evaluate GraspMeanFlow on conditional 6-DoF grasp generation against recent diffusion and flow-based baselines, reporting both generation quality and sampling efficiency as the step budget is reduced. We evaluate two instantiations of the framework, which share the network and training budget and differ in the consistency term and the prior--data coupling: \textbf{GMF-SG} pairs the flow-matching boundary with the semigroup loss under independent coupling, and \textbf{GMF-JVP} pairs it with the differential MeanFlow identity, supervised through a Jacobian--vector product, under per-object optimal-transport coupling.

\subsection{Datasets}
Following EquiGraspFlow~\citep{lim2024equigraspflow}, we conduct experiments on four object categories from ACRONYM~\citep{eppner2021acronym}: Bowl, Laptop, Mug, and Pencil, with their train/test splits. Each training example consists of an object point cloud and a set of successful 6-DoF parallel-jaw grasp poses. A single model is trained jointly on all four categories, and every method is evaluated on the same randomly rotated test objects.

\subsection{Baselines}
We compare with three released models. EquiGraspFlow learns an instantaneous $\mathrm{SE}(3)$-equivariant flow and is the most closely related baseline: our model shares its backbone and benchmark, so the comparison isolates the training objective. SE(3)-DiffusionFields~\citep{urain2023se3dif} generates grasps with a score-based diffusion process on $\mathrm{SE}(3)$. BRIDGER~\citep{chen2024bridger} transports samples from a heuristic prior with a stochastic-interpolant bridge. Earlier and excluded grasp generators are discussed in Appendix~\ref{sec:related}.

\subsection{Implementation Details}
We use the publicly released checkpoints of all baselines under a consistent evaluation protocol; SE(3)-DiffusionFields and BRIDGER are trained on supersets of our categories. The baselines are integrated with first-order Euler solvers at step counts matched to the reported NFE; EquiGraspFlow's released RK-MK sampler would otherwise spend four field evaluations per step, and the diffusion baselines are additionally reported at their native settings. Evaluation seeds are fixed and shared across methods.

\subsection{Training Details}
\paragraph{Model and trainer.}
We implement our method within the publicly released EquiGraspFlow codebase and
depart from it in three respects. \emph{(i) Model.} The field is conditioned on
a time pair $(s,t)$ through a shared two-time embedding; the VN-DGCNN encoder
and vector-neuron head are otherwise inherited unchanged
(Appendix~\ref{sec:arch}). \emph{(ii) Objective.} In place of the flow-matching
loss, we train with the SE(3) MeanFlow objective in decomposed form: a
flow-matching boundary term together with a consistency term. We report two instantiations, which share this identical network --- GMF-SG uses semigroup consistency, GMF-JVP the differential identity \eqref{eq:omega_identity} in its inverse-Jacobian form (Appendix~\ref{sec:algo}). \emph{(iii) Coupling.} GMF-JVP additionally pairs
each data grasp with a prior sample by a per-object $\mathrm{SE}(3)$
optimal-transport assignment rather than by independent sampling
(Appendix~\ref{sec:ot-coupling}); GMF-SG retains the independent coupling.

\paragraph{Two-stage training.}
Stage~1 is an $\alpha$-Flow warm-up (the first 18k steps, with $\alpha$ annealed $1{\to}0.2$); Stage~2 switches to the boundary-plus-consistency objective for the remaining 102k steps. The warm-up is optional (Appendix~\ref{sec:ablation}). The consistency weight of GMF-JVP is set so that the two loss terms contribute comparably. Full hyperparameters for both configurations and all stages are listed in Appendix~\ref{sec:config}.

\begin{figure}
    \centering
    \includegraphics[width=0.9\linewidth]{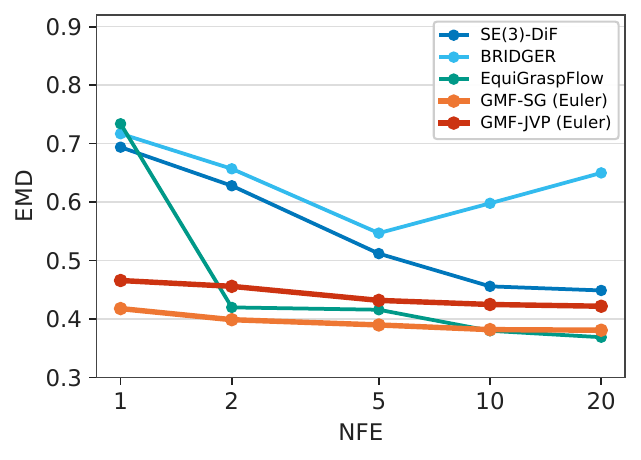}
    \caption{EMD versus NFE, averaged over the four categories, with Euler integration. Both GraspMeanFlow configurations are within $0.05$ of their converged EMD after a single evaluation, whereas EquiGraspFlow collapses at NFE~1 and the diffusion-based baselines stay above $0.44$ throughout.}
\vspace{-1em}
\label{fig:emd_nfe}
\end{figure}

\subsection{Evaluation Metrics and Settings}
Distribution fidelity is measured by Hungarian EMD under the benchmark $\mathrm{SE}(3)$ pose cost. Grasp feasibility is measured by simulated lift success under an NVIDIA Isaac Gym~\citep{makoviychuk2021isaac} protocol, identical for every method. Inference efficiency is measured by the number of function evaluations (NFE) and model-only wall-clock latency; for every sampler used here, each integration step uses one velocity-field evaluation, so NFE equals the number of sampling steps, and the two configurations share the same network and therefore the same cost per step. All latency measurements use a single A100 with batch size one and include model inference only. EMD is reported with Euler integration for both configurations. Success is reported with the exp-$\mathrm{SO}(3)$ sampler, an endpoint-style sampler on the interval-averaged field: at each step it forms the data-endpoint prediction from the average velocity and moves the rotation along the geodesic toward it at a fixed rate, following the exponential rotation scheduler of ReQFlow~\citep{yue2025reqflow}. The accelerated schedule applies only to the rotational component; the translational update is always linear, and the sampler requires NFE${\geq}2$. Success under Euler integration is reported in Appendix~\ref{sec:dense}. Full protocol and simulation details are in Appendices~\ref{sec:protocol} and~\ref{sec:sim}.

\begin{figure}
    \centering
\includegraphics[width=1\linewidth]{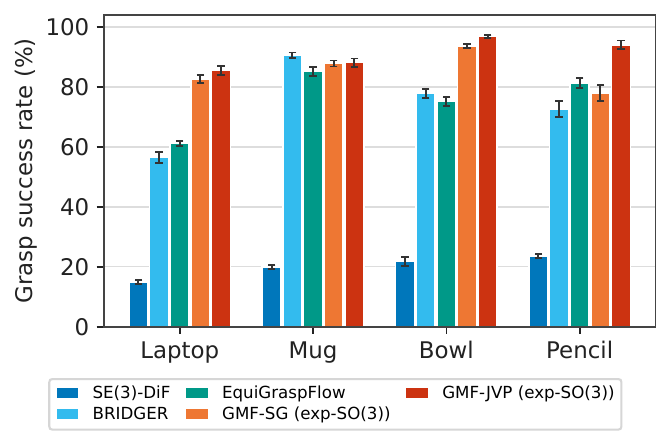}
    \caption{Per-category grasp success at NFE~5 with the exp-$\mathrm{SO}(3)$ sampler. Error bars denote the standard error of the mean over rotated test instances.}
\vspace{-1em}
\label{fig:succ_cat5}
\end{figure}

\subsection{Results}
\paragraph{Distributional fidelity.}
With a single function evaluation, GMF-SG obtains $0.418$ EMD and remains the best-performing method through NFE~5, demonstrating high distributional fidelity in the few-step regime; GMF-JVP reaches $0.466$ under the same budget (Figure~\ref{fig:emd_nfe}). Both are far below the baselines at NFE~1, where the closest competitor is at $0.694$, and EquiGraspFlow collapses to $0.734$. With additional steps, the two equivariant flow models stabilize. Both configurations therefore reach an accurate grasp distribution with substantially fewer function evaluations, and the coupling that makes GMF-JVP the stronger model for grasp success costs it roughly $0.05$ EMD throughout.

\paragraph{Grasp success.}
At $\mathrm{NFE}\,1, 2$ no method is usable: the best four-category average is $53.9$. The exp-$\mathrm{SO}(3)$ schedule is particularly ill-suited to this budget, since $\eta$ saturates at $1$ and the first update therefore commits the rotation to an endpoint predicted from a nearly untransported prior sample. We therefore focus on $\mathrm{NFE}\,5$ and above. At NFE~5, GMF-JVP is the strongest method on most categories, exceeding EquiGraspFlow by $21.7$, $24.3$, $3.0$, and $12.8$ points on Bowl, Laptop, Mug, and Pencil, while GMF-SG trails EquiGraspFlow only on Pencil, by $3.4$ points (Figure~\ref{fig:succ_cat5}). BRIDGER is marginally stronger on Mug ($90.6$ versus $88.1$), benefiting from its informative heuristic prior, but trails by $19.0$ and $29.1$ points on Bowl and Laptop. At NFE~10, GMF-JVP averages $96.3\%$ against EquiGraspFlow's $91.2\%$, and at NFE~20 the fully converged baseline takes the lead ($97.7\%$ versus $95.6\%$): the advantage of average-velocity modeling is concentrated in the intended few-step regime. Per-category results at every budget are in Table~\ref{tab:success}, and success under Euler integration is in Table~\ref{tab:posttrain} in the appendix.

\begin{figure}[t]
    \centering
\includegraphics[width=1\linewidth]{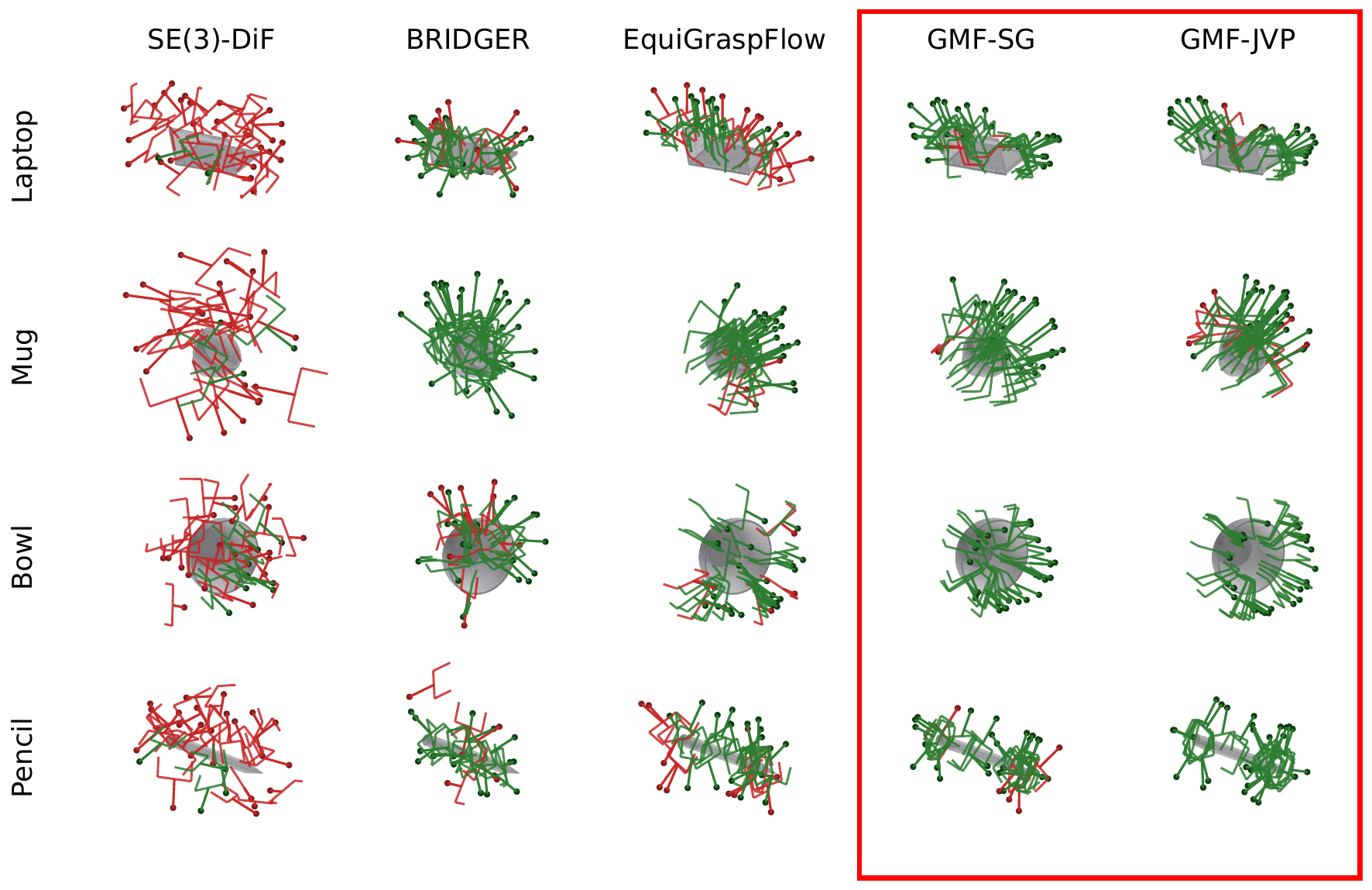}
    \caption{Grasps generated at NFE~5 for one test instance per category (rows), colored by simulated outcome (green: success, red: failure). Every column is generated from the same rotated object with the same protocol; the two GraspMeanFlow configurations (red box) use the exp-$\mathrm{SO}(3)$ sampler and the baselines their own samplers.}
    \vspace{-1em}
\label{fig:qual_rotcons}
\end{figure}

\paragraph{Ablation.}
Supervising the differential MeanFlow identity requires the spatial-frame formulation of Sec.~\ref{sec:se3-meanflow}: pairing the body-frame trajectory tangent with the left Jacobian, as an earlier implementation of ours did, diverges on the rotationally symmetric Bowl ($1.11$ one-step EMD, against $0.53$ for the boundary-plus-semigroup objective); with the frame-consistent tangent and the inverse-Jacobian target the same identity trains stably and yields GMF-JVP. Either consistency term must be paired with a data-anchored loss, since self-consistency alone admits collapsed minimizers (Appendix~\ref{sec:semigroup}). Complete ablations are in Appendix~\ref{sec:ablation}.

\paragraph{Inference cost.}
Latency should be compared at matched quality rather than at matched step
counts, since the baselines use smaller networks and are cheaper per
evaluation. GMF-SG reaches an EMD of $0.418$ in $31.5$\,ms with a single
evaluation, where EquiGraspFlow needs $44.7$\,ms and neither diffusion
baseline reaches that EMD at any budget; GMF-JVP reaches $96.3\%$ success
in $56.3$\,ms, where SE(3)-DiffusionFields needs $1338.7$\,ms for
$96.1\%$. Full latency curves are in Appendix~\ref{sec:latency}.

\vspace{-0.1em}

\paragraph{Training budget and model size.}
GraspMeanFlow adds no capacity over its flow-matching counterpart: the network has $633$k parameters, the EquiGraspFlow trunk with only a two-time embedding added, and both configurations are trained for 120k steps (${\sim}7$ hours on one A100). GMF-JVP additionally solves one assignment problem per object per step, which is negligible next to the forward and backward passes. Few-step performance therefore comes from the objective, not from model scale or training budget.

\section{Discussion and Limitations}

Our results locate the advantage of average-velocity modeling squarely in the
few-step regime. At NFE~5, GraspMeanFlow improves simulated success over
EquiGraspFlow by up to $24.3$ points, whereas at NFE~20 the iterative baseline
has integrated its instantaneous field accurately enough that little remains to
be gained, and the two are comparable on Bowl and Mug. This is the expected
trade-off: the average-velocity field absorbs the integration error that a
solver would otherwise accumulate, which matters most when the step budget is
small. The network, dataset, and pre-training budget are shared with
EquiGraspFlow, so the gain does not come from capacity. It comes from the objective together with the sampler the objective makes
available: an interval-averaged field admits an endpoint prediction at every
step, which an instantaneous field does not. Figure~\ref{fig:qual_rotcons} shows the generated grasps for one instance per category at NFE~5.

Two limitations are worth stating. First, Pencil is the category on which our gains are least consistent: GMF-SG trails EquiGraspFlow at every step budget, and GMF-JVP, while ahead at NFE~5 and~10, falls behind at NFE~20. We
attribute this to symmetry: a pencil is nearly invariant under rotations about
its long axis, so grasp orientations that are physically equivalent differ by a
rotation that our $\log$-displacement target treats as a genuine distance. A
symmetry-aware target---taking the logarithm to the nearest representative in
the object's symmetry group---would remove this mismatch, and we leave it to
future work. Second, how to schedule the endpoint rate as a function of the step budget
remains an open question. We also report a negative result: post-training on success-filtered self-generated samples helps the Euler sampler but degrades the endpoint-style sampler (Table~\ref{tab:posttrain}).

\section{Conclusion}
We introduced GraspMeanFlow, an $\mathrm{SE}(3)$-equivariant few-step generative
model for 6-DoF grasp poses, in which the average velocity is defined through
the time-ordered exponential and therefore reproduces exactly the rigid-body
displacement accumulated over an interval. We proved that a
point-cloud-conditioned distribution transported by an equivariant
average-velocity flow map remains invariant, so the equivariance guarantee
survives the reduction in sampling steps, at no architectural cost beyond a
two-scalar time embedding. Two instantiations sharing this network prove
complementary on ACRONYM, one on distributional fidelity and the other on
simulated success. A symmetry-aware training target and a step-budget-aware
sampler schedule remain open.

\section*{Acknowledgments}
We gratefully acknowledge support from the National Science Foundation
(DMS-2533878, DMS-2053746, DMS-2134209, ECCS-2328241, CBET-2347401,
and OAC-2311848) and the U.S.\ Department of Energy (DOE) Office of
Science, Advanced Scientific Computing Research program, under the
``Uncertainty Quantification for Multifidelity Operator Learning
(MOLUcQ)'' project (Project No.~81739), DE-SC0023161, the SciDAC LEADS
Institute, and DOE Fusion Energy Sciences under Grant No.~DE-SC0024583.

\bibliography{refs}
\appendix
\onecolumn

\makeatletter
\def\addcontentsline#1#2#3{%
  \addtocontents{#1}{\protect\contentsline{#2}{#3}{\thepage}{}}}
\makeatother
\etocsettocstyle{\begin{center}\section*{Appendix Contents}\end{center}}{}
\etocsetnexttocdepth{section}
\tableofcontents

\newpage

\section{Background: SO(3) Space}
\label{sec:so3-background}
\subsection{Basic Concepts in SO(3) Space}
    
The special orthogonal group in three dimensions is
$$
\mathrm{SO}(3):=\{ R \in \mathbb{R}^{3\times3} : R R^\top = R^\top R = I_3,\ \det(R)=1\}.
$$
The constraint $R R^\top = I_3$ consists of smooth polynomial equations, so
$\mathrm{SO}(3)$ is a smooth embedded submanifold of $\mathbb{R}^9$. Equipped with
matrix multiplication $(S_1,S_2)\mapsto S_1S_2$ it forms a Lie group, with
identity $I_3$ and inverse $R^{-1}=R^\top$.

\paragraph{Tangent space.}
Let $R\in\mathrm{SO}(3)$ be fixed and let $R(t)$ be a smooth curve with $R(0)=R$.
Differentiating $R(t)R(t)^\top=I_3$ at $t=0$ yields
$$
\dot R(0) R^\top + R \dot R(0)^\top = 0 ,
$$
so $\dot R(0)R^\top$ is skew-symmetric. The tangent space at $R$ is therefore
$$
T_R\mathrm{SO}(3)=\{\,[\omega]\,R : \omega\in\mathbb{R}^3\,\},
$$
where the Lie algebra
$$
\mathfrak{so}(3):=\{\Omega\in\mathbb{R}^{3\times3}:\Omega^\top=-\Omega\}
$$
is identified with $\mathbb{R}^3$ through the hat map $[\cdot]$ and its inverse
$(\cdot)^\vee$ defined in Section~\ref{sec:se3-basics}. This gives a global linear
parameterization of tangent vectors, in which $\omega$ is the angular velocity
measured in the world frame.

\paragraph{Lie algebra and Lie bracket.}
The Lie bracket on $\mathfrak{so}(3)$ is the matrix commutator
$$
\big[[\omega_1],[\omega_2]\big]=[\omega_1][\omega_2]-[\omega_2][\omega_1],
$$
which preserves skew-symmetry, so $\mathfrak{so}(3)$ is closed under it. In
vector coordinates it is the cross product,
$\big[[\omega_1],[\omega_2]\big]=[\,\omega_1\times\omega_2\,]$.

\subsection{Curves and Flows in SO(3)}

In Euclidean space a curve starting from $x_0$ is defined by the ODE $\dot X_t=v(t,X_t)$,
giving $X_1=x_0+\int_0^1 v(t,x_t)\,dt$. We now extend this construction to the
Lie group $\mathrm{SO}(3)$.

\paragraph{ODE-defined curves on $\mathrm{SO}(3)$.}
Let $R_t:[0,1]\to\mathrm{SO}(3)$ be a time-dependent curve. Its evolution is
given by
$$
\begin{cases}
R_0=r_0,\\
\dot R_t=[\omega_t]\,R_t, \quad \omega_t\in\mathbb{R}^3 .
\end{cases}
$$
This guarantees that $R_t$ remains in $\mathrm{SO}(3)$ for all $t$, since
$$
\frac{d}{dt}\big(R_tR_t^\top\big)
=[\omega_t]R_tR_t^\top+R_tR_t^\top[\omega_t]^\top
=[\omega_t]+[\omega_t]^\top=0 .
$$

\paragraph{Exponential map and geodesics.}
When the angular velocity is constant, $\omega_t=\omega$, the ODE admits the
closed-form solution
$$
R_t=\exp\big(t[\omega]\big)\,R_0 ,
$$
and under the canonical bi-invariant Riemannian metric on $\mathrm{SO}(3)$,
curves of this form are geodesics. Given two points
$R_0,R_1\in\mathrm{SO}(3)$, requiring $R_1=\exp([\omega])R_0$ gives
$\exp([\omega])=R_1R_0^\top$, so the geodesic connecting them is
$$
R_t=\exp\big(t\log(R_1R_0^\top)\big)\,R_0 ,\qquad t\in[0,1],
$$
where $\log$ denotes the matrix logarithm mapping $\mathrm{SO}(3)$ to
$\mathfrak{so}(3)$. This curve minimizes path length among all smooth curves
on $\mathrm{SO}(3)$ connecting $R_0$ and $R_1$.

\paragraph{Inner product and geodesic distance in $\mathrm{SO}(3)$.}
Under the canonical bi-invariant Riemannian metric we identify
$T_R\mathrm{SO}(3)=\{[\omega]R\}$ and define the inner product
$$
\big\langle [\omega_1]R,\,[\omega_2]R\big\rangle_R
=\tfrac12\,\mathrm{tr}\big([\omega_1]^\top[\omega_2]\big)
=\omega_1^\top\omega_2 ,
$$
which is invariant under both left and right multiplication. The induced
geodesic distance between $R_0,R_1\in\mathrm{SO}(3)$ is
$$
\mathrm{dist}(R_0,R_1)
=\frac{1}{\sqrt{2}}\,\big\|\log(R_1R_0^\top)\big\|_F
=\big\|\log(R_1R_0^\top)^\vee\big\|_2 ,
$$
where $\|\cdot\|_F$ denotes the Frobenius norm; this is the rotation angle
between the two frames.

\paragraph{Integration in SO(3).}
In Euclidean space, $x_1=x_0+\int_0^1 v(t,x_t)\,dt$. Due to the non-commutative
group structure, the endpoint in $\mathrm{SO}(3)$ cannot be written as a simple
integral. Instead, the solution at $t=1$ admits the group-valued representation
$$
R(1)=\mathcal{T}\!\exp\Big(\int_0^1 [\omega_t]\,dt\Big)\,R(0),
$$
where $\mathcal{T}$ denotes the time-ordering operator
$$
\mathcal{T}\!\exp\Big(\int_0^1 A(t)\,dt\Big)
=I_3+\sum_{k=1}^{\infty}
\int_{0\le t_k\le\cdots\le t_1\le 1}
A(t_1)\cdots A(t_k)\,dt_1\cdots dt_k ,
$$
which orders the matrix products chronologically, with the latest time on the
left. More generally, for $s\le t$,
$$
\mathcal{T}\!\exp\Big(\int_s^t [\omega_\tau]\,d\tau\Big)=R_tR_s^\top ,
$$
the quantity denoted $D(s,t)$ in Section~\ref{sec:avg-velocity}.

\section{Background: MeanFlow in Euclidean Space}
\label{sec:meanflow-bg-details}

This appendix restates the MeanFlow framework of
\citet{geng2025meanflow} in Euclidean space, using the time convention of the
main text ($t=0$ data, $t=1$ prior, $s<t$). It is included for completeness; the
$\mathrm{SE}(3)$ formulation of Section~\ref{sec:se3-meanflow} specializes to
this construction on the flat translation branch.

\subsection{Average Velocity and the MeanFlow Identity}

Flow matching learns an instantaneous velocity field. Given data
$x\sim p_{\mathrm{data}}$ and prior $\epsilon\sim p_{\mathrm{prior}}$, the linear
path $z_t=(1-t)x+t\epsilon$ has conditional velocity $v_t=\epsilon-x$, and a
network $v_\theta(z_t,t)$ is trained to regress it; sampling integrates
$\dot z_\tau=v_\theta(z_\tau,\tau)$ from $t=1$ to $t=0$.

MeanFlow instead models the \emph{average velocity} over an interval $[s,t]$,
\begin{equation}
    u(s,t,z_t)\;\triangleq\;\frac{1}{t-s}\int_s^t v(z_\tau,\tau)\,d\tau,
    \label{eq:mf-avg-def}
\end{equation}
a field jointly conditioned on the pair $(s,t)$. The displacement form
$(t-s)\,u(s,t,z_t)=\int_s^t v\,d\tau$ makes two properties immediate: the
diagonal limit
\begin{equation}
    \lim_{s\to t}u(s,t,z_t)=v(z_t,t),
    \label{eq:mf-boundary}
\end{equation}
and, from additivity of the integral, the interval-consistency relation
\begin{equation}
    (t-s)\,u(s,t,z_t)=(m-s)\,u(z_m,s,m)+(t-m)\,u(z_t,m,t),
    \qquad s\le m\le t.
    \label{eq:mf-consistency}
\end{equation}
Differentiating the displacement form with respect to $t$ (with $s$ held fixed) and using the
fundamental theorem of calculus yields the \emph{MeanFlow identity}, which
expresses the average velocity through the instantaneous velocity and a
trajectory derivative:
\begin{equation}
    u(s,t,z_t)\;=\;v(z_t,t)\;-\;(t-s)\,\frac{d}{dt}u(s,t,z_t).
    \label{eq:mf-identity-app}
\end{equation}

\subsection{Training Objective}

The total derivative in \eqref{eq:mf-identity-app} expands, via the chain rule
and $\dot z_t=v(z_t,t)$, $\dot s=0$, $\dot t=1$, into a Jacobian--vector product
(JVP):
\begin{equation}
    \frac{d}{dt}u(s,t,z_t)
    \;=\;v(z_t,t)\,\partial_z u\;+\;\partial_t u,
    \label{eq:mf-jvp-app}
\end{equation}
computable in a single forward-mode pass (e.g.\ \texttt{torch.func.jvp}).
Parameterizing $u_\theta$ and using the conditional velocity $v_t=\epsilon-x$ as
the only ground-truth signal, the objective regresses $u_\theta$ against a
stop-gradient target,
\begin{equation}
    \mathcal{L}(\theta)=\mathbb{E}\big\|u_\theta(s,t,z_t)-\mathrm{sg}(u_{\mathrm{tgt}})\big\|_2^2,
    \qquad
    u_{\mathrm{tgt}}=v_t-(t-s)\big(v_t\,\partial_z u_\theta+\partial_t u_\theta\big).
    \label{eq:mf-loss-app}
\end{equation}
The stop-gradient avoids double backpropagation through the JVP. When $s=t$ the
correction term vanishes and \eqref{eq:mf-loss-app} reduces exactly to flow
matching; in practice a fixed fraction of each batch is drawn with $s=t$ to
anchor the diagonal boundary \eqref{eq:mf-boundary}.

\subsection{Sampling}

Because $u_\theta$ already represents the interval-averaged field, sampling
replaces the integral by a direct average-velocity step,
\begin{equation}
    z_s=z_t-(t-s)\,u_\theta(s,t,z_t),
    \label{eq:mf-sample-app}
\end{equation}
and one-step generation is $z_0=z_1-u_\theta(z_1,0,1)$ with
$z_1=\epsilon\sim p_{\mathrm{prior}}$; few-step sampling applies
\eqref{eq:mf-sample-app} on a schedule $1=t_K>\cdots>t_0=0$.

\subsection{Conditioning on the Time Pair}

The field $u_\theta(z,s,t)$ is conditioned on two time variables. Following
\citet{geng2025meanflow}, each of $t$ and the interval $t-s$ is passed through a
positional embedding and a small MLP, and the two are combined into the network
conditioning; the JVP in \eqref{eq:mf-jvp-app} is always taken with respect to
$u_\theta(\cdot,s,t)$ regardless of the embedding used. In the Euclidean
implementation this conditioning is injected through adaptive layer
normalization (adaLN). We depart from this choice in the equivariant setting, where an additive normalization shift would break
$\mathrm{SO}(3)$-equivariance.

\subsection{Classifier-Free Guidance at a Single Evaluation}

MeanFlow folds classifier-free guidance into the target field so that guided
sampling retains its nominal number of function evaluations, rather than
combining two evaluations per step. For a condition $c$ and guidance weight
$\beta$, the guided instantaneous field is the affine combination
\begin{equation}
    v^{\mathrm{cfg}}(z_t,t\mid c)=\beta\,v(z_t,t\mid c)+(1-\beta)\,v(z_t,t),
    \label{eq:mf-cfg-v}
\end{equation}
and the network directly models the corresponding \emph{guided average
velocity} $u^{\mathrm{cfg}}_\theta$. Using the diagonal identity
$v^{\mathrm{cfg}}(z_t,t)=u^{\mathrm{cfg}}(z_t,t,t)$, the training target takes the
same form as \eqref{eq:mf-loss-app} with $v_t$ replaced by the guided data
velocity
\begin{equation}
    \tilde v_t=\beta\,v_t+(1-\beta)\,u^{\mathrm{cfg}}_\theta(z_t,t,t),
    \label{eq:mf-cfg-target}
\end{equation}
and the condition $c$ is dropped with a fixed probability during training.
Because $u^{\mathrm{cfg}}_\theta$ already models the guided field, sampling uses
it directly in \eqref{eq:mf-sample-app}: no combination is formed at inference,
so guidance incurs no additional function evaluations. Setting $\beta=1$
recovers the unguided objective \eqref{eq:mf-loss-app}.

\section{Detailed Formulation of the SE(3) MeanFlow Method}
\subsection{Model and Loss Function}
\label{sec:se3-mf-details}

We first collect the differential-geometric facts used below.

\begin{remark}\label{rmk:calculus-facts}
\begin{itemize}
\item \textbf{The hat map does not contribute to the derivative.}
Let $\omega:[0,1]\to\mathbb{R}^3$ be a smooth curve. Since $[\cdot]$ is linear
and independent of $t$, the chain rule gives
$$
    \frac{d}{dt}[\omega(t)]=\Big[\frac{d\omega}{dt}\Big]\in\mathbb{R}^{3\times 3}.
$$

\item \textbf{Fr\'echet derivative of the matrix exponential.}
Let $M(t):[0,1]\to\mathbb{R}^{3\times 3}$ be a smooth matrix curve and
$F(t)=\exp(M(t))$. By the chain rule on Banach spaces,
$\dot F(t)=d(\exp)_{M(t)}[\dot M(t)]$, where $d(\exp)_M$ admits the integral
(Wilcox) representation
$$
    d(\exp)_M(H)=\int_0^1 e^{(1-\alpha)M}\,H\,e^{\alpha M}\,d\alpha .
$$

\item \textbf{Conjugation identity.}
For any $Q\in\mathrm{SO}(3)$ and $\omega\in\mathbb{R}^3$,
$Q\,[\omega]\,Q^\top=[Q\omega]$.

\item \textbf{Derivative with respect to a point on a manifold.}
For a smooth curve $t\mapsto R(t)$ on $\mathrm{SO}(3)$ and a smooth
$f:\mathrm{SO}(3)\to\mathbb{R}$,
$$
    \tfrac{d}{dt}f(R(t))=df_{R(t)}[\dot R(t)]
    =\big\langle \nabla^{\mathcal M}_R f(R(t)),\,\dot R(t)\big\rangle ,
$$
with $\nabla^{\mathcal M}_R f$ the Riemannian gradient. Under the canonical
inner product $\langle A,B\rangle=\mathrm{Tr}(A^\top B)$ on
$\mathrm{SO}(3)\subset\mathbb{R}^{3\times3}$, the Riemannian gradient coincides
with the Euclidean gradient of any smooth extension.
\end{itemize}
\end{remark}

Recall the definition of the average velocity from \eqref{eq:se3-mean-velocity},
which we restate here in the notation of this appendix:
\begin{align}
\exp\big([\omega^{s\to t}]\big)
:=\mathcal{T}\!\exp\Big(\int_s^t [\omega(\tau,R_\tau,x_\tau)]\,d\tau\Big),
\qquad
\omega^{s\to t}=(t-s)\,\omega^{\mathrm{avg}}(s,t,R_t,x_t).
\label{eq:avg_omega}
\end{align}

\begin{remark}
$\mathrm{SO}(3)$ is not commutative, so in general
$\mathcal{T}\!\exp\big(\int_s^t[\omega_\tau]d\tau\big)\neq
\exp\big(\int_s^t[\omega_\tau]d\tau\big)$ unless $\omega_\tau$ is constant.
The average velocity therefore cannot be defined by integrating $\omega_\tau$
directly.
\end{remark}

Since each $[\omega(\tau,R_\tau,x_\tau)]\in\mathfrak{so}(3)$, we have
$\omega^{\mathrm{avg}}(s,t,R_t,x_t)\in\mathbb{R}^3$. In the extreme case
$s=0$, $t=1$, the average velocity transports the data sample to the prior sample and back:

$$
\begin{cases}
R_1=\exp\big([\omega^{0\to1}]\big)R_0 &\text{(forward)},\\[2pt]
R_0=\exp\big(-[\omega^{0\to1}]\big)R_1 &\text{(backward)}.
\end{cases}
$$

\begin{proposition}[Derivative of the averaged angular velocity]
\label{pro:Omega_identity}
Treating $s$ and $t$ as independent and writing
$\omega^{s\to t}=(t-s)\,\omega^{\mathrm{avg}}(s,t,R_t,x_t)$, differentiating both
sides of \eqref{eq:avg_omega} with respect to $t$ yields
\begin{align}
\begin{cases}
\text{L.H.S.}&=\Big[\,J\big(\omega^{s\to t}\big)\,
   \tfrac{d}{dt}\omega^{s\to t}\Big]\;R_tR_s^\top,\\[4pt]
\text{R.H.S.}&=\big[\omega(t,R_t,x_t)\big]\;R_tR_s^\top,
\end{cases}
\label{eq:omega_avg_deriv}
\end{align}
and hence $J(\omega^{s\to t})\,\tfrac{d}{dt}\omega^{s\to t}=\omega(t,R_t,x_t)$,
which is \eqref{eq:omega_identity}.
\end{proposition}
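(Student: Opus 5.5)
I will differentiate both sides of \eqref{eq:avg_omega} in $t$ with $s$ held fixed and compute each side separately. The target is to write both derivatives as a skew matrix right-multiplied by $D(s,t)=R_tR_s^\top$, and then cancel that invertible factor.

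\textbf{Right-hand side.} By the paragraph on integration in $\mathrm{SO}(3)$ in Appendix~\ref{sec:so3-background}, the time-ordered exponential equals $R_tR_s^\top$. Since $R_s$ does not depend on $t$, its $t$-derivative is $\dot R_tR_s^\top$. Substituting the ODE $\dot R_t=[\omega(t,R_t,x_t)]R_t$ gives $[\omega(t,R_t,x_t)]\,R_tR_s^\top$. Equivalently, in the series definition, differentiating the upper limit peels off the latest factor $A(t)$ on the left.

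\textbf{Left-hand side.} Set $M(t)=[\omega^{s\to t}]$. By the first item of Remark~\ref{rmk:calculus-facts}, $\dot M=[\tfrac{d}{dt}\omega^{s\to t}]$, where $\tfrac{d}{dt}$ is the total derivative along the trajectory, since $\omega^{s\to t}$ depends on $t$ both explicitly and through $(R_t,x_t)$. The Wilcox formula then gives
\[
\tfrac{d}{dt}e^{M}=\int_0^1 e^{(1-\alpha)M}\dot M e^{\alpha M}\,d\alpha=\Big(\int_0^1 e^{(1-\alpha)M}\dot M e^{-(1-\alpha)M}\,d\alpha\Big)e^{M}.
\]
Each $e^{\beta M}$ lies in $\mathrm{SO}(3)$. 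The conjugation identity therefore turns the integrand into $[e^{(1-\alpha)[\omega^{s\to t}]}\tfrac{d}{dt}\omega^{s\to t}]$. Pulling the linear hat map out of the integral and substituting $\beta=1-\alpha$ yields
\[
\Big[\Big(\int_0^1 e^{\beta[\omega^{s\to t}]}d\beta\Big)\tfrac{d}{dt}\omega^{s\to t}\Big]e^{M}=\big[J(\omega^{s\to t})\,\tfrac{d}{dt}\omega^{s\to t}\big]e^{M},
\]
where $J$ is the left Jacobian of \eqref{eq:left_jacobian}. Finally, $e^{M}=R_tR_s^\top$ by definition, which matches \eqref{eq:omega_avg_deriv}.

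\textbf{Conclusion.} Equating the two sides and right-multiplying by $(R_tR_s^\top)^\top$ gives $[J\tfrac{d}{dt}\omega^{s\to t}]=[\omega(t,R_t,x_t)]$. Applying $(\cdot)^\vee$, which is injective, gives \eqref{eq:omega_identity}.

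\textbf{Main obstacle.} The delicate step is the left-hand side. The integrand must be rearranged so that the exponential factor sits on the right, giving a left rather than a right Jacobian, consistent with the spatial-frame convention. This step also relies on $\exp(M(t))$ being differentiable, which the Wilcox formula guarantees as long as $\omega^{s\to t}$ is differentiable in $t$. I would assume this, and note that it is smooth away from the cut locus $\|\omega^{s\to t}\|=\pi$, where the logarithm defining $\omega^{s\to t}$ fails to be smooth.
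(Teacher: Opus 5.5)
Your proposal is correct and follows essentially the same route as the paper. On the right-hand side you peel the generator $[\omega(t,R_t,x_t)]$ off on the left; you do this by differentiating $R_tR_s^\top$ with the ODE, while the paper uses the derivative rule for the time-ordered exponential, which is the same computation. On the left-hand side you use the Wilcox formula with $e^{M}$ factored to the right, then the conjugation identity and linearity of the hat map to extract $J(\omega^{s\to t})$, cancel $R_tR_s^\top$, and apply $(\cdot)^\vee$, exactly as the paper does; the only cosmetic difference is that you factor out $e^{M}$ before substituting $\alpha\mapsto 1-\alpha$ rather than after.
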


\begin{proof}
Differentiate both sides of \eqref{eq:avg_omega} with respect to $t$.

For the right-hand side, the derivative rule for the time-ordered exponential
under the spatial convention $\dot R_\tau=[\omega_\tau]R_\tau$ places the
generator on the \emph{left}:
\begin{align}
\frac{d}{dt}\,\mathcal{T}\!\exp\Big(\int_s^t[\omega_\tau]\,d\tau\Big)
&=\big[\omega(t,R_t,x_t)\big]\;
  \mathcal{T}\!\exp\Big(\int_s^t[\omega_\tau]\,d\tau\Big)\nonumber\\
&=\big[\omega(t,R_t,x_t)\big]\,\exp\big([\omega^{s\to t}]\big),
\label{eq:rhs_omega_avg}
\end{align}
where $\exp([\omega^{s\to t}])=R_tR_s^\top$ by definition.

For the left-hand side, the Wilcox formula, with the exponential factored out on
the left, gives
\begin{align}
\frac{d}{dt}\exp\big([\omega^{s\to t}]\big)
&=\int_0^1 e^{(1-\alpha)[\omega^{s\to t}]}
  \Big[\tfrac{d}{dt}\omega^{s\to t}\Big]
  e^{\alpha[\omega^{s\to t}]}\,d\alpha\nonumber\\
&=\bigg(\int_0^1 e^{\alpha[\omega^{s\to t}]}
  \Big[\tfrac{d}{dt}\omega^{s\to t}\Big]
  e^{-\alpha[\omega^{s\to t}]}\,d\alpha\bigg)\,
  \exp\big([\omega^{s\to t}]\big)\nonumber\\
&=\bigg(\int_0^1
  \Big[e^{\alpha[\omega^{s\to t}]}\,\tfrac{d}{dt}\omega^{s\to t}\Big]
  \,d\alpha\bigg)\,\exp\big([\omega^{s\to t}]\big)\nonumber\\
&=\Big[\Big(\int_0^1 e^{\alpha[\omega^{s\to t}]}d\alpha\Big)
  \tfrac{d}{dt}\omega^{s\to t}\Big]\;
  \exp\big([\omega^{s\to t}]\big).
\label{eq:lhs_omega_avg}
\end{align}
The second line substitutes $\alpha\mapsto1-\alpha$ and factors
$\exp([\omega^{s\to t}])$ to the right; the third uses the conjugation
identity $Q[a]Q^\top=[Qa]$ with $Q=e^{\alpha[\omega^{s\to t}]}\in\mathrm{SO}(3)$;
the last uses linearity of $[\cdot]$ in $\alpha$. Recognizing
\begin{align}
J\big(\omega^{s\to t}\big)=\int_0^1 e^{\alpha[\omega^{s\to t}]}\,d\alpha
\nonumber
\end{align}
as the left Jacobian \eqref{eq:left_jacobian} and equating
\eqref{eq:rhs_omega_avg} with \eqref{eq:lhs_omega_avg} gives
\eqref{eq:omega_avg_deriv}; canceling the common right factor
$\exp([\omega^{s\to t}])$ and applying $(\cdot)^\vee$ completes the
proof.
\end{proof}

\begin{remark}[Closed form of the left Jacobian]
\label{rmk:jacobian-closed}
Evaluating the integral gives, for $\phi\in\mathbb{R}^3$ with
$\|\phi\|\neq0$,
$$
J(\phi)=I+\frac{1-\cos\|\phi\|}{\|\phi\|^{2}}\,[\phi]
       +\frac{\|\phi\|-\sin\|\phi\|}{\|\phi\|^{3}}\,[\phi]^{2},
$$
with $J(\phi)\to I$ as $\|\phi\|\to0$; in practice we use the Taylor expansion
of this expression for numerical stability in that regime.
\end{remark}

\begin{remark}[Constant angular velocity]
If the angular velocity is constant, $\omega_\tau=\omega$, then
$\omega^{s\to t}=(t-s)\omega$ and $\tfrac{d}{dt}\omega^{s\to t}=\omega$. Since
$[\omega]\,\omega=\omega\times\omega=0$, every term of the series
\eqref{eq:left_jacobian} beyond $k=0$ annihilates $\omega$, so
$J(\omega^{s\to t})\,\omega=\omega$: the Jacobian does not distort the velocity
when the rotation axis is fixed.
\end{remark}

\begin{proposition}\label{pro:dA/dt}
The total derivative of $\omega^{\mathrm{avg}}$ along the trajectory is
\begin{align}
\frac{d}{dt}\omega^{\mathrm{avg}}(s,t,R_t,x_t)
&=\partial_t \omega^{\mathrm{avg}}
+\big\langle\nabla_R \omega^{\mathrm{avg}},\dot R_t\big\rangle
+\big\langle\nabla_x \omega^{\mathrm{avg}},\dot x_t\big\rangle .
\nonumber
\end{align}
\end{proposition}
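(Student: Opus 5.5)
This is a chain rule along the curve $t\mapsto(s,t,R_t,x_t)$, with $s$ held fixed. I will view $\omega^{\mathrm{avg}}$ as a smooth map $F:(s,t,R,x)\mapsto\omega^{\mathrm{avg}}(s,t,R,x)$ on $\mathbb{R}\times\mathbb{R}\times\mathrm{SO}(3)\times\mathbb{R}^3$, taking values in $\mathbb{R}^3$. I will handle it componentwise: for each $i=1,2,3$, the scalar function $F_i$ is treated separately. This lets me apply the last item of Remark~\ref{rmk:calculus-facts}, which is stated for scalar-valued functions on the manifold. The composite is $g(t)=F(s,t,R_t,x_t)$, and its velocity is $(\dot s,\dot t,\dot R_t,\dot x_t)=(0,1,[\omega_t]R_t,v_t)$ by \eqref{eq:se3-ode}.

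The first step is to differentiate $g$ using the chain rule on the product manifold. The differential of $F_i$ at a point splits additively over the factors, so
\begin{equation*}
\tfrac{d}{dt}g_i=\partial_s F_i\,\dot s+\partial_t F_i\,\dot t+dF_i|_{R}[\dot R_t]+\langle\nabla_x F_i,\dot x_t\rangle .
\end{equation*}
The $\partial_s$ term vanishes because $\dot s=0$, and $\dot t=1$ leaves $\partial_t F_i$ with coefficient one. The $x$-term is the ordinary Euclidean chain rule, since the $x$-factor is $\mathbb{R}^3$.

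The second step is the rotational term. Here I invoke the fourth item of Remark~\ref{rmk:calculus-facts} to write
\begin{equation*}
dF_i|_{R_t}[\dot R_t]=\langle\nabla^{\mathcal M}_R F_i,\dot R_t\rangle ,
\end{equation*}
using the trace inner product. The tangent vector $\dot R_t=[\omega_t]R_t$ lies in $T_{R_t}\mathrm{SO}(3)$, so this pairing is well defined. The same remark says the Riemannian gradient equals the Euclidean gradient of any smooth extension of $F_i$ to $\mathbb{R}^{3\times3}$. That means the pairing is exactly what an automatic-differentiation JVP computes on the $3\times3$ matrix input. Stacking $i=1,2,3$ gives the stated formula.

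The main obstacle is a subtlety in that extension claim, and I will address it explicitly. The Euclidean gradient of an extension is generally \emph{not} tangent to $\mathrm{SO}(3)$. What actually holds is that its tangential projection equals the Riemannian gradient. When I pair either gradient with the tangent vector $\dot R_t$, the normal component drops out. So both pairings agree, and the result does not depend on which extension is chosen. This is precisely the fact that justifies computing $\tfrac{d}{dt}\omega^{\mathrm{avg}}_\theta$ with a single JVP along $([\omega_t]R_t,v_t,1,0)$, as in \eqref{eq:diff-target}. Finally, multiplying by the $t$-dependent factor $(t-s)$ gives \eqref{eq:domega-dt} for $\omega^{s\to t}$, with the additional term $\omega^{\mathrm{avg}}$ coming from the product rule and absorbed into the $\partial_t$ term.
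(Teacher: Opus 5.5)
Your proposal is correct and follows the paper's proof: both apply the chain rule along $t\mapsto(s,t,R_t,x_t)$ with $\dot s=0$ and $\dot t=1$, and both use the fact that the differential of $\omega^{\mathrm{avg}}$ on $\mathrm{SO}(3)$ agrees on $T_{R_t}\mathrm{SO}(3)$ with the Euclidean differential of any smooth extension. Your caveat is right and sharpens Remark~\ref{rmk:calculus-facts}: only the tangential projection of the Euclidean gradient equals the Riemannian gradient, and the normal part vanishes when paired with $\dot R_t=[\omega_t]R_t$. The paper's proof, which is phrased in terms of differentials rather than gradients, implicitly relies on exactly this fact.
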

\begin{proof}
By Remark~\ref{rmk:calculus-facts}, $d_R\omega^{\mathrm{avg}}[\,\cdot\,]$ agrees
with the Euclidean differential on $T_{R_t}\mathrm{SO}(3)$, and similarly for the
Euclidean component. Applying the chain rule along $t\mapsto(R_t,x_t)$ gives the
claim.
\end{proof}

Substituting the network output $\omega^{\mathrm{avg}}_\theta(s,t,R_t,x_t)$ and
writing $\omega^{s\to t}_\theta=(t-s)\,\omega^{\mathrm{avg}}_\theta$, the
differential objective of \eqref{eq:diff-loss} reads
\begin{align}
\mathcal{L}^{\mathrm{diff}}
:=\mathbb{E}_{s<t,\;((R_0,x_0),(R_1,x_1))\sim q_{0,1}}
\Big[\big\|J\big(\omega^{s\to t}_\theta\big)\,
\tfrac{d}{dt}\omega^{s\to t}_\theta-\omega(t,R_t,x_t)\big\|_2^2\Big],
\label{eq:loss_so3}
\end{align}
where $q_{0,1}$ is a coupling between the data and prior distributions. The
interpolation and its velocity are
$$
\omega(t)=\log(R_1R_0^\top)^\vee,\quad
R_t=\exp\big(t[\omega(t)]\big)R_0,\quad
\dot R_t=[\omega(t)]\,R_t .
$$

\begin{proposition}\label{pro:loss_zero}
If $\mathcal{L}^{\mathrm{diff}}=0$, i.e.,
$J(\omega^{s\to t}_\theta)\,\tfrac{d}{dt}\omega^{s\to t}_\theta
=\omega(t,R_t,x_t)$ for all $s<t\in[0,1]$, then
$\exp\big([\omega^{s\to t}_\theta]\big)=R_tR_s^\top$.
\end{proposition}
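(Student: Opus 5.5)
The plan is to compare the learned displacement $G(t):=\exp([\omega^{s\to t}_\theta])$ with the true one $D(s,t)=R_tR_s^\top$ as functions of $t$ for fixed $s$. Both should solve the same linear matrix ODE with the same initial condition, so uniqueness for linear ODEs forces them to agree.

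\textbf{Step 1: the ODE for $G$.} I would reuse the left-hand-side computation in the proof of Proposition~\ref{pro:Omega_identity}. That calculation uses only the Wilcox formula, the conjugation identity and linearity of the hat map. It never uses the definition \eqref{eq:avg_omega}, so it applies verbatim to any smooth curve $\omega^{s\to t}_\theta$. It gives
\[
\tfrac{d}{dt}G(t)=\big[J(\omega^{s\to t}_\theta)\,\tfrac{d}{dt}\omega^{s\to t}_\theta\big]\,G(t).
\]
Substituting the hypothesis, this becomes $\dot G(t)=[\omega(t,R_t,x_t)]\,G(t)$.

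\textbf{Step 2: the ODE for $D$.} Along the interpolation we have $\dot R_t=[\omega(t)]R_t$, so $\tfrac{d}{dt}(R_tR_s^\top)=[\omega(t)]\,R_tR_s^\top$. This is the same linear ODE. Equivalently, $D(s,t)$ is the time-ordered exponential from Appendix~\ref{sec:so3-background}.

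\textbf{Step 3: the initial condition at $t=s$.} The hypothesis is stated only for $s<t$. Since $\omega^{s\to t}_\theta=(t-s)\,\omega^{\mathrm{avg}}_\theta$ with $\omega^{\mathrm{avg}}_\theta$ bounded and continuous up to the diagonal, $\omega^{s\to t}_\theta\to0$ as $t\downarrow s$. Hence $G(t)\to I=D(s,s)$. Extending $G$ continuously to $t=s$ by $G(s)=I$ turns this into an initial value problem on $[s,1]$.

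\textbf{Step 4: uniqueness.} The coefficient $t\mapsto[\omega(t)]$ is continuous; along the geodesic interpolation it is even constant. Linear ODE uniqueness (Picard--Lindel\"of, or Gr\"onwall applied to $G-D$) then gives $G(t)=D(s,t)$ for all $t\in[s,1]$. That is, $\exp([\omega^{s\to t}_\theta])=R_tR_s^\top$.

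The delicate part is Step 3 and the associated regularity. The identity holds only on the open set $s<t$, so I need $\omega^{\mathrm{avg}}_\theta$ to be continuous (indeed $C^1$ in $t$) up to $t=s$ for the limit to be the identity. I would state this as a standing smoothness assumption on the network. I would also remark that the conclusion is a statement in the group, not a claim that $\omega^{s\to t}_\theta$ equals a particular logarithm, so the branch of $\log$ near angle $\pi$ causes no issue.
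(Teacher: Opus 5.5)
Your proposal is correct and follows the paper's argument. The paper sets $\tilde R_t=\exp([\omega^{0\to t}_\theta])R_0$ and reuses the Wilcox/left-Jacobian computation from Proposition~\ref{pro:Omega_identity} to get $\dot{\tilde R}_t=[\omega(t,R_t,x_t)]\tilde R_t$ with $\tilde R_0=R_0$, then concludes by ODE uniqueness; this is your comparison of $G$ with $D(s,\cdot)$ up to right-multiplication by $R_s$. Your Step~3 makes explicit the initial condition $\omega^{s\to s}_\theta=0$ that the paper uses tacitly, and it needs only continuity of $G$ at $t=s$, not $C^1$ regularity, because $D(s,t)^{-1}G(t)$ already has zero derivative on $(s,1]$.
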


\begin{proof}
Take $s=0$ for simplicity and define the candidate reconstruction
$\tilde R_t:=\exp\big([\omega^{0\to t}_\theta]\big)R_0$. By the
Fr\'echet derivative computed in \eqref{eq:lhs_omega_avg},
$$
\frac{d}{dt}\exp\big([\omega^{0\to t}_\theta]\big)
=\Big[J\big(\omega^{0\to t}_\theta\big)\tfrac{d}{dt}\omega^{0\to t}_\theta\Big]
 \,\exp\big([\omega^{0\to t}_\theta]\big),
$$
so the zero-loss condition gives
$\dot{\tilde R}_t=[\omega(t,R_t,x_t)]\,\tilde R_t$. The interpolation satisfies
the same equation, $\dot R_t=[\omega(t,R_t,x_t)]R_t$, with the same initial
condition $\tilde R_0=R_0$. By uniqueness of solutions on $\mathrm{SO}(3)$,
$\tilde R_t=R_t$ for all $t$, i.e.,
$\exp([\omega^{0\to t}_\theta])=R_tR_0^\top$. The same argument for
general $s<t$ gives $\exp([\omega^{s\to t}_\theta])=R_tR_s^\top$.
\end{proof}

Finally, expanding $\tfrac{d}{dt}\omega^{s\to t}_\theta$ by the product rule and
Proposition~\ref{pro:dA/dt},
\begin{align}
\frac{d}{dt}\omega^{s\to t}_\theta
&=\omega^{\mathrm{avg}}_\theta
+(t-s)\Big(\partial_t \omega^{\mathrm{avg}}_\theta
+\big\langle\nabla_R \omega^{\mathrm{avg}}_\theta,\dot R_t\big\rangle
+\big\langle\nabla_x \omega^{\mathrm{avg}}_\theta,\dot x_t\big\rangle\Big),
\label{eq:total_dOmega_dt}
\end{align}
which is the quantity computed as a single Jacobian--vector product in
\eqref{eq:diff-loss}.

\begin{algorithm}[h]
\caption{Differential SE(3) MeanFlow training step (not used in our final model)}
\label{alg:meanflow_loss}
\begin{algorithmic}[1]
\Require dataset $\mathcal{D}$, encoder $f_{\mathrm{enc}}$, two-time field
  $f_\theta$, dropout $p_{\mathrm{uncond}}$, diagonal probability $p_{\mathrm{diag}}$
\While{not converged}
  \State Sample $(\mathcal{P}, R_0, x_0) \sim \mathcal{D}$;\;\;
    $z \gets f_{\mathrm{enc}}(\mathcal{P})$;\;\;
    $z \gets 0$ w.p.\ $p_{\mathrm{uncond}}$
  \State Sample $R_1 \sim \mathrm{Unif}(\mathrm{SO}(3))$,\;
    $x_1 \sim \mathcal{N}(0, I)$,\;
    $t \sim \mathrm{Unif}(0,1]$,\;
    $s \sim \mathrm{Unif}(0,t)$
    \Comment{$s \gets t$ w.p.\ $p_{\mathrm{diag}}$}
  \State $\omega_t \gets \log(R_1 R_0^\top)^{\vee}$;\;\;
    $v_t \gets x_1 - x_0$
  \State $R_t \gets \exp\!\big(t[\omega_t]\big) R_0$;\;\;
    $x_t \gets (1{-}t)x_0 + t x_1$
  \State $\dot s \gets 0$;\;\; $\dot t \gets 1$;\;\;
    $\dot R_t \gets [\omega_t] R_t$;\;\; $\dot x_t \gets v_t$
    \Comment{trajectory tangent}
  \State $\big(\omega^{\mathrm{avg}}_\theta, v^{\mathrm{avg}}_\theta\big),\;
    \big(\dot\omega^{\mathrm{avg}}_\theta, \dot v^{\mathrm{avg}}_\theta\big)
    \gets \mathrm{jvp}\big(f_\theta;\;(s,t,R_t,x_t),\;
    (\dot R_t,\dot x_t,\dot s,\dot t)\big)$
    \Comment{one forward-mode pass, both branches}
  \State $\omega^{s\to t}_\theta \gets (t-s)\,\omega^{\mathrm{avg}}_\theta$
  \State $\omega^{\mathrm{avg}}_{\mathrm{tgt}} \gets
    J\big(\omega^{s\to t}_\theta\big)^{-1}\omega_t
    - (t-s)\,\dot\omega^{\mathrm{avg}}_\theta$
    \Comment{rotation: $J\neq I$}
  \State $v^{\mathrm{avg}}_{\mathrm{tgt}} \gets
    v_t - (t-s)\,\dot v^{\mathrm{avg}}_\theta$
    \Comment{translation: $J\equiv I$}
  \State $\mathcal{L}^{\mathrm{diff}} \gets
    \lVert \omega^{\mathrm{avg}}_\theta
    - \mathrm{sg}\big(\omega^{\mathrm{avg}}_{\mathrm{tgt}}\big)\rVert^2
    + \lVert v^{\mathrm{avg}}_\theta
    - \mathrm{sg}\big(v^{\mathrm{avg}}_{\mathrm{tgt}}\big)\rVert^2$
  \State Adam step on $\mathcal{L}^{\mathrm{diff}}$;\; EMA update
\EndWhile
\end{algorithmic}
\end{algorithm}

\begin{figure}
    \centering
    \includegraphics[width=0.5\linewidth]{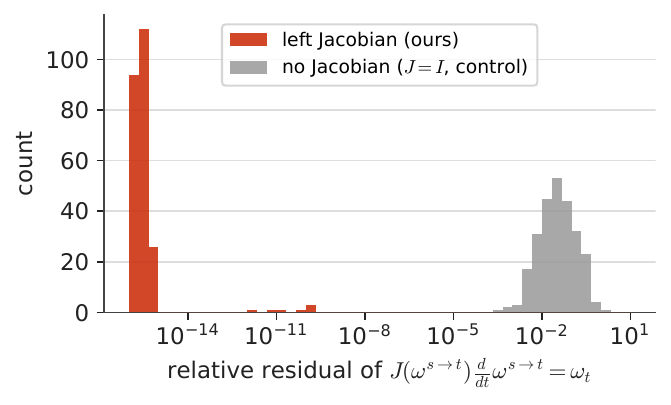}
    \caption{Relative residual of \eqref{eq:omega_identity} over $256$ random smooth trajectories, computed with the left Jacobian (red) and with the Jacobian dropped ($J{=}I$, gray). Thirteen orders of magnitude separate the exact identity from its linearization.} \label{fig:id16-check}
\end{figure}

\paragraph{Numerical validation of \eqref{eq:omega_identity}.} We evaluate the identity on closed-form, non-geodesic trajectories $R_t=\exp([\phi(t)])$ with $\phi$ random smooth curves, for which the spatial velocity $\omega_t=J(\phi)\dot\phi$ is available in closed form; $\omega^{s\to t}=\log(R_tR_s^{\top})^{\vee}$ and the total derivative \eqref{eq:domega-dt} are computed by automatic differentiation, so the state dependence along the trajectory is treated exactly. In double precision, the residual is at most $2\times10^{-10}$ and concentrates at $10^{-15}$ (Figure~\ref{fig:id16-check}); the small tail stems from the numerical guards of the matrix logarithm near $\pi$. Dropping the Jacobian instead leaves residuals of order one, so the identity is genuinely the $J$-weighted relation and not its small-rotation linearization.

\section{JVP-Free Training via SE(3) $\alpha$-Flow}
\label{sec:alpha-flow}

The differential objective \eqref{eq:diff-loss} requires the trajectory
derivative $\tfrac{d}{dt}\omega^{s\to t}_\theta$ through a Jacobian--vector
product of $f_\theta$, composed with the left Jacobian $J$. We adopt the
$\alpha$-Flow framework of \citet{cheng2025alpha}, which replaces this
differential target with a two-evaluation, JVP-free consistency target, and we
develop it on $\mathrm{SE}(3)$. Throughout, the model outputs the average
velocity pair $(\omega^{\mathrm{avg}}_\theta,v^{\mathrm{avg}}_\theta)(s,t,T_t)$,
and we write $\omega^{s\to t}_\theta=(t-s)\omega^{\mathrm{avg}}_\theta$ and
$p^{s\to t}_\theta=(t-s)v^{\mathrm{avg}}_\theta$ for the rotation
log-displacement and the translation displacement.

\subsection{Time Grid}
Let $\alpha\in[\alpha_{\min},1]$ be the consistency-step ratio, floored at a
small $\alpha_{\min}>0$. For $0\le s\le t\le1$ define the intermediate time and
step
\begin{align}
m=\alpha s+(1-\alpha)t,\qquad \delta=t-m=\alpha(t-s),\qquad s\le m\le t.
\nonumber
\end{align}
Write $\omega_t$ and $v_t$ for the data-side instantaneous velocities of the
interpolation \eqref{eq:geodesic_R}--\eqref{eq:geodesic_x}, and define the
stepped-back intermediate state
$$
R_m=\exp\big(-\delta[\omega_t]\big)R_t,
\qquad
x_m=x_t-\delta\,v_t .
$$

\subsection{Translation Target}
\label{sec:af-trans}
The average velocity over $[s,t]$ decomposes through the velocities over $[s,m]$
and $[m,t]$. For the near segment $[m,t]$ we use the shift velocity
$$
\tilde v_t=
\begin{cases}
v_t, & \text{(data velocity; flow-matching / MeanFlow choice)},\\[2pt]
v^{\mathrm{avg}}_\theta(m,t,T_t), & \text{(model prediction; Shortcut choice)},
\end{cases}
$$
and adopt the data velocity $\tilde v_t=v_t$. For the far segment $[s,m]$ we use
the stop-gradient model prediction at the intermediate state. The $\alpha$-Flow
target is the convex combination
\begin{align}
v^{\mathrm{avg}}_{\mathrm{tgt}}
=\alpha\,\tilde v_t+(1-\alpha)\,v^{\mathrm{avg}}_\theta(s,m,T_m),
\label{eq:af-trans-target}
\end{align}
equivalently, in displacement form,
$p^{s\to t}_{\mathrm{tgt}}=\delta\,\tilde v_t+(m-s)\,v^{\mathrm{avg}}_\theta(s,m,T_m)$.
The loss is
\begin{align}
\mathcal{L}^{\alpha}_{\mathrm{trans}}
=\frac{1}{\alpha}\,\mathbb{E}\Big[\big\|p^{s\to t}_\theta
-\mathrm{sg}\big(p^{s\to t}_{\mathrm{tgt}}\big)\big\|_2^2\Big].
\label{eq:af-trans-loss}
\end{align}

\subsection{Rotation Target}
\label{sec:af-rot}
We mirror the translation target on $\mathrm{SO}(3)$, replacing vector addition
by group composition. For $0\le a\le b\le1$ the accumulated relative rotation is
\begin{align}
D(a,b):=\mathcal{T}\!\exp\Big(\int_a^b[\omega_\tau]\,d\tau\Big)
=\exp\big([\omega^{a\to b}]\big)=R_bR_a^\top\in\mathrm{SO}(3),
\label{eq:af-Dab}
\end{align}
where the middle equality is the definition \eqref{eq:se3-mean-velocity} of
$\omega^{a\to b}$ and the last follows by integrating $\dot R_\tau=[\omega_\tau]R_\tau$;
both hold for an arbitrary velocity field, and $\omega^{a\to b}$ is unique on the
principal branch $\|\omega^{a\to b}\|\le\pi$. 

\begin{proposition}[Interval additivity]\label{prop:af-additivity}
For any $s\le m\le t$, $\;D(s,t)=D(m,t)\,D(s,m)$.
\end{proposition}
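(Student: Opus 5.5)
The plan is to use the closed form of $D(a,b)$ recorded in \eqref{eq:af-Dab}, namely $D(a,b)=R_bR_a^\top$, and insert the identity $R_m^\top R_m=I$. This is the same manipulation the main text uses to derive \eqref{eq:sg-identity}. Concretely, for $s\le m\le t$,
\begin{equation*}
D(m,t)\,D(s,m)=R_tR_m^\top R_mR_s^\top=R_tR_s^\top=D(s,t),
\end{equation*}
where the middle step uses orthogonality of $R_m\in\mathrm{SO}(3)$. This settles the claim, provided the equality $D(a,b)=R_bR_a^\top$ is justified for an arbitrary (state-dependent) velocity field. The remaining work is to make that justification explicit.

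I would establish $\mathcal{T}\!\exp\big(\int_a^b[\omega_\tau]\,d\tau\big)=R_bR_a^\top$ via uniqueness of linear ODE solutions. Fix $a$ and set $U(b):=R_bR_a^\top$. Differentiating with $\dot R_b=[\omega_b]R_b$ gives $\dot U(b)=[\omega_b]\,U(b)$ with $U(a)=I$. The time-ordered exponential, defined by the Dyson series in Appendix~\ref{sec:so3-background} with latest time on the left, satisfies the same linear matrix ODE. This follows by termwise differentiation in the upper limit, exactly as in \eqref{eq:rhs_omega_avg}, and the series converges because $\tau\mapsto[\omega_\tau]$ is continuous and hence bounded on $[a,b]$.

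The state dependence $\omega_\tau=\omega(\tau,R_\tau,x_\tau)$ causes no difficulty. Once the trajectory is fixed, $A(\tau):=[\omega_\tau]$ is just a given continuous matrix-valued function of time. The ODE in $U$ is therefore linear with continuous coefficients, and Picard--Lindel\"of gives uniqueness, so the two solutions agree on $[a,b]$.

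An alternative route bypasses $R$ entirely: one shows the Dyson series itself satisfies $\mathcal{T}\!\exp\int_s^t=\mathcal{T}\!\exp\int_m^t\cdot\mathcal{T}\!\exp\int_s^m$. This follows because both sides solve $\dot Y(t)=A(t)Y(t)$ on $[m,t]$ with the same value at $t=m$. I would mention this to stress that additivity is a property of the propagator, not of the particular interpolation.

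The only real obstacle is bookkeeping: keeping the spatial convention consistent so that the later segment multiplies on the left. With the body-frame convention the order would reverse, so I would state explicitly that $\dot R=[\omega]R$ forces $D(m,t)$ to the left of $D(s,m)$.
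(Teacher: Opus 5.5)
Your proof is correct and matches the paper's. The paper's proof is the one-line regrouping $D(s,t)=R_tR_s^\top=(R_tR_m^\top)(R_mR_s^\top)=D(m,t)\,D(s,m)$, taking $D(a,b)=R_bR_a^\top$ from \eqref{eq:af-Dab} as given. Your ODE-uniqueness argument for that closed form, and your Dyson-series alternative (which the paper also notes in the proof of Proposition~\ref{prop:semigroup}), make explicit what the paper only asserts when it says the identity holds for an arbitrary velocity field.
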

\begin{proof}
$D(s,t)=R_tR_s^\top=(R_tR_m^\top)(R_mR_s^\top)=D(m,t)D(s,m)$. The regrouping is
exact and independent of the geodesic assumption; it is the non-commutative
$\mathrm{SO}(3)$ analogue of Euclidean displacement additivity, with addition
replaced by group multiplication and the near segment $[m,t]$ ordered to the
left of the far segment $[s,m]$.
\end{proof}

Mirroring \eqref{eq:af-trans-target}, the near segment uses the shift angular
velocity
$$
\tilde\omega_t=
\begin{cases}
\omega_t, & \text{(data angular velocity)},\\[2pt]
\omega^{\mathrm{avg}}_\theta(m,t,T_t), & \text{(model prediction; Shortcut choice)},
\end{cases}
$$
and we adopt $\tilde\omega_t=\omega_t$, so that
$D(m,t)=\exp(\delta[\tilde\omega_t])$. The far segment uses the
stop-gradient model evaluation at the stepped-back state,
$$
\omega^{s\to m}_\theta=(m-s)\,\omega^{\mathrm{avg}}_\theta(s,m,T_m),
\qquad
D(s,m)=\exp\big([\omega^{s\to m}_\theta]\big).
$$
By Proposition~\ref{prop:af-additivity}, the composed rotation target is
\begin{align}
\boxed{\;
\omega^{s\to t}_{\mathrm{tgt}}
=\log\Big(
\underbrace{\exp\big(\delta[\tilde\omega_t]\big)}_{D(m,t)}\;
\underbrace{\exp\big([\omega^{s\to m}_\theta]\big)}_{D(s,m)}
\Big)^{\!\vee}.\;}
\label{eq:af-rot-target}
\end{align}
The two segment generators do not commute, so the composition must be formed as
$\log(\exp\cdot\exp)$ rather than collapsed into a Lie-algebra
sum: rescaling or adding the two segment angles would corrupt the
Baker--Campbell--Hausdorff (BCH) cross term and no longer produce $\log D(s,t)$.

The target requires one logarithm, two exponentials, and no differentiation of
$f_\theta$. Both segment angles are bounded --- $\|\omega^{s\to m}_\theta\|\le\pi$
by the principal branch of $\log$, and $\|\delta\tilde\omega_t\|\le\alpha(t-s)\pi$
--- so the target is bounded by $2\pi$ and free of small-denominator blow-up. For
$t-s$ below machine tolerance $t_\varepsilon$, where $\log(\cdot)$ near the
identity loses precision, we use the first-order BCH limit, whose $O(t-s)$
correction is then negligible:
\begin{align}
\omega^{s\to t}_{\mathrm{tgt}}=
\begin{cases}
\log\big(D(m,t)\,D(s,m)\big)^\vee, & t-s\ge t_\varepsilon,\\[6pt]
\delta\,\tilde\omega_t+\omega^{s\to m}_\theta, & t-s< t_\varepsilon.
\end{cases}\nonumber
\end{align}
The rotation loss is then
\begin{align}
\mathcal{L}^{\alpha}_{\mathrm{rot}}
=\frac{1}{\alpha}\,\mathbb{E}\Big[\big\|\omega^{s\to t}_\theta
-\mathrm{sg}\big(\omega^{s\to t}_{\mathrm{tgt}}\big)\big\|_2^2\Big],
\label{eq:af-rot-loss}
\end{align}
identical in form to \eqref{eq:af-trans-loss}.

\begin{remark}[Fallback at $\alpha=1$ and choice of normalization]\label{rmk:af-limits}
At $\alpha=1$ the intermediate time hits the far end, $m=s$ and $\delta=t-s$, so the
far segment vanishes ($D(s,m)=I$) and
$\omega^{s\to t}_{\mathrm{tgt}}=\log D(m,t)^\vee=(t-s)\,\tilde\omega_t$: the objective
reduces to flow matching \eqref{eq:bd-loss}, anchoring the branch to the data
velocity and supplying the boundary condition that prevents collapse. The opposite
limit $\alpha\to0$ recovers the differential MeanFlow objective and is analyzed in
Section~\ref{sec:af-meanflow-limit}.

The group composition $\log(\exp\cdot\exp)$ is essential throughout:
the naive Lie sum $\delta\,\tilde\omega_t+\omega^{s\to m}_\theta$ drops the
BCH/Jacobian correction and is not equivalent.
\end{remark}

\subsection{The $\alpha\to 0$ Limit Recovers MeanFlow}
\label{sec:af-meanflow-limit}

In this subsection we show that the $\alpha$-Flow loss converges to the differential
MeanFlow loss as $\alpha\to0$. We treat the rotation branch; the translation branch
is the abelian case ($J\equiv I$, no BCH correction) and reduces to Euclidean
MeanFlow \eqref{eq:mf-loss-app} verbatim.

\begin{remark}[$\alpha$-Flow normalization]\label{rmk:af-normalization}
If $\alpha$-Flow is to interpolate exactly between flow matching and MeanFlow, the
prefactor should be $1/\alpha^{2}$ rather than $1/\alpha$, since the residual
regressed in \eqref{eq:af-rot-loss} is $O(\alpha)$
(Proposition~\ref{prop:af-meanflow-limit}). We therefore analyze
\begin{align}
\widetilde{\mathcal{L}}_{\mathrm{rot}}
:=\frac{1}{\alpha^{2}}
\big\|\omega^{\mathrm{avg}}_{\theta}(s,t,T_t)
-\mathrm{sg}\big(\omega^{\mathrm{avg}}_{\mathrm{tgt}}\big)\big\|_2^2 ,
\label{eq:af-rot-loss-theory}
\end{align}
with $\omega^{\mathrm{avg}}_{\mathrm{tgt}}=\omega^{s\to t}_{\mathrm{tgt}}/(t-s)$ from
\eqref{eq:af-rot-target}; the $1/\alpha$ prefactor of \eqref{eq:af-rot-loss} follows
the style of \citet{cheng2025alpha} and is what we train with, because $1/\alpha^{2}$ is
less stable in practice. We also use only the first case of
\eqref{eq:af-rot-target}, the second being a small-$(t-s)$ fallback introduced for
numerical convenience.
\end{remark}

\begin{proposition}[$\alpha\to0$ recovers the MeanFlow loss]
\label{prop:af-meanflow-limit}
Let $g(\tau):=(\tau-s)\,\omega^{\mathrm{avg}}_\theta(s,\tau,T_\tau)$ denote the model
log-displacement along the interpolation path, so that
$g(t)=(t-s)\,\omega^{\mathrm{avg}}_\theta(s,t)$ and, by the product rule,
$\dot g(t)=\omega^{\mathrm{avg}}_\theta+(t-s)\tfrac{d}{dt}\omega^{\mathrm{avg}}_\theta$
is the quantity $\tfrac{d}{dt}\omega^{s\to t}$ appearing in
\eqref{eq:omega_identity}, and hence the quantity that
\eqref{eq:diff-target} regresses against. If $g$ is $C^{2}$ at $t$, then the
target \eqref{eq:af-rot-target} expands as
\begin{align}
\omega^{\mathrm{avg}}_{\mathrm{tgt}}
=\frac{g(t)}{t-s}+\alpha\Big(J\big(g(t)\big)^{-1}\omega_t-\dot g(t)\Big)+O(\alpha^{2})
\label{eq:af-tgt-expand}
\end{align}
and consequently
\begin{align}
\widetilde{\mathcal{L}}_{\mathrm{rot}}
=\big\|\dot g(t)-J\big(g(t)\big)^{-1}\omega_t\big\|_2^{2}+O(\alpha).
\label{eq:af-loss-limit}
\end{align}
The leading term is \emph{exactly} the per-sample rotation residual of the
differential MeanFlow loss \eqref{eq:diff-loss}: by \eqref{eq:diff-target} and the
definition of $g$,
$\omega^{\mathrm{avg}}_\theta-\omega^{\mathrm{avg}}_{\mathrm{tgt}}
=\dot g(t)-J(g(t))^{-1}\omega_t$. The $\alpha$-Flow objective therefore converges
to the differential MeanFlow loss itself, and does so without ever forming a
Jacobian--vector product.
\end{proposition}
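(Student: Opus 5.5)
Since the target \eqref{eq:af-rot-target} is a composition of two group elements, the plan is to expand it to first order in $\alpha$ with a BCH-type formula in which the near segment is the small perturbation. Set $\delta=\alpha(t-s)$ and $m=t-\delta$. The far-segment log-displacement is the model evaluated on $[s,m]$ at the stepped-back state $T_m=(\exp(-\delta[\omega_t])R_t,\,x_t-\delta v_t)$. Along the geodesic interpolation the data velocity is constant, so this stepped-back state is exactly the interpolation point at time $m$. Hence $\omega^{s\to m}_\theta=g(m)=g(t-\delta)$. Because $g$ is $C^2$ at $t$, Taylor's theorem gives $g(m)=g(t)-\delta\,\dot g(t)+O(\delta^2)$, and here $\dot g$ is the total derivative along the path, which is the JVP quantity of \eqref{eq:total_dOmega_dt}.

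Next expand $\log\big(\exp(\delta[\omega_t])\exp([\phi])\big)^\vee$ with $\phi=g(m)$ and $\delta$ small. The standard first-order formula for a small left perturbation gives
\begin{equation*}
\log\big(\exp([\epsilon])\exp([\phi])\big)^\vee=\phi+J(\phi)^{-1}\epsilon+O(\|\epsilon\|^2).
\end{equation*}
I would justify this directly from the computation \eqref{eq:lhs_omega_avg} in Proposition~\ref{pro:Omega_identity}. That computation shows $d\exp$ at $\phi$ maps $\eta$ to $[J(\phi)\eta]\exp([\phi])$. Therefore $\log$, being locally the inverse of $\exp$, has differential $J(\phi)^{-1}$ with respect to left-trivialized perturbations. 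This requires $\phi$ to stay in the principal branch, $\|\phi\|<\pi$, where $J$ is invertible.

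Substituting $\epsilon=\delta\omega_t$ and $\phi=g(t)-\delta\dot g(t)+O(\delta^2)$, I use the smoothness of $J^{-1}$ to replace $J(g(m))^{-1}$ by $J(g(t))^{-1}+O(\delta)$. This yields
\begin{equation*}
\omega^{s\to t}_{\mathrm{tgt}}=g(t)+\delta\big(J(g(t))^{-1}\omega_t-\dot g(t)\big)+O(\delta^2).
\end{equation*}
Dividing by $t-s$ gives \eqref{eq:af-tgt-expand}, with the $O(\alpha^2)$ constant depending on $t-s$, which is fixed. For \eqref{eq:af-loss-limit}, note that $\omega^{\mathrm{avg}}_\theta(s,t,T_t)=g(t)/(t-s)$, so the residual is $\alpha\big(\dot g(t)-J(g(t))^{-1}\omega_t\big)+O(\alpha^2)$. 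Squaring and dividing by $\alpha^2$ leaves the stated leading term plus $O(\alpha)$. The cross term is $O(\alpha)$ because the leading factor is bounded. Finally, comparing with \eqref{eq:diff-target} shows that $\omega^{\mathrm{avg}}_\theta-\omega^{\mathrm{avg}}_{\mathrm{tgt}}=\omega^{\mathrm{avg}}_\theta+(t-s)\tfrac{d}{dt}\omega^{\mathrm{avg}}_\theta-J^{-1}\omega_t=\dot g-J^{-1}\omega_t$, which identifies the leading term with the differential MeanFlow residual.

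The main obstacle is getting the BCH first-order term right: the perturbation must enter on the left and through $J^{-1}$, not the right Jacobian. This orientation comes from the spatial convention, and Proposition~\ref{pro:Omega_identity}'s Wilcox computation settles it. I would also note that the principal-branch assumption $\|g(t)\|<\pi$ is needed for $\log$ to be smooth at the limit point.
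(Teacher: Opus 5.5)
Your proposal is correct and follows the paper's proof essentially step for step: you identify the stepped-back state with the path point at $m$ so the far factor is $\exp([g(m)])$, expand $\log(\exp(\delta[\omega_t])\exp([g(m)]))^\vee$ to first order in the small left factor with coefficient $J(g(m))^{-1}$, Taylor-expand $g(m)$ and $J(g(m))^{-1}$ about $t$, divide by $t-s$, and square the residual. The one difference is how you get the $J^{-1}$ coefficient: you invert the Wilcox differential $d\exp_\phi(\eta)=[J(\phi)\eta]\exp([\phi])$ from Proposition~\ref{pro:Omega_identity}, whereas the paper quotes the BCH coefficient $\mathrm{ad}_Y/(e^{\mathrm{ad}_Y}-1)$ and reduces it on $\mathfrak{so}(3)$. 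Your explicit hypothesis $\|g(t)\|<\pi$, needed for $\log$ to be smooth, is the right and slightly sharper condition.
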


\begin{proof}
Write $m=t-\delta$ with $\delta=\alpha(t-s)$, and $J:=J(g(t))$.

Since the data velocities $\omega_t$ and $v_t$ are constant along the interpolation
path (Section~\ref{sec:grasp-fm}), the stepped-back state satisfies
$\exp(-\delta[\omega_t])R_t=R_m$ and $x_t-\delta v_t=x_m$ \emph{exactly}. The
two factors of \eqref{eq:af-rot-target} are therefore
$D(s,m)=\exp\big([g(m)]\big)$ and $D(m,t)=\exp\big(\delta[\omega_t]\big)$,
with $g$ the same function appearing in the statement; this is what lets the finite
difference below capture the \emph{total} derivative \eqref{eq:domega-dt} rather than
$\partial_t$ alone.

The first-order BCH formula, expanded in the small \emph{left} factor, gives
$\log(e^{X}e^{Y})=Y+\frac{\mathrm{ad}_Y}{e^{\mathrm{ad}_Y}-1}X+O(\|X\|^{2})$. On
$\mathfrak{so}(3)$ one has $\mathrm{ad}_{[\phi]}[\psi]=\big[[\phi]\psi\big]$
(Remark~\ref{rmk:adjoint}), so in vector coordinates the coefficient is the matrix
$\big(e^{[\phi]}-I\big)^{-1}[\phi]=J(\phi)^{-1}$, using the closed form of
Remark~\ref{rmk:jacobian-closed}. This is invertible: $J(\phi)$ is a power series in
the skew matrix $[\phi]$, so its eigenvalues are $1$ and
$\tfrac{e^{\pm i\theta}-1}{\pm i\theta}$ with $\theta=\|\phi\|\le\pi$, all nonzero.
Taking $X=\delta[\omega_t]=O(\alpha)$ and $Y=[g(m)]$,
\begin{align}
\log\!\big(D(m,t)\,D(s,m)\big)^{\vee}
=g(m)+\delta\,J\big(g(m)\big)^{-1}\omega_t+O(\delta^{2}).
\nonumber
\end{align}
Substituting $g(m)=g(t)-\delta\dot g(t)+O(\delta^{2})$ and
$J(g(m))^{-1}=J^{-1}+O(\delta)$, we have
\begin{align}
\omega^{\mathrm{avg}}_{\mathrm{tgt}}
&=\frac{1}{t-s}\log\!\big(D(m,t)\,D(s,m)\big)^{\vee}
\nonumber\\
&=\frac{1}{t-s}\Big(g(m)+\delta\,J\big(g(m)\big)^{-1}\omega_t+O(\delta^{2})\Big)
\nonumber\\
&=\frac{1}{t-s}\Big(g(t)-\delta\,\dot g(t)
+\delta\,J^{-1}\omega_t+O(\delta^{2})\Big)
\nonumber\\
&=\frac{g(t)}{t-s}
+\frac{\alpha(t-s)}{t-s}\Big(J^{-1}\omega_t-\dot g(t)\Big)
+\frac{O\big(\alpha^{2}(t-s)^{2}\big)}{t-s}
\nonumber\\
&=\omega^{\mathrm{avg}}_\theta(s,t)
-\alpha\Big(\dot g(t)-J^{-1}\omega_t\Big)+O(\alpha^{2}),
\nonumber
\end{align}
which is \eqref{eq:af-tgt-expand}. Substituting into \eqref{eq:af-rot-loss-theory},
\begin{align}
\widetilde{\mathcal{L}}_{\mathrm{rot}}
&=\frac{1}{\alpha^{2}}
\Big\|\omega^{\mathrm{avg}}_\theta(s,t)
-\mathrm{sg}\big(\omega^{\mathrm{avg}}_{\mathrm{tgt}}\big)\Big\|_2^{2}
\nonumber\\
&=\frac{1}{\alpha^{2}}
\Big\|\omega^{\mathrm{avg}}_\theta(s,t)
-\Big(\omega^{\mathrm{avg}}_\theta(s,t)
-\alpha\big(\dot g(t)-J^{-1}\omega_t\big)+O(\alpha^{2})\Big)\Big\|_2^{2}
\nonumber\\
&=\frac{1}{\alpha^{2}}
\Big\|\alpha\big(\dot g(t)-J^{-1}\omega_t\big)+O(\alpha^{2})\Big\|_2^{2}
\nonumber\\
&=\frac{1}{\alpha^{2}}
\Big(\alpha^{2}\big\|\dot g(t)-J^{-1}\omega_t\big\|_2^{2}
+O(\alpha^{3})\Big)
\nonumber\\
&=\big\|\dot g(t)-J^{-1}\omega_t\big\|_2^{2}+O(\alpha),
\nonumber
\end{align}
which is \eqref{eq:af-loss-limit}.
\end{proof}

\paragraph{Numerical verification of Proposition~\ref{prop:af-meanflow-limit}.} Figure~\ref{fig:af-meanflow-limit} verifies Eq.~\eqref{eq:af-loss-limit} on the released implementation. The scaled loss $\widetilde{\mathcal{L}}_{\mathrm{rot}}(\alpha)$ of \eqref{eq:af-rot-loss-theory} is computed with the target evaluated \emph{exactly}, through the group composition of the ratio-$\alpha$ consistency step --- not through the expansion \eqref{eq:af-tgt-expand} being tested. Since the proposition is an identity valid for any $C^{2}$ model, no training is required: we instantiate synthetic smooth two-time velocity heads $(R_t,x_t,t,s)\mapsto\omega^{\mathrm{avg}}_\theta$, random $C^{\infty}$ functions of the state and both times so that the total-derivative cross terms in $\dot g$ are exercised. The gap to the differential MeanFlow loss \eqref{eq:diff-target} vanishes linearly in $\alpha$ (empirical log--log slope $0.98$, matching the $O(\alpha)$ remainder), and at $\alpha=1$ the scaled loss equals the flow-matching loss to machine precision ($\sim\!10^{-16}$), since the target reduces to $\omega_t$ exactly. The inverse Jacobian in the regression target of \eqref{eq:diff-target} is essential to this limit: replacing $J^{-1}\omega_t$ by $\omega_t$ (equivalently, measuring the gap to the residual $J\dot g-\omega_t$) leaves a nonzero plateau ($\approx2\times10^{-2}$ in this setup) as $\alpha\to0$, so the $\alpha$-Flow objective converges to the differential MeanFlow loss \eqref{eq:diff-loss} itself and not to its Jacobian-free variant.

\begin{figure}[t]
  \centering
  \includegraphics[width=0.5\linewidth]{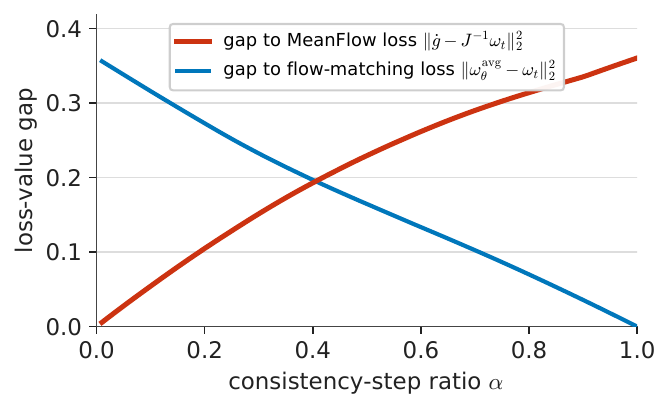}
    \caption{Numerical verification of Proposition~\ref{prop:af-meanflow-limit}, Eq.~\eqref{eq:af-loss-limit}: gap between the scaled $\alpha$-Flow rotation loss $\widetilde{\mathcal{L}}_{\mathrm{rot}}(\alpha)$ and its two predicted limits, averaged over $256$ random $\mathrm{SE}(3)$ instances ($\alpha\in[10^{-2},1]$, linear grid). The gap to the differential MeanFlow loss $\|\dot g-J^{-1}\omega_t\|_2^2$ (blue) vanishes as $O(\alpha)$; the gap to the flow-matching loss (orange) vanishes at $\alpha=1$: the $\alpha$-Flow objective interpolates between the two.}
  \label{fig:af-meanflow-limit}
\end{figure}

\begin{remark}[The fallback branch does not admit this limit]
\label{rmk:af-fallback-limit}
Proposition~\ref{prop:af-meanflow-limit} concerns the group-composition branch of
\eqref{eq:af-rot-target}. The first-order fallback
$\omega^{s\to t}_{\mathrm{tgt}}=\delta\,\tilde\omega_t+\omega^{s\to m}_\theta$
drops the BCH correction, and repeating the argument above with it in place
yields \eqref{eq:af-loss-limit} with $J\equiv I$ --- i.e.,\ the wrong identity. The
fallback is therefore a numerical device for $t-s<t_\varepsilon$, where the two
branches agree to $O(t-s)$ and $\log(\cdot)$ near the identity loses precision,
and not an alternative formulation of the target.
\end{remark}

\begin{remark}[The adjoint action on $\mathfrak{so}(3)$]
\label{rmk:adjoint}
For a Lie algebra element $Y$, the adjoint action $\mathrm{ad}_Y$ is the linear
operator $\mathrm{ad}_Y(X):=[Y,X]=YX-XY$, and an analytic function $f$ applied to
$\mathrm{ad}_Y$ is understood through its power series, each term being an
iterated bracket. On $\mathfrak{so}(3)$ the bracket is the cross product,
$$
\mathrm{ad}_{[g]}\big([\psi]\big)=[g][\psi]-[\psi][g]=[\,g\times\psi\,]=\big[[g]\,\psi\big],
$$
so under the identification $\mathfrak{so}(3)\cong\mathbb{R}^3$ the operator
$\mathrm{ad}_{[g]}$ \emph{is} the matrix $[g]$ acting on $\mathbb{R}^3$. Any
$f(\mathrm{ad}_{[g]})$ therefore reduces to the $3\times3$ matrix $f([g])$ --- a
simplification specific to $\mathfrak{so}(3)$ that we use below. In particular,
comparing with \eqref{eq:left_jacobian},
\begin{align}
J(\phi)=\int_0^1 e^{\alpha[\phi]}d\alpha=\big(e^{[\phi]}-I\big)[\phi]^{-1},
\qquad
J(\phi)^{-1}=\frac{\mathrm{ad}_{[\phi]}}{e^{\mathrm{ad}_{[\phi]}}-I},
\label{eq:jacobian-adjoint}
\end{align}
where the quotients denote the corresponding power series, since $[\phi]$ is
singular. The left Jacobian is thus the operator that transports a Lie-algebra
perturbation to the group, and it appears both as the Fr\'echet derivative of
$\exp$ (Proposition~\ref{pro:Omega_identity}) and as the first-order
Baker--Campbell--Hausdorff coefficient (above); these are two readings of the same object.
\end{remark} 
\begin{remark}[Why $\alpha$ is floored]
\label{rmk:af-floor}
The $1/\alpha$ normalization makes $\mathcal{L}^{\alpha}_{\mathrm{rot}}=O(\alpha)$,
so the loss vanishes for \emph{every} $\theta$ at exactly $\alpha=0$: the limit is
informative only as the leading-order direction, not as a usable objective at
$\alpha=0$. We therefore sample $\alpha\in[\alpha_{\min},1]$ with
$\alpha_{\min}>0$; the two endpoints interpolate between flow matching
($\alpha=1$) and the differential MeanFlow target ($\alpha\to0$).
\end{remark}

\begin{algorithm}[t]
\caption{JVP-free SE(3) $\alpha$-Flow training step}
\label{alg:af-training}
\begin{algorithmic}[1]
\Require state $(R_t,x_t)$; times $s\le t$; data velocities $(\omega_t,v_t)$;
         field $f_\theta$; $\alpha\in[\alpha_{\min},1]$
\State $m\gets\alpha s+(1-\alpha)t$,\quad $\delta\gets t-m$
\State $R_m\gets \exp(-\delta[\omega_t])\,R_t$,\quad
       $x_m\gets x_t-\delta\,v_t$
\State $(\omega^{\mathrm{avg}}_m,v^{\mathrm{avg}}_m)\gets
       \mathrm{sg}\;f_\theta(s,m,R_m,x_m)$
\State $\omega^{s\to m}\gets(m-s)\,\omega^{\mathrm{avg}}_m$,\quad
       $p^{s\to m}\gets(m-s)\,v^{\mathrm{avg}}_m$
\State $\omega^{s\to t}_{\mathrm{tgt}}\gets
       \log\big(\exp(\delta[\omega_t])\,
       \exp([\omega^{s\to m}])\big)^\vee$
\State $p^{s\to t}_{\mathrm{tgt}}\gets \delta\,v_t+p^{s\to m}$
\State $(\omega^{\mathrm{avg}}_\theta,v^{\mathrm{avg}}_\theta)\gets
       f_\theta(s,t,R_t,x_t)$ \Comment{only gradient path}
\State $\omega^{s\to t}_\theta\gets(t-s)\,\omega^{\mathrm{avg}}_\theta$,\quad
       $p^{s\to t}_\theta\gets(t-s)\,v^{\mathrm{avg}}_\theta$
\State Form $\mathcal{L}^{\alpha}_{\mathrm{rot}},
       \mathcal{L}^{\alpha}_{\mathrm{trans}}$
       (Eqs.~\eqref{eq:af-rot-loss},~\eqref{eq:af-trans-loss})
\end{algorithmic}
\end{algorithm}


\section{Semigroup Formulation of SE(3)-MeanFlow and the Semigroup Loss}
\label{sec:semigroup}

The MeanFlow identity \eqref{eq:omega_identity} is the \emph{differential} form of
our average velocity: it is obtained by differentiating \eqref{eq:avg_omega} with respect to
$t$, and its training target carries both a Jacobian--vector product (for
$\tfrac{d}{dt}\omega^{s\to t}_\theta$) and the left Jacobian $J$. In this section
we show that the \emph{same} definition \eqref{eq:avg_omega} also admits a
\emph{finite}, derivative-free characterization: the average-velocity flow map is
a genuine two-parameter semigroup on $\mathrm{SE}(3)$, and consistency with this
semigroup yields a JVP-free, $J$-free training objective. Throughout
$r\le s\le t\in[0,1]$, and $\omega_t=\log(R_1R_0^\top)^\vee$, $v_t=x_1-x_0$ denote
the data-side instantaneous velocities of the geodesic interpolation
\eqref{eq:geodesic_R}--\eqref{eq:geodesic_x}, which are constant along the path.

\subsection{The Average-Velocity Flow Map}

Recall from \eqref{eq:avg_omega} that the average angular velocity over $[s,t]$
is defined by
\begin{align}
\exp\big([\omega^{s\to t}]\big)
=\mathcal{T}\!\exp\Big(\int_s^t [\omega(\tau,R_\tau,x_\tau)]\,d\tau\Big)
=:D(s,t)\in\mathrm{SO}(3),
\label{eq:sg-D}
\end{align}
where the last equality $D(s,t)=R_tR_s^\top$ holds by integrating the spatial ODE
$\dot R_\tau=[\omega_\tau]R_\tau$; this is \eqref{eq:af-Dab}. We write
$\omega^{s\to t}=(t-s)\,\omega^{\mathrm{avg}}(s,t,T_t)\in\mathbb{R}^3$ for the
rotation log-displacement and, on the flat translation branch,
$p^{s\to t}:=x_t-x_s=(t-s)\,v^{\mathrm{avg}}(s,t,T_t)$ for the displacement.

\begin{definition}[Average-velocity flow map]\label{def:flowmap-sg}
For $s\le t$ define $\Psi_{s\to t}:\mathrm{SE}(3)\to\mathrm{SE}(3)$ by
\begin{align}
\Psi_{s\to t}(R,x)
:=\Big(\,D(s,t)\,R,\ \ x+p^{s\to t}\,\Big)
=\Big(\,\exp\big([\omega^{s\to t}]\big)\,R,\ \ x+(t-s)\,v^{\mathrm{avg}}(s,t)\,\Big).
\nonumber
\end{align}
Here $\omega^{s\to t}$ and $p^{s\to t}$ are evaluated at $T_t$, which is determined by $T_s$ along the interpolation path; $\Psi_{s\to t}$ is therefore well defined on that path, where $\Psi_{s\to t}(R_s,x_s)=(R_t,x_t)$ by construction. 
The sampling map $\Phi_{t\to s}$ of \eqref{eq:flow-map} is its inverse,
$\Phi_{t\to s}=\Psi_{s\to t}^{-1}$: the flow map runs from data toward the prior,
while generation runs the other way.
\end{definition}

\subsection{The Semigroup Property}

\begin{proposition}[Semigroup property of the average-velocity flow]
\label{prop:semigroup}
The family $\{\Psi_{s\to t}\}_{0\le s\le t\le1}$ of
Definition~\ref{def:flowmap-sg} satisfies, for all $r\le s\le t$,
\begin{align}
\Psi_{s\to t}\circ\Psi_{r\to s}=\Psi_{r\to t},
\qquad
\Psi_{t\to t}=\mathrm{id}.
\label{eq:sg-flow}
\end{align}
Equivalently, in terms of the average velocities,
\begin{align}
&\textbf{(rotation)}\quad
\exp\big([\omega^{s\to t}]\big)\,\exp\big([\omega^{r\to s}]\big)
=\exp\big([\omega^{r\to t}]\big),
\label{eq:sg-rot-identity}\\[2pt]
&\textbf{(translation)}\quad
(s-r)\,v^{\mathrm{avg}}(r,s)+(t-s)\,v^{\mathrm{avg}}(s,t)
=(t-r)\,v^{\mathrm{avg}}(r,t).
\label{eq:sg-trans-identity}
\end{align}
\end{proposition}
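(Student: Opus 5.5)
The plan is to reduce everything to the identification $D(a,b)=R_bR_a^\top$ and $p^{a\to b}=x_b-x_a$ along the interpolation path. This identification is recorded in \eqref{eq:sg-D} and \eqref{eq:af-Dab}, and holds for an arbitrary velocity field, not only the geodesic one. The rotation identity \eqref{eq:sg-rot-identity} is then exactly Proposition~\ref{prop:af-additivity} with $(s,m,t)$ renamed $(r,s,t)$. Concretely, we write $\exp([\omega^{s\to t}])\exp([\omega^{r\to s}])=(R_tR_s^\top)(R_sR_r^\top)=R_tR_r^\top=\exp([\omega^{r\to t}])$, using only $R_s^\top R_s=I$. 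The translation identity \eqref{eq:sg-trans-identity} is the telescoping sum $(x_s-x_r)+(x_t-x_s)=x_t-x_r$. Equivalently, it follows from additivity of the integral in \eqref{eq:se3-mean-velocity-x}, since $(b-a)v^{\mathrm{avg}}(a,b)=\int_a^b v_\tau\,d\tau$.

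Next we lift these displacement identities to the flow-map statement \eqref{eq:sg-flow}. For the composition, we apply Definition~\ref{def:flowmap-sg} twice:
\[
\Psi_{s\to t}\big(\Psi_{r\to s}(R,x)\big)=\big(D(s,t)D(r,s)R,\;x+p^{r\to s}+p^{s\to t}\big).
\]
By the two identities just proved, this equals $(D(r,t)R,\;x+p^{r\to t})=\Psi_{r\to t}(R,x)$. For the identity claim, we note $D(t,t)=R_tR_t^\top=I$ and $p^{t\to t}=0$, which gives $\Psi_{t\to t}=\mathrm{id}$. Conversely, the equivalence in the statement follows by reading off the two components of $\Psi_{s\to t}\circ\Psi_{r\to s}=\Psi_{r\to t}$. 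The components decouple because rotations act by left multiplication on $R$ and translations act additively on $x$.

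The main obstacle is well-definedness. $\Psi_{s\to t}$ is evaluated at $T_t$, which is determined only along the path, so the composition must be checked on points of the interpolation trajectory. The route is to first verify $\Psi_{r\to s}(R_r,x_r)=(R_s,x_s)$, which Definition~\ref{def:flowmap-sg} gives by construction. This guarantees that the input fed to $\Psi_{s\to t}$ is again on the path, so the same displacements $D(s,t)$ and $p^{s\to t}$ apply.

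A second subtlety concerns the logarithm. Since $\omega^{a\to b}$ is defined only through $\exp$, the rotation identity should be stated at the group level, as done above, and not as an equality of log-vectors. At the group level no branch issue arises. Only if one insists on log-vectors would we invoke uniqueness on the principal branch $\|\omega\|\le\pi$ noted after \eqref{eq:af-Dab}.
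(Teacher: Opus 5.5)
Your proposal is correct and follows essentially the same route as the paper. Both reduce to $D(a,b)=R_bR_a^\top$ and $p^{a\to b}=x_b-x_a$, obtain the rotation identity by inserting $R_s^\top R_s=I$ (Proposition~\ref{prop:af-additivity}) and the translation identity by telescoping, and then compose the two factors of $\Psi$; your explicit checks of $\Psi_{t\to t}=\mathrm{id}$ and of on-path well-definedness are small additions that the paper leaves implicit.
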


\begin{proof}
On $\mathrm{SO}(3)$ the transition operators compose by inserting
$R_s^\top R_s=I$:
\[
D(r,t)=R_tR_r^\top=(R_tR_s^\top)(R_sR_r^\top)=D(s,t)\,D(r,s),
\]
the near interval appearing to the left of the far one; this is
Proposition~\ref{prop:af-additivity}. (Equivalently, it is the multiplicativity
of the time-ordered exponential
$\mathcal{T}\!\exp(\int_r^t)=\mathcal{T}\!\exp(\int_s^t)\,\mathcal{T}\!\exp(\int_r^s)$,
so the statement holds for an arbitrary velocity field, not only the geodesic
path.) Substituting $D(a,b)=\exp([\omega^{a\to b}])$ gives
\eqref{eq:sg-rot-identity}. On the flat translation factor displacements add,
$x_t-x_r=(x_t-x_s)+(x_s-x_r)$, which is \eqref{eq:sg-trans-identity}. Combining
the two factors,
$\Psi_{s\to t}(\Psi_{r\to s}(R,x))=\big(D(s,t)D(r,s)R,\;x+p^{r\to s}+p^{s\to t}\big)
=\big(D(r,t)R,\;x+p^{r\to t}\big)=\Psi_{r\to t}(R,x)$, i.e.,\ \eqref{eq:sg-flow}.
\end{proof}

\begin{remark}[Where $\mathrm{SO}(3)$ curvature enters]\label{rmk:sg-bch}
The composition \eqref{eq:sg-rot-identity} is \emph{exact} and independent of
curvature: it is the associativity of group multiplication. Curvature appears only
if one tries to collapse the product of exponentials into a \emph{single}
Lie-algebra sum. By the Baker--Campbell--Hausdorff (BCH) formula,
\[
\omega^{r\to t}
=\log\Big(\exp\big([\omega^{s\to t}]\big)\,
\exp\big([\omega^{r\to s}]\big)\Big)^{\vee}
=\omega^{s\to t}+\omega^{r\to s}
+\tfrac12\,\omega^{s\to t}\times\omega^{r\to s}+\cdots,
\]
so the naive additive law $\omega^{r\to t}=\omega^{r\to s}+\omega^{s\to t}$ holds
only when the two axes are collinear; in general it drops the BCH cross terms. The
translation identity \eqref{eq:sg-trans-identity} has no such correction because
$\mathbb{R}^3$ is abelian. This is the single place where the rotation branch
departs from the Euclidean MeanFlow semigroup.
\end{remark}

\subsection{The Semigroup-Consistency Loss}

Proposition~\ref{prop:semigroup} characterizes the correct average-velocity field
\emph{without} any time derivative: the field is consistent iff its one-step
prediction on $[s,t]$ equals the two-step composition through any intermediate
$m\in[s,t]$. We turn this into a regression objective. Let $f_\theta$ output
$(\omega^{\mathrm{avg}}_\theta,v^{\mathrm{avg}}_\theta)(s,t,T_t)$ and write
$\omega^{s\to t}_\theta=(t-s)\,\omega^{\mathrm{avg}}_\theta$,
$p^{s\to t}_\theta=(t-s)\,v^{\mathrm{avg}}_\theta$. Given $s<m<t$, form the
intermediate state by stepping back the near segment $[m,t]$,
\begin{align}
R_m=\exp\big(-[\omega^{m\to t}_\theta]\big)R_t,
\qquad
x_m=x_t-p^{m\to t}_\theta,\nonumber 
\end{align}
and define the composed (two-step) targets by
\eqref{eq:sg-rot-identity}--\eqref{eq:sg-trans-identity}:
\begin{align}
\omega^{s\to t}_{\mathrm{tgt}}
&:=\log\Big(
\underbrace{\exp\big([\omega^{m\to t}_\theta]\big)}_{D(m,t)}\;
\underbrace{\exp\big([\omega^{s\to m}_\theta]\big)}_{D(s,m)}
\Big)^{\!\vee},
\label{eq:sg-rot-target}\\[2pt]
p^{s\to t}_{\mathrm{tgt}}
&:=p^{s\to m}_\theta+p^{m\to t}_\theta ,
\label{eq:sg-trans-target}
\end{align}
where $\omega^{s\to m}_\theta=(m-s)\,\omega^{\mathrm{avg}}_\theta(s,m,T_m)$ and
$p^{s\to m}_\theta=(m-s)\,v^{\mathrm{avg}}_\theta(s,m,T_m)$ are evaluated at the
stepped-back state. The model regresses its \emph{direct} one-step prediction on
$[s,t]$ onto these stop-gradient targets:
\begin{align}
\mathcal{L}^{\mathrm{sg}}_{\mathrm{rot}}
&=\mathbb{E}_{s<m<t}\,\Big\|\omega^{s\to t}_\theta
-\mathrm{sg}\big(\omega^{s\to t}_{\mathrm{tgt}}\big)\Big\|_2^2,
\label{eq:sg-rot-loss}\\
\mathcal{L}^{\mathrm{sg}}_{\mathrm{trans}}
&=\mathbb{E}_{s<m<t}\,\Big\|p^{s\to t}_\theta
-\mathrm{sg}\big(p^{s\to t}_{\mathrm{tgt}}\big)\Big\|_2^2.
\label{eq:sg-trans-loss}
\end{align}
The semigroup constraint alone admits trivial (collapsed) minimizers; it must be
anchored at the diagonal $s\to t$, where \eqref{eq:sg-D} degenerates to the
instantaneous velocity. This boundary term is exactly flow matching:
\begin{align}
\mathcal{L}^{\mathrm{bd}}_{\mathrm{rot}}
=\mathbb{E}_t\big\|\omega^{\mathrm{avg}}_\theta(t,t,T_t)-\omega_t\big\|_2^2,
\qquad
\mathcal{L}^{\mathrm{bd}}_{\mathrm{trans}}
=\mathbb{E}_t\big\|v^{\mathrm{avg}}_\theta(t,t,T_t)-v_t\big\|_2^2 ,
\label{eq:sg-bd-loss}
\end{align}
and the total objective is
\begin{align}
\mathcal{L}^{\mathrm{sg\text{-}MF}}
=\underbrace{\mathcal{L}^{\mathrm{bd}}_{\mathrm{rot}}
+\mathcal{L}^{\mathrm{bd}}_{\mathrm{trans}}}_{\text{flow matching (anchor)}}
+\underbrace{\mathcal{L}^{\mathrm{sg}}_{\mathrm{rot}}
+\mathcal{L}^{\mathrm{sg}}_{\mathrm{trans}}}_{\text{semigroup consistency}} .
\label{eq:sg-total}
\end{align}
Every term uses only forward evaluations of $f_\theta$ together with
$\exp$ and $\log$ on $\mathrm{SO}(3)$ and addition on $\mathbb{R}^3$:
there is no Jacobian--vector product and no explicit left Jacobian $J$.

\begin{remark}[The normalization scalar must stay outside $\log(\exp\cdot\exp)$]
\label{rmk:sg-scalar}
If one prefers to regress the average velocity
$\omega^{\mathrm{avg}}_\theta(s,t)$ itself rather than the log-displacement, the
target is $\omega^{\mathrm{avg}}_{\mathrm{tgt}}=\tfrac{1}{t-s}\omega^{s\to t}_{\mathrm{tgt}}$
with $\omega^{s\to t}_{\mathrm{tgt}}$ from \eqref{eq:sg-rot-target}. The scalar
$\tfrac{1}{t-s}$ must remain \emph{outside} $\log(\exp\cdot\exp)$:
because the two segment generators do not commute, absorbing it into the two
exponentials would rescale each segment angle and corrupt the BCH cross term, no
longer producing $\log D(s,t)$. On the translation branch the same normalization
is harmlessly absorbed into the linear weights. This is the finite-interval
counterpart of the observation made for \eqref{eq:af-rot-target}.
\end{remark}

In the implementation we impose the same constraint on the transported states
rather than the displacements, replacing
$\mathcal{L}^{\mathrm{sg}}_{\mathrm{rot}}+\mathcal{L}^{\mathrm{sg}}_{\mathrm{trans}}$
by
\begin{align}
\mathcal{L}^{\mathrm{sg}}_{\mathrm{geo}}
=\mathbb{E}_{s<m<t}\Big[\big\|\log\big(R_dR_c^\top\big)^{\vee}\big\|_2^2
+\big\|x_d-x_c\big\|_2^2\Big],
\qquad
\begin{cases}
(R_d,x_d)=\Phi_{t\to s}(T_t),\\
(R_c,x_c)=\Phi_{m\to s}\big(\Phi_{t\to m}(T_t)\big),
\end{cases}
\label{eq:sg-geo-app}
\end{align}
with the composed branch under a stop-gradient. Remark~\ref{rmk:sg-variants}
relates the two forms.

\begin{remark}[Two variants of the semigroup residual]
\label{rmk:sg-variants}
Equations \eqref{eq:sg-rot-loss}--\eqref{eq:sg-trans-loss} impose the semigroup
constraint on the \emph{displacements}. The same constraint can be imposed on
the transported \emph{states}: writing
$R_d=\exp(-[\omega^{s\to t}_\theta])R_t$ and
$R_c=\exp(-[\omega^{s\to t}_{\mathrm{tgt}}])R_t$ for the directly and
compositionally transported rotations, the geodesic residual is
$\|\log(R_dR_c^\top)^\vee\|_2^2$, which is \eqref{eq:sg-loss-geo}. By the BCH
formula,
\[
\log\!\big(R_dR_c^\top\big)^{\vee}
=-\omega^{s\to t}_\theta+\omega^{s\to t}_{\mathrm{tgt}}
-\tfrac12\,\omega^{s\to t}_\theta\times\omega^{s\to t}_{\mathrm{tgt}}+\cdots,
\]
so the two residuals agree to first order and differ from second order on by the
cross term. Both vanish exactly at the field characterized in
Proposition~\ref{prop:sg-consistency}, so they share the same global minimizer;
they differ only in how errors are weighted away from it. The translation branch
is identical in both, since $\mathbb{R}^3$ is abelian. We train with the
geodesic form (Algorithm~\ref{alg:training}).
\end{remark}

\subsection{Consistency: Zero Loss Implies Exact Reconstruction}

The boundary term fixes the instantaneous limit and the semigroup term propagates
it to all intervals; together they pin down the unique correct flow map. The
following is the finite (JVP-free) analogue of Proposition~\ref{pro:loss_zero}.

\begin{proposition}[Uniqueness of the semigroup minimizer]
\label{prop:sg-consistency}
Suppose $\omega^{\mathrm{avg}}_\theta$ is continuous and, along the interpolation
path, satisfies the boundary condition
$\omega^{\mathrm{avg}}_\theta(t,t,T_t)=\omega_t$ for all $t$, together with the
rotation semigroup identity
\[
\exp\big([\omega^{s\to t}_\theta]\big)\,
\exp\big([\omega^{r\to s}_\theta]\big)
=\exp\big([\omega^{r\to t}_\theta]\big),
\qquad \forall\,r\le s\le t,
\]
where $\omega^{a\to b}_\theta:=(b-a)\,\omega^{\mathrm{avg}}_\theta(a,b,T_b)$. Then
$\exp\big([\omega^{s\to t}_\theta]\big)=R_tR_s^\top$ for all $s\le t$; in
particular, with $s=0$, $t=1$,
$\hat R_0:=\exp\big(-[\omega^{0\to1}_\theta]\big)R_1=R_0$. The analogous
statement holds for translation with $v^{\mathrm{avg}}_\theta(t,t)=v_t$.
\end{proposition}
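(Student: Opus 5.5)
Fix $s$ and set $\tilde D(t):=\exp\big([\omega^{s\to t}_\theta]\big)$ for $t\ge s$. The plan is to show that $\tilde D$ and the true transition $D(s,t)=R_tR_s^\top$ solve the same initial-value problem on $\mathrm{SO}(3)$, and then conclude by uniqueness, exactly as in Proposition~\ref{pro:loss_zero}. The difference is that the derivative of $\tilde D$ is produced from the semigroup identity and the boundary condition, rather than assumed through a Jacobian--vector product.

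\emph{Initial value.} By definition $\omega^{s\to s}_\theta=0\cdot\omega^{\mathrm{avg}}_\theta(s,s,T_s)=0$, so $\tilde D(s)=I=D(s,s)$.

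\emph{Right derivative.} For $h>0$, apply the semigroup identity with $(r,s,t)\to(s,t,t+h)$:
\[
\tilde D(t+h)=\exp\big([\omega^{t\to t+h}_\theta]\big)\,\tilde D(t),
\qquad
\omega^{t\to t+h}_\theta=h\,\omega^{\mathrm{avg}}_\theta(t,t+h,T_{t+h}).
\]
By continuity of $\omega^{\mathrm{avg}}_\theta$ jointly in its arguments, and of $\tau\mapsto T_\tau$, we have $\omega^{\mathrm{avg}}_\theta(t,t+h,T_{t+h})\to\omega^{\mathrm{avg}}_\theta(t,t,T_t)=\omega_t$ by the boundary condition. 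Hence $\exp\big([\omega^{t\to t+h}_\theta]\big)=I+h[\omega_t]+o(h)$, and so
\[
\lim_{h\downarrow0}\frac{\tilde D(t+h)-\tilde D(t)}{h}=[\omega_t]\,\tilde D(t).
\]

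\emph{Left derivative.} Use $(r,s,t)\to(s,t-h,t)$, which gives $\tilde D(t)=\exp\big([\omega^{t-h\to t}_\theta]\big)\tilde D(t-h)$. Invert the small factor and use $\omega^{\mathrm{avg}}_\theta(t-h,t,T_t)\to\omega_t$. This yields the same limit, provided $\tilde D$ is continuous. Continuity of $\tilde D$ is immediate: $\omega^{s\to t}_\theta$ is continuous in $t$, and $\exp$ is continuous.

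\emph{Conclusion by uniqueness.} Thus $\tilde D$ is differentiable and satisfies $\dot{\tilde D}=[\omega_t]\tilde D$ with $\tilde D(s)=I$. The true transition $D(s,t)=R_tR_s^\top$ satisfies the same linear ODE, by the spatial convention \eqref{eq:se3-ode}. Since the coefficient $[\omega_t]$ is continuous (indeed constant on the geodesic path), Picard--Lindelöf uniqueness gives $\tilde D(t)=R_tR_s^\top$. Taking $s=0$ and $t=1$ then gives $\hat R_0=\exp\big(-[\omega^{0\to1}_\theta]\big)R_1=R_0$.

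\emph{Translation.} The argument is the abelian analogue. Additivity gives $p^{s\to t+h}_\theta-p^{s\to t}_\theta=h\,v^{\mathrm{avg}}_\theta(t,t+h,T_{t+h})$, whose quotient by $h$ tends to $v_t$. So $p^{s\to\cdot}_\theta$ solves $\dot p=v_t$ with $p(s)=0$, hence $p^{s\to t}_\theta=x_t-x_s$.

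\emph{Main obstacle.} The delicate step is the one-sided limit. It needs joint continuity of $\omega^{\mathrm{avg}}_\theta$ at the diagonal, including its dependence on the state $T_{t+h}$. It also needs care in deducing two-sided differentiability from the two one-sided limits. I would handle this by checking the left quotient explicitly as above; alternatively, one can note that a continuous function whose right derivative exists everywhere and is continuous is differentiable.
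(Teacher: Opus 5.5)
Your proof is correct and uses the same strategy as the paper: you derive a linear ODE for $\exp([\omega^{\cdot\to\cdot}_\theta])$ from the semigroup identity and the boundary condition, then conclude by uniqueness against $D(s,t)=R_tR_s^\top$. The only difference is direction: the paper fixes the terminal time $t$ and differentiates in $s$, obtaining $\tfrac{d}{ds}G(s)=-G(s)[\omega_s]$ with $G(t)=I$, whereas you fix $s$ and differentiate forward in $t$; you are also more careful than the paper about two-sided differentiability.
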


\begin{proof}
Fix $t$ and set $G(s):=\exp\big([\omega^{s\to t}_\theta]\big)\in\mathrm{SO}(3)$,
so $G(t)=I$. For $h>0$ the semigroup identity applied to
$s\le s+h\le t$ gives
$G(s)=G(s+h)\,\exp\big([\omega^{s\to s+h}_\theta]\big)$, hence
\[
G(s+h)=G(s)\,\exp\big(-[\omega^{s\to s+h}_\theta]\big).
\]
By the boundary condition and continuity,
$\omega^{s\to s+h}_\theta=h\,\omega^{\mathrm{avg}}_\theta(s,s+h)=h\,\omega_s+o(h)$,
so $\exp\big(-[\omega^{s\to s+h}_\theta]\big)=I-h[\omega_s]+o(h)$ and
therefore
\[
\frac{d}{ds}G(s)=-\,G(s)\,[\omega_s],\qquad G(t)=I.
\]
On the other hand $D(s,t)=R_tR_s^\top$ obeys, using $\dot R_s=[\omega_s]R_s$ and
skew-symmetry,
\[
\frac{d}{ds}D(s,t)=R_t\big(\partial_s R_s^\top\big)
=-\,R_tR_s^\top[\omega_s]=-\,D(s,t)\,[\omega_s],
\qquad D(t,t)=I.
\]
$G$ and $D(\cdot,t)$ solve the same linear ODE with the same terminal condition,
so by uniqueness $G(s)=D(s,t)=R_tR_s^\top$ for all $s\le t$. Taking $s=0$, $t=1$
gives $\exp\big([\omega^{0\to1}_\theta]\big)=R_1R_0^\top$, i.e.,\
$\hat R_0=R_0$. For translation, the additive cocycle $p_\theta(s,t)$ with
diagonal derivative $v^{\mathrm{avg}}_\theta(t,t)=v_t$ integrates to
$p_\theta(s,t)=x_t-x_s$, giving $\hat x_0=x_0$.
\end{proof}

Proposition~\ref{prop:sg-consistency} shows that \eqref{eq:sg-total} is a valid
training objective: its global minimizer is the exact average-velocity field, and
both ingredients are necessary --- the boundary term supplies the instantaneous
data velocity, and the semigroup term is the curvature-exact propagation rule that
extends it to every interval.

\subsection{Relation to the Differential Identity and to $\alpha$-Flow}

\begin{remark}[Relation to the left-Jacobian identity]\label{rmk:sg-equiv}
The semigroup loss \eqref{eq:sg-total} and the differential MeanFlow loss
\eqref{eq:diff-loss} have the same minimizer but realize the left Jacobian
differently. Differentiating the boundary-anchored semigroup identity
\eqref{eq:sg-rot-identity} with respect to $t$ at $s\to t$ reproduces the differential identity
\eqref{eq:omega_identity}, in which $J$ appears explicitly as the Fr\'echet
derivative of $\exp$. In the finite form $J$ never appears as a separate factor:
its entire effect is absorbed into the BCH series of
$\log(\exp\cdot\exp)$ in \eqref{eq:sg-rot-target}. The two
formulations are thus gradient-equivalent to first order, but the finite form is
JVP-free and, because all segment angles are bounded by $\pi$, requires no
differentiation of $f_\theta$ at any point.
\end{remark}

\begin{remark}[$\alpha$-Flow as an instance]\label{rmk:sg-alpha}
The JVP-free $\alpha$-Flow objective of Section~\ref{sec:alpha-flow} is a
particular instantiation of \eqref{eq:sg-total}. Choosing the split point
$m=\alpha s+(1-\alpha)t$ and replacing the near-segment model prediction by the
\emph{data} velocity $\tilde\omega_t=\omega_t$ (so that
$D(m,t)=\exp(\delta[\omega_t])$ with $\delta=\alpha(t-s)$), while keeping
the far segment as the stop-gradient model evaluation, turns
\eqref{eq:sg-rot-target} into the $\alpha$-Flow target \eqref{eq:af-rot-target}
and \eqref{eq:sg-trans-target} into its translation counterpart. The limit
$\alpha=1$ makes the far segment vanish and reduces \eqref{eq:sg-total} to the
flow-matching boundary \eqref{eq:sg-bd-loss}; the limit $\alpha\to0$ recovers, via
BCH, the differential loss (Proposition~\ref{prop:af-meanflow-limit}). The
``pure'' semigroup form \eqref{eq:sg-rot-target}--\eqref{eq:sg-trans-target},
using the model on both segments, corresponds to the Shortcut choice
$\tilde\omega_t=\omega^{\mathrm{avg}}_\theta(m,t,T_t)$.
\end{remark}

\section{Related Work}\label{sec:related}

\paragraph{Earlier and alternative 6-DoF grasp generators.} Learned 6-DoF parallel-jaw grasp synthesis spans several paradigms that predate or complement diffusion and flow-based generators. Candidate-based methods first construct grasp hypotheses and then learn to score them. GPD samples geometrically plausible candidates and classifies them using local point-cloud geometry~\citep{tenpas2017grasp}, while PointNetGPD uses a PointNet-based evaluator on the points observed within the gripper volume~\citep{liang2019pointnetgpd}. 6-DoF GraspNet introduced a distributional alternative based on a conditional variational autoencoder, together with a learned evaluator for scoring and iterative refinement~\citep{mousavian20196}. A complementary line amortizes candidate generation using dense or single-shot predictors: S4G directly predicts scene-level $\mathrm{SE}(3)$ grasp proposals~\citep{qin2020s4g}, Contact-GraspNet anchors full poses at observed contact points~\citep{sundermeyer2021contactgraspnet}, and VGN predicts grasp quality, orientation, and gripper width over a TSDF volume~\citep{breyer2021vgn}. CAPGrasp is a more recent task-specific generative sampler that enforces $\mathbb{R}^3\!\times\!\mathrm{SO}(2)$ equivariance under a prescribed approach-direction constraint~\citep{weng2024capgrasp}. 

These methods are complementary to our approach but are not included in the NFE-matched comparison. Their computational budgets are governed by candidate or anchor counts, evaluator or refinement passes, and method-specific post-processing rather than by evaluations of an iterative pose-space field; CAPGrasp additionally targets approach-constrained generation. We therefore restrict the NFE sweep to iterative generative baselines evaluated under the shared pose-distribution protocol.

\paragraph{EquiGraspFlow.}
\citet{lim2024equigraspflow} generate grasp poses with a conditional continuous normalizing flow on $\mathrm{SE}(3)$, driven by time-dependent angular and linear velocity fields (their time convention runs prior $t{=}0 \to$ data $t{=}1$; we keep their notation in this subsection):
\begin{align}
\dot{R} = [\omega_\theta(t, \mathcal{P}, T)]\,R, \qquad \dot{x} = v_\phi(t, \mathcal{P}, T), \qquad t \in [0,1], \nonumber
\end{align}
which transports the prior $p_0(T|\mathcal{P}) = p_0(R)\,p_0(x|\mathcal{P})$, uniform on $\mathrm{SO}(3)$ times a Gaussian centered at the point-cloud mean, to the distribution of successful grasps. Training uses flow matching with a geodesic rotation path and a straight translation path, whose per-sample targets are
\begin{align}
[\omega^*_t(R \mid R_1)] = \frac{\log(R^\top R_1)}{1-t}, \qquad v^*_t(x \mid x_1) = \frac{x_1 - x}{1-t}, \nonumber
\end{align}
and the loss
\begin{align}
\mathcal{L} = \mathbb{E}_{t,\,T_1 \sim q(T|\mathcal{P}),\,T \sim p_t(T|T_1)} \Big[ \tfrac{1}{2}\big\lVert [\omega_\theta(t,\mathcal{P},T)] - [\omega^*_t(R|R_1)] \big\rVert_F^2 + \big\lVert v_\phi(t,\mathcal{P},T) - v^*_t(x|x_1) \big\rVert^2 \Big]. \nonumber
\end{align}
Equivariance is obtained by construction: if the prior is $\mathrm{SE}(3)$-invariant and the fields satisfy $u(t, T'\mathcal{P}, T'T) = R'\,u(t,\mathcal{P},T)$ for all $T' = (R', x') \in \mathrm{SE}(3)$, then every intermediate distribution $p_t(T|\mathcal{P})$ is $\mathrm{SE}(3)$-invariant (their Prop.~1). This is implemented with a VN-DGCNN encoder, VN-MLP heads, and an equivariant lifting layer that maps the scalar time onto a learned equivariant direction. Conditioning uses classifier-free guidance for flow matching (condition dropout $20\%$; guidance $\beta{=}2$ at sampling), and the native sampler is a fourth-order Runge--Kutta--Munthe-Kaas integrator with 20 steps (i.e.,, 80 field evaluations).

\emph{Relation to our work.} GraspMeanFlow keeps this entire construction --- encoder, heads, prior, guidance, and the equivariance argument --- changes only what the field \emph{means}: instead of the instantaneous velocity at a single time, our field outputs the interval-averaged velocity conditioned on $(t, h{=}t{-}s)$, lifted through the same equivariant-direction mechanism as their time input. Equivariance therefore carries over unchanged, while one evaluation of the field executes a full interval jump rather than one integration step.

\paragraph{SE(3)-DiffusionFields.}
\citet{urain2023se3dif} learn a smooth energy field $E_{\theta}(H, k)$ over grasp poses $H \in \mathrm{SE}(3)$ (conditioned on a learned object shape code) and define the score as its Lie derivative, $s_\theta(H,k) = -DE_\theta(H,k)/DH \in \mathbb{R}^6$. Training is denoising score matching with Gaussian perturbations in the Lie algebra: with noise scales $\sigma_1 < \cdots < \sigma_L$,

\begin{align}
\hat{H} = H\,\mathrm{Expmap}(\epsilon), \quad \epsilon \sim \mathcal{N}(0, \sigma_k^2 I),
\qquad
\mathcal{L}_{\mathrm{dsm}} =
\frac{1}{L}\sum_{k} \mathbb{E}_{H, \hat{H}}
\left\lVert s_\theta(\hat{H}, k) -
\frac{D \log q(\hat{H} \mid H, \sigma_k I)}{D\hat{H}} \right\rVert,
\nonumber
\end{align}
jointly with an object-SDF regression. Sampling runs inverse annealed Langevin MCMC on $\mathrm{SE}(3)$,
\begin{align}
H_{k-1} = \mathrm{Expmap}\!\Big( \tfrac{\alpha_k^2}{2}\, s_\theta(H_k, k)
+ \alpha_k\, \epsilon \Big) H_k,
\qquad \epsilon \sim \mathcal{N}(0, I),
\nonumber
\end{align}
annealing from the largest noise scale down; the learned energy doubles as a differentiable grasp cost for joint grasp-and-motion optimization.

\emph{Relation to our work.} Because it is an annealed MCMC sampler, generation quality depends on a long chain (its native operating point in our protocol is $T{=}240$), and it degrades sharply in the few-step regime (Table~\ref{tab:success}). Our simulated success protocol ports their public \texttt{isaac\_evaluation} harness, applied identically to every method (Appendix~\ref{sec:sim}).

\paragraph{BRIDGER.}
\citet{chen2024bridger} replace the uninformative noise prior of diffusion policies with an arbitrary \emph{source} policy via stochastic interpolants. With source samples $a_0 \sim \pi_0(\cdot|x)$ (for grasping, a hand-crafted heuristic prior that places poses around the
object) and expert samples $a_1 \sim \pi_1(\cdot|x)$, the interpolant
\begin{align}
a_t = \alpha(t)\,a_0 + \beta(t)\,a_1 + \gamma(t)\,z,
\qquad z \sim \mathcal{N}(0, I),
\qquad \gamma(t) = d\sqrt{2t(1-t)},
\nonumber
\end{align}
(with, e.g., $\alpha(t) = (1-t)^3$, $\beta(t) = 1-(1-t)^3$) connects the two at $t{:}\,0 \to 1$. Two networks are trained --- a velocity $b_\theta$ regressing $\partial_t I + \dot{\gamma} z$ and a
reparameterized score $\hat{s}_\eta$ regressing $-z$ --- and generation integrates the forward SDE
\begin{align}
da_t = \big[ b(t, a_t, x) + \epsilon(t)\, s(t, a_t, x) \big]\, dt
+ \sqrt{2\epsilon(t)}\, dW_t
\nonumber
\end{align}
by Euler--Maruyama from a source sample.

\emph{Relation to our work.} BRIDGER injects prior knowledge through the source distribution rather than the objective; its released sampler additionally suppresses noise over the final $30\%$ of the trajectory and amplifies the learned drift, which explains its non-monotone behavior at very small step counts (Appendix~\ref{sec:protocol}). Our approach is complementary: we keep the standard noise-to-data formulation and make the \emph{field itself} represent finite intervals, which is what enables few-step generation without an informative prior.


\paragraph{Riemannian MeanFlow (concurrent work).}
Two concurrent works, both titled Riemannian MeanFlow \citep{zhong2026riemannian,woo2026riemannian}, extend MeanFlow to general Riemannian manifolds. Both are manifold-generative methods evaluated on synthetic geometries (spheres, tori, $\mathrm{SO}(3)$), DNA sequence design, and unconditional protein backbones; neither targets 6-DoF grasp synthesis, point-cloud conditioning, or $\mathrm{SE}(3)$-equivariance. They also differ from our formulation in how the interval-averaged velocity is defined. \citet{zhong2026riemannian} define it by parallel-transporting instantaneous velocities along the trajectory into a common tangent space, which is \emph{path-dependent}: it requires the full trajectory $\{x_\tau\}$ rather than the endpoints alone. We instead define the average velocity through the time-ordered exponential $\exp\big((t\!-\!s)[\omega^{\mathrm{avg}}]\big) =\mathcal{T}\!\exp\!\big(\int_s^t[\omega_\tau]\,\mathrm{d}\tau\big)$, which depends only on the endpoints $R_s,R_t$ and closes exactly under a single exponential step; the two definitions agree only for a constant (geodesic) field, and otherwise differ because the transported integral accumulates the tangent contributions along the path while ours depends only on the endpoints. For training, \citet{woo2026riemannian} use an endpoint semigroup objective on a sampled intermediate time. Our Stage-2 objective is closest to this, but is anchored differently: we pair semigroup consistency with an explicit flow-matching boundary term \eqref{eq:bd-loss} on the diagonal $s=t$, which ties the field to the marginal flow at every $t$, and we warm up with an $\alpha$-Flow stage \eqref{eq:af-target-main} whose near segment is supplied by the data velocity.

\paragraph{SE(3)-MeanFlow for protein backbones (concurrent work).}
\citet{bai2026se3meanflow} use the same $\mathrm{SE}(3)$ average-velocity formulation as ours, including the time-ordered exponential definition and the $\alpha$-Flow warm-up, for few-step protein backbone generation. The difference is the frame: they express the Lie-algebra velocity in the body frame $\dot R_t=R_t[\omega_t]$, whereas we use the spatial frame $\dot R_t=[\omega_t]R_t$, since under a scene rotation the spatial velocity transforms covariantly and matches the output type of a vector-neuron head, while the body-frame velocity is rotation-invariant (Appendix~\ref{sec:arch}).

\section{Architecture}
\label{sec:arch}
Our network is the EquiGraspFlow~\citep{lim2024equigraspflow} backbone, unchanged except for the additions required to turn a single-time flow-matching model into a two-time MeanFlow model. This section describes the modifications and explains why the $\mathrm{SE}(3)$-equivariance of the original design is preserved; the formal statement and proof are given in Proposition~\ref{prop:net-equivariant}.

\paragraph{Backbone.}
The point-cloud encoder is a VN-DGCNN with channel widths $1{-}21{-}21{-}42{-}85{-}170{-}341$ and $k{=}40$ neighbors, producing an equivariant object feature $z \in \mathbb{R}^{341\times 3}$; the velocity head is a vector-neuron MLP with widths $347{-}256{-}256{-}128{-}128{-}128{-}2$ that maps the object feature, the current pose (fed as the vector channels $(R_1, R_2, R_3, x)$, the columns of $R$ plus the translation), and the time inputs to a rotational and a translational velocity. Translation equivariance is handled, as in EquiGraspFlow, by centering the point cloud and pose at the point-cloud mean. The trunk, widths, and all other components are inherited unchanged; the full model has $633{,}803$ parameters versus $633{,}291$ for the backbone --- the modification below accounts for 512 parameters (${<}0.1\%$).

\paragraph{Two-time conditioning.}
A MeanFlow field is conditioned on a time pair rather than a single time. We condition on $(t, h)$ with $h = t - s$ (current time and interval length, following MeanFlow), reusing EquiGraspFlow's equivariant lifting mechanism: each scalar multiplies a single shared learned equivariant direction $d = f_{\mathrm{equi}}(V)$, a linear vector-neuron projection of the concatenated feature and pose channels $V$. The two resulting channels $t\,d$ and $h\,d$ are appended to the head input (width 347 vs.\ the baseline's 346, which appends only $t\,d$). Because $f_{\mathrm{equi}}(V Q^\top) = f_{\mathrm{equi}}(V)\,Q^\top$ for any rotation $Q$ and the times are rotation-invariant scalars, the added channels transform exactly like every other vector channel, so the field remains equivariant with two time inputs for the same reason it was with one.

\paragraph{Output frame and equivariance.}
Throughout, angular velocities are expressed in the \emph{spatial} frame
($\dot R=[\omega]\,R$, the standard robotics convention), and our formulation,
training targets, and samplers are all built in this frame. The head therefore
outputs the rotational velocity in the spatial frame, and the sampler applies it
directly through the left-invariant exponential update
$\exp\!\big(-(t-s)[\omega^{\mathrm{avg}}]\big)R$, with no change of frame.
This choice is forced by equivariance: under a scene rotation $Q$, the spatial
angular velocity transforms covariantly ($\omega\mapsto Q\omega$), matching the
transformation type of a vector-neuron output. The body-frame velocity
$\omega_{\mathrm{body}}=\mathrm{vee}(R^\top\dot R)$, by contrast, is
rotation-\emph{invariant}, since $R^\top\dot R$ is unchanged when $R\mapsto QR$.
An equivariant vector head regressing a rotation-invariant target has zero
conditional mean over rotation-augmented data, so a body-frame parameterization
cannot represent the average-velocity field with an equivariant head; the spatial
parameterization is what makes an equivariant average-velocity field trainable.

\paragraph{Distributional equivariance.}
The prior is $\mathrm{SE}(3)$-invariant (uniform rotations; translations from a standard Gaussian, which is centered at the point-cloud centroid because the loader zero-centers every cloud), the guided field is a convex combination of two equivariant fields (the null condition $z{=}0$ is itself fixed under rotations), and every inference update --- the Euler jump and the exponential-$\mathrm{SO}(3)$ endpoint step --- composes equivariant maps. By the same argument as EquiGraspFlow's Prop.~1, extended to interval-averaged velocities in Proposition~\ref{prop:net-equivariant}, transporting an invariant prior through an equivariant flow map yields an $\mathrm{SE}(3)$-invariant conditional grasp distribution: rotating the object rotates the generated grasp set identically.

\section{Theoretical Properties of Equivariant Flows}

\label{sec:mf-theory}

\begin{proposition}[Equivariance of the two-time velocity field]
\label{prop:net-equivariant}
The two-time average-velocity field
$(\omega^{\mathrm{avg}}_\theta,v^{\mathrm{avg}}_\theta)(s,t,\mathcal{P},T)$
of Appendix~\ref{sec:arch} is $\mathrm{SE}(3)$-equivariant in the sense of
Definition~\ref{def:equi-field}: for every $T'=(R',x')\in\mathrm{SE}(3)$,
\[
(\omega^{\mathrm{avg}}_\theta,v^{\mathrm{avg}}_\theta)(s,t,T'\mathcal{P},T'T)
=R'\,(\omega^{\mathrm{avg}}_\theta,v^{\mathrm{avg}}_\theta)(s,t,\mathcal{P},T).
\]
\end{proposition}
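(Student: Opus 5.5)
The plan is to prove the statement by tracking how each input channel of the network transforms under $T'=(R',x')$. The translation part is handled by centering, and the rotation part by the layerwise equivariance of vector-neuron layers. Throughout, a collection of vector channels is written as a matrix $V\in\mathbb{R}^{C\times3}$ acted on by $V\mapsto VR'^\top$. Every vector-neuron layer (linear, nonlinearity, pooling, the DGCNN edge features built from differences of neighbouring points, and the VN-MLP head) is assumed to satisfy $f(VR'^\top)=f(V)R'^\top$, as established for the EquiGraspFlow backbone.

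\emph{Step 1 (translation reduction).} The network centers both the cloud and the pose at $\bar{\mathbf p}$. Under $T'$ the points become $R'\mathbf p_k+x'$, so the mean becomes $R'\bar{\mathbf p}+x'$. The centered points therefore become $R'(\mathbf p_k-\bar{\mathbf p})$, and the centered pose translation becomes $R'x+x'-(R'\bar{\mathbf p}+x')=R'(x-\bar{\mathbf p})$, while the rotation becomes $R'R$. After centering, $T'$ acts only through left multiplication by $R'$: $x'$ disappears entirely, and it suffices to prove $\mathrm{SO}(3)$-equivariance. The DGCNN kNN graph uses distances, which are invariant, so the neighbourhood structure is unchanged. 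If the translation output is defined in the uncentered frame it is a velocity (a difference), so no shift needs to be added back.

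\emph{Step 2 (input channels).} Each point $R'\mathbf p_k$ is one vector channel rotated by $R'$, so by equivariance of the VN-DGCNN the object feature satisfies $z\mapsto zR'^\top$. The pose channels are the columns $R_i$ of $R$ together with $x$. Since $(R'R)_i=R'R_i$, each column is also rotated by $R'$, which is where the left action on $\mathrm{SO}(3)$ matters. The concatenation $V$ thus transforms as $V\mapsto VR'^\top$.

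\emph{Step 3 (two-time lifting and head).} Linearity of the VN projection gives $d=f_{\mathrm{equi}}(V)\mapsto dR'^\top$, and $t,h=t-s$ are invariant scalars, so $t\,d$ and $h\,d$ transform like vectors. The full 347-channel head input therefore transforms as $\cdot\,R'^\top$. Composing the equivariant VN-MLP layers shows that the two output vectors $(\omega^{\mathrm{avg}}_\theta,v^{\mathrm{avg}}_\theta)$ are multiplied by $R'$. For the guided model, the null condition $z=0$ is fixed by rotations, so the affine combination of the conditional and unconditional fields is again equivariant.

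The main obstacle, and the place requiring care, is Step 1 together with the output-frame claim. One must check that the translation branch really needs no $x'$ correction, since it outputs a velocity and not a position. One must also check that the rotational output is the \emph{spatial} angular velocity, which transforms as $\omega\mapsto R'\omega$ consistently with $[R'\omega]=R'[\omega]R'^\top$. The VN head's output vector is exactly this spatial vector, so no frame change intervenes; a body-frame output would instead need to be invariant. The remaining steps are routine compositions of equivariant maps.
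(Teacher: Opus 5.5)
Your proposal is correct and follows the same route as the paper's proof. Both arguments reduce to $T'=(R',0)$ by mean-centering, observe that the encoder features and the pose channels $(R_1,R_2,R_3,x)$ transform covariantly, note that the appended channels $t\,d$ and $h\,d$ stay covariant because $t$ and $h=t-s$ are invariant scalars, and then compose with the equivariant VN-MLP head. Your explicit computation of the centered coordinates, your remarks on kNN invariance and on the output being a velocity, and your treatment of the guided field add detail the paper leaves implicit, but they do not change the structure of the argument.
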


\begin{figure}
    \centering
    \includegraphics[width=0.5\linewidth]{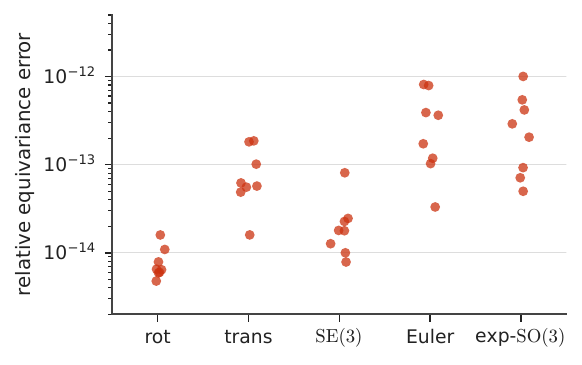}
    \caption{Equivariance of the trained model in double precision. Each dot is one random $T'$: the first three columns probe the two-time field under rotation-only, translation-only, and full $\mathrm{SE}(3)$ actions; the last two probe the composed guided samplers ($\mathrm{NFE}\,5$). All deviations sit at machine precision.} \label{fig:equiv-check}
\end{figure}

\paragraph{Numerical validation of Proposition~\ref{prop:net-equivariant}.} We apply random rigid transformations (Haar-uniform rotations, translations $\sim\mathcal{N}(0,3^{2}I)$) to the point cloud and grasp poses of a test instance, re-apply the pipeline's mean-centering, and compare the trained network's output against $R'$ times the untransformed output at $16$ poses and random time pairs. The field deviates by at most $8\times10^{-14}$, and running the Euler and exp-$\mathrm{SO}(3)$ samplers from the transformed inputs and the correspondingly transformed prior noise reproduces the $T'$-transformed grasps to $1\times10^{-12}$ (Figure~\ref{fig:equiv-check}) --- the flow-map equivariance of Proposition~\ref{prop:flowmap-equi} realized by the implementation. In single precision, the translation test degrades to $8\times10^{-4}$ because of $k$-nearest-neighbor ties in the encoder graph under the centering round-off, rather than because of the architecture.

\begin{lemma}[The single-time VN backbone is $\mathrm{SE}(3)$-equivariant]
\label{lem:vn-equivariant}
Let a vector-list feature be $V=(v_1,\dots,v_C)\in(\mathbb{R}^3)^C$, and let
$Q\in\mathrm{SO}(3)$ act channelwise, $Q\cdot V:=(Qv_1,\dots,Qv_C)$. Every layer
of the vector-neuron backbone of \citet{lim2024equigraspflow}---the equivariant
encoder $\Phi$, the channel-linear map $\mathrm{VNLin}$, and the nonlinearity
$\mathrm{VNReLU}$---commutes with this action,
\[
\Phi(Q\cdot\mathcal{P})=Q\cdot\Phi(\mathcal{P}),\qquad
L(Q\cdot V)=Q\cdot L(V)\quad\text{for }L\in\{\mathrm{VNLin},\mathrm{VNReLU}\},
\]
and hence so does any composition of them.
\end{lemma}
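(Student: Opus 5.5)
The plan is to verify equivariance layer by layer, using the Vector Neuron definitions from \citet{deng2021vector}, and then close under composition.

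\textbf{Channel-linear map.} $\mathrm{VNLin}$ acts by a weight matrix $W\in\mathbb{R}^{C'\times C}$ mixing channels, $L(V)=VW^\top$ when $V$ is written as a $3\times C$ matrix. Left multiplication by $Q$ commutes with right multiplication by $W^\top$, so $L(QV)=QVW^\top=Q\,L(V)$ by associativity of matrix products.

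\textbf{Nonlinearity.} For $\mathrm{VNReLU}$, each output channel is built from a feature $q=W V$ and a learned direction $k=UV$, both equivariant by the linear case. The output is
\[
q-\mathbb{1}[\langle q,k\rangle<0]\,\frac{\langle q,k\rangle}{\|k\|^2}\,k .
\]
The inner product $\langle Qq,Qk\rangle=\langle q,k\rangle$ and the norm $\|Qk\|=\|k\|$ are invariant because $Q^\top Q=I$. Hence the branch indicator and the scalar coefficient are unchanged, and the output is multiplied by $Q$. Any invariant pooling used (mean over points or neighbors) is linear and commutes with $Q$ for the same reason.

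\textbf{Encoder.} The VN-DGCNN encoder $\Phi$ is the step I expect to need the most care, because the $k$-NN graph must be identical before and after rotation. The $k$-NN graph is determined by pairwise distances $\|\mathbf{p}_i-\mathbf{p}_j\|$, which are invariant under rotation, so the neighbor sets coincide for $\mathcal{P}$ and $Q\cdot\mathcal{P}$. Edge features such as $(\mathbf{p}_j-\mathbf{p}_i,\mathbf{p}_i)$ then transform channelwise by $Q$. The edge convolution is a composition of $\mathrm{VNLin}$ and $\mathrm{VNReLU}$, followed by mean or VN max-pooling. VN max-pooling selects by an invariant inner product with a learned equivariant direction, so it is also equivariant. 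Ties in the neighbor or max selection can be broken by index without affecting the argument.

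\textbf{Translations.} To upgrade from $\mathrm{SO}(3)$ to $\mathrm{SE}(3)$, I would use the centering step. Subtracting the centroid $\bar{\mathbf{p}}$ from the cloud and from the pose translation gives $\mathbf{p}_k-\bar{\mathbf{p}}\mapsto R'(\mathbf{p}_k-\bar{\mathbf{p}})$ and $x-\bar{\mathbf{p}}\mapsto R'(x-\bar{\mathbf{p}})$ under $T'$. The translation $x'$ cancels, so the centered inputs see only the rotation $R'$. The pose columns $R_i$ of $R'R$ are $R'R_i$, which is again the channelwise action.

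\textbf{Composition.} Equivariance of a composition then follows by induction on the number of layers: if $f$ and $g$ commute with the action, then $f(g(Q\cdot V))=f(Q\cdot g(V))=Q\cdot f(g(V))$.
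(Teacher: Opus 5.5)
Your proof is correct. For $\mathrm{VNLin}$, $\mathrm{VNReLU}$ and closure under composition it matches the paper's argument: channel mixing commutes with the channelwise action, and the gate and its coefficient depend only on invariant inner products and norms.

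Where you genuinely differ is the encoder. The paper does not verify $\Phi(Q\cdot\mathcal{P})=Q\cdot\Phi(\mathcal{P})$ itself; it cites \citet[Prop.~1]{lim2024equigraspflow}. You instead unpack VN-DGCNN and isolate the only ingredients beyond channelwise linearity: the $k$-NN graph and the max-pooling selection are driven by rotation-invariant quantities, with ties broken by index. The citation is shorter. Your route makes the lemma self-contained, which is worth having here, since the paper's own description of that proposition (in the related-work appendix) concerns invariance of the transported distributions, not the encoder.

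Three points would tighten your encoder step:
\begin{itemize}
\item If the graph is recomputed in feature space at later layers, as in DGCNN, you also need $\|QV_i-QV_j\|=\|V_i-V_j\|$ for the intermediate features. This follows inductively from equivariance of the earlier layers.
\item A cross-product edge feature $\mathbf{p}_j\times\mathbf{p}_i$, as in \citet{deng2021vector}, is equivariant only because $\det Q=1$.
\item VN batch normalization rescales each channel by a function of its invariant norm, so your $\mathrm{VNReLU}$ argument covers it.
\end{itemize}

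Your centering step is not needed for the lemma as stated, which asserts only the $\mathrm{SO}(3)$ action. The paper performs exactly that reduction in the proof of Proposition~\ref{prop:net-equivariant}.
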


\begin{proof}
A channel-linear map acts only on the channel index, $(\mathrm{VNLin}\,V)_j=\sum_c
W_{jc}v_c$, so $\mathrm{VNLin}(Q\cdot V)_j=\sum_c W_{jc}(Qv_c)=Q\sum_c W_{jc}v_c
=(Q\cdot\mathrm{VNLin}\,V)_j$. For $\mathrm{VNReLU}$, write $p=\mathrm{VNLin}_p V$,
$d=\mathrm{VNLin}_d V$; the gate depends on the data only through the inner
products $\langle p_c,d_c\rangle$ and $\langle d_c,d_c\rangle$, which are
invariant since $\langle Qp_c,Qd_c\rangle=\langle p_c,d_c\rangle$. Thus the output
is a combination of the equivariant $p,d$ with invariant scalar coefficients,
whence $\mathrm{VNReLU}(Q\cdot V)=Q\cdot\mathrm{VNReLU}(V)$. Equivariance of $\Phi$
is \citet[Prop.~1]{lim2024equigraspflow}. Composition preserves the intertwining
relation, so the whole backbone is equivariant.
\end{proof}

\begin{lemma}[The two-time injection preserves equivariance]
\label{lem:two-time-injection}
Fix $(s,t)$ and set $h:=t-s$. Let $L=\mathrm{VNLin}$ be equivariant and define the
injection
\[
\iota(V,s,t):=\bigl[\,V\ ;\ t\,L(V)\ ;\ h\,L(V)\,\bigr],
\]
appending the two channels $t\,L(V)$ and $h\,L(V)$ to the vector-list $V$. Then
$\iota$ is $\mathrm{SO}(3)$-equivariant in $V$:
$\iota(Q\cdot V,s,t)=Q\cdot\iota(V,s,t)$ for all $Q\in\mathrm{SO}(3)$.
\end{lemma}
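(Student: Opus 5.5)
The argument is a direct channel-by-channel check. It needs only the equivariance of $L=\mathrm{VNLin}$ from Lemma~\ref{lem:vn-equivariant} and the fact that $s,t,h$ are scalars on which the group does not act. Write out $\iota(Q\cdot V,s,t)$ and treat its three blocks in turn.

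\emph{First block.} This is just $Q\cdot V$, which is by definition the first block of $Q\cdot\iota(V,s,t)$.

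\emph{Appended channels.} Use $L(Q\cdot V)=Q\cdot L(V)$ from Lemma~\ref{lem:vn-equivariant}. Since $t$ is a real scalar and $Q$ acts linearly on each $\mathbb{R}^3$ channel, $t\,(Q\,L(V)_c)=Q\,(t\,L(V)_c)$. Hence $t\,L(Q\cdot V)=Q\cdot(t\,L(V))$. The same computation with $h$ in place of $t$ gives $h\,L(Q\cdot V)=Q\cdot(h\,L(V))$. The point to record here is that $h=t-s$ depends only on the times and not on $V$, so it is unchanged under the action.

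\emph{Concatenation.} Since $Q$ acts channelwise, concatenation of vector-lists commutes with the action: $[Q\cdot A;\,Q\cdot B;\,Q\cdot C]=Q\cdot[A;B;C]$. Assembling the three blocks therefore gives $\iota(Q\cdot V,s,t)=Q\cdot\iota(V,s,t)$.

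There is no real obstacle. The only step needing care is to make explicit that the time scalars enter multiplicatively, as invariant coefficients, and not through an additive shift. An additive shift, such as the adaLN bias mentioned in the MeanFlow background, would break the argument. This step is also what later lets the injection be composed with the remaining VN layers of Lemma~\ref{lem:vn-equivariant} in the proof of Proposition~\ref{prop:net-equivariant}.
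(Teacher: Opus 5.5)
Your proposal is correct and follows essentially the same route as the paper's proof: the times $t$ and $h=t-s$ are invariant scalars, equivariance of $L$ from Lemma~\ref{lem:vn-equivariant} makes each appended channel transform covariantly, and concatenation of covariant channels is covariant. Your explicit check of the first block $Q\cdot V$ and your remark that the times must enter multiplicatively rather than as an additive shift are details the paper leaves implicit or states elsewhere.
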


\begin{proof}
The scalars $t$ and $h=t-s$ are functions of $(s,t)$ alone, hence invariant under
the action on $V$. By Lemma~\ref{lem:vn-equivariant}, $L(Q\cdot V)=Q\,L(V)$, so
each appended channel transforms covariantly,
\[
t\,L(Q\cdot V)=t\,\bigl(Q\,L(V)\bigr)=Q\,\bigl(t\,L(V)\bigr),\qquad
h\,L(Q\cdot V)=Q\,\bigl(h\,L(V)\bigr),
\]
using that $t,h$ are invariant scalars commuting with $Q$. Since concatenation of
covariant channels is covariant, $\iota(Q\cdot V,s,t)=Q\cdot\iota(V,s,t)$.
\end{proof}

\begin{proof}[Proof of Proposition~\ref{prop:net-equivariant}]
The point cloud and pose are mean-centered before the network, which removes the
absolute translation $x'$; it therefore suffices to treat $T'=(R',0)$, so that the
scene action is $\mathcal{P}\mapsto R'\mathcal{P}$, $T=(R,x)\mapsto R'T=(R'R,R'x)$,
while $(s,t)$ is unchanged.

Write the network as the composition
\[
(\omega^{\mathrm{avg}}_\theta,v^{\mathrm{avg}}_\theta)(s,t,\mathcal{P},T)
=F\Bigl(\iota\bigl(V(\mathcal{P},T),\,s,t\bigr)\Bigr),\qquad
V(\mathcal{P},T)=\bigl[\Phi(\mathcal{P})\ ;\ (R,x)\bigr],
\]
where $V$ collects the equivariant encoder features and the pose vectors,
$\iota$ is the two-time injection of Lemma~\ref{lem:two-time-injection}, and $F$
is the VN-MLP head whose six output coordinates are read as the two equivariant
$3$-vectors $(\omega^{\mathrm{avg}}_\theta,v^{\mathrm{avg}}_\theta)$.

Under $T'=(R',0)$ the input vector-list transforms covariantly,
$V(R'\mathcal{P},R'T)=R'\cdot V(\mathcal{P},T)$, by equivariance of $\Phi$
(Lemma~\ref{lem:vn-equivariant}) and by $R\mapsto R'R$, $x\mapsto R'x$. Applying
Lemma~\ref{lem:two-time-injection} and then Lemma~\ref{lem:vn-equivariant} to
$F$,
\[
F\bigl(\iota(V(R'\mathcal{P},R'T),s,t)\bigr)
=F\bigl(\iota(R'\cdot V(\mathcal{P},T),s,t)\bigr)
=F\bigl(R'\cdot\iota(V(\mathcal{P},T),s,t)\bigr)
=R'\,F\bigl(\iota(V(\mathcal{P},T),s,t)\bigr).
\]
Reading off the two output $3$-vectors, this is exactly
\[
(\omega^{\mathrm{avg}}_\theta,v^{\mathrm{avg}}_\theta)(s,t,T'\mathcal{P},T'T)
=R'\,(\omega^{\mathrm{avg}}_\theta,v^{\mathrm{avg}}_\theta)(s,t,\mathcal{P},T),
\]
which is the claimed equivariance of Definition~\ref{def:equi-field}. The only
departure from the single-time architecture of \citet{lim2024equigraspflow} is
the extra input $h=t-s$, an invariant scalar, which by
Lemma~\ref{lem:two-time-injection} leaves the intertwining relation intact.
\end{proof}

\subsection{Proof of Proposition~\ref{prop:flowmap-equi}}
\begin{proof}
Write $\omega=\omega^{\mathrm{avg}}_\theta(s,t,\mathcal{P},T_t)$, $v=v^{\mathrm{avg}}_\theta(s,t,\mathcal{P},T_t)$, and $h=t-s$. Using
$[R'\omega]=R'[\omega]R'^{\top}$ and hence
$\exp(-h[R'\omega])=R'\,\exp(-h[\omega])\,R'^{\top}$, the rotation
component is $R'\exp(-h[\omega])R'^{\top}R'R_t=R'\exp(-h[\omega])R_t$,
and the translation component is $(R'x_t+x')-hR'v=R'(x_t-hv)+x'$. Closure under
composition is immediate.
\end{proof}

The same argument covers the endpoint-style samplers of
Section~\ref{sec:sampling}, in both their linear and exponential rotation
schedules. Each forms the data-endpoint prediction $\hat R_0$, equivariant by
$\hat R_0(s,t,T'\mathcal{P},T'T_t)=R'\hat R_0(s,t,\mathcal{P},T_t)$, and updates the
rotation as $\exp(-\Delta t\,[A])R_t$ with $A$ proportional to the spatial
log-displacement $\log(R_t\hat R_0^\top)^\vee$. Under a scene rotation this is
covariant,
$\log\big(R'(R_t\hat R_0^\top)R'^\top\big)^\vee=R'\log(R_t\hat R_0^\top)^\vee$,
by the conjugation identity, so $A\mapsto R'A$ and
$\exp(-\Delta t\,[R'A])R'R_t=R'\,\exp(-\Delta t\,[A])R_t$. The two
schedules differ only in the scalar rate on $\log(R_t\hat R_0^\top)^\vee$, which
does not affect equivariance; the translation update is linear and equivariant in
both.

\subsection{Proof of Proposition~\ref{prop:fewstep-inv}}
\begin{proof}
For measurable $A\subseteq\mathrm{SE}(3)$, equivariance gives
$\Phi_{T'\mathcal{P}}^{-1}(T'A)=T'\Phi_{\mathcal{P}}^{-1}(A)$: writing $T=T'S$,
$\Phi_{T'\mathcal{P}}(T'S)=T'\Phi_{\mathcal{P}}(S)\in T'A$ iff
$S\in\Phi_{\mathcal{P}}^{-1}(A)$. Hence
\begin{align*}
(\Phi_{\#}p_1)(T'A\mid T'\mathcal{P})
&=p_1\big(T'\Phi_{\mathcal{P}}^{-1}(A)\mid T'\mathcal{P}\big)\\
&=p_1\big(\Phi_{\mathcal{P}}^{-1}(A)\mid\mathcal{P}\big)
=(\Phi_{\#}p_1)(A\mid\mathcal{P}),
\end{align*}
the middle step by invariance of the prior.
\end{proof}

\section{Optimal-Transport Coupling: Minibatch versus Per-Object}
\label{sec:ot-coupling}

Flow matching is trained on pairs $(T_0,T_1)$ of a data grasp and a prior
sample. Rather than pairing them independently at random, we couple them within
each minibatch by optimal transport~\citep{villani2009optimal}, which
straightens the interpolation paths and lowers gradient
variance~\citep{bose2023se,tong2023improving,pooladian2023multisample}. On $\mathrm{SE}(3)$ we use
the squared geodesic cost
\begin{equation}
    c(T_0,T_1)=w_R\,\big\|\log(R_1R_0^\top)^\vee\big\|_2^2
              +w_x\,\big\|x_0-x_1\big\|_2^2 ,
    \label{eq:ot-cost}
\end{equation}
and solve a linear assignment to obtain the coupling. There are two ways to pose
this assignment, and the choice matters for both correctness and cost.

\paragraph{Global (minibatch) OT.} The standard
choice~\citep{pooladian2023multisample,bose2023se} solves a single assignment
over \emph{all} $B'=\sum_{i} J_i$ grasp--noise pairs in the minibatch, mixing
the $J_i$ grasps of every object $\mathcal{P}_i$ into one problem. In the
conditional setting this is a mismatch on two counts. First, the field is conditioned per object: what it learns is the
transport $p_1(\cdot\mid\mathcal{P}_i)\to q(\cdot\mid\mathcal{P}_i)$ for each
object separately, so the coupling should be a coupling of \emph{those} two
conditionals. A global plan is instead a joint assignment across objects, and
the prior samples it allots to object $i$ are a subset selected for proximity to
that object's grasps, so the per-object prior marginal it induces is no longer
$p_1$. Second, a linear assignment over $B'$ elements costs $O(B'^{3})$, and
$B'$ is the number of objects per batch times the grasps per object: our batch
of $4$ objects $\times$ up to $256$ grasps already gives a $1024\times1024$
problem, and it grows rapidly as the batch mixes more objects.

\paragraph{Per-object OT.} We instead solve one $J_i\times J_i$ assignment
\emph{within} each object, pairing that object's grasps only with that object's
prior samples:
\begin{equation}
    \pi_i=\arg\min_{\pi\in\mathcal{S}_{J_i}}
          \sum_{j=1}^{J_i} c\big(T_{0,ij},\,T_{1,i\pi(j)}\big),
    \qquad i=1,\dots,N_{\mathrm{batch}} .
    \label{eq:per-object-ot}
\end{equation}
Since only a permutation within each object is applied, the prior sample set of
every object is left unchanged and its marginal is exactly $p_1$, matching the
independent-per-object prior used at inference. It is also far cheaper: for a
batch of $n$ objects with $J$ grasps each, global OT costs $O(n^{3}J^{3})$ while
per-object OT costs $O(n\,J^{3})$, an $O(n^{2})$ reduction. We therefore use
per-object OT throughout; with $w_R=w_x=1$, the coupling adds negligible
overhead to each training step.

\paragraph{The coupling preserves equivariance.}
The cost \eqref{eq:ot-cost} is invariant under a global rigid motion: for any
$T'=(R',x')$ one has
$\log\big((R'R_1)(R'R_0)^{\top}\big)=R'\log(R_1R_0^{\top})R'^{\top}$, whose
$\vee$-norm is unchanged, and
$\|(R'x_0{+}x')-(R'x_1{+}x')\|=\|x_0-x_1\|$, so $c(T'T_0,T'T_1)=c(T_0,T_1)$.
Since the prior of Section~\ref{sec:problem} is $\mathrm{SE}(3)$-invariant,
$(R_1,x_1)\sim p_1(\cdot\mid\mathcal{P})$ implies
$T'(R_1,x_1)\sim p_1(\cdot\mid T'\mathcal{P})$, so transforming an object and
its grasps leaves the $J_i\times J_i$ cost matrix entrywise unchanged; the
assignment solver is deterministic given the matrix, so $\pi_i$ is unchanged
even in the presence of ties. The coupled pairs, and therefore the training
targets built from them, are the $T'$-transforms of the untransformed ones: the
coupling does not break the guarantee of Section~\ref{sec:equivariance}.

\section{Training Configuration}
\label{sec:config}
Training runs on a single NVIDIA A100 (40\,GB): 120k steps take ${\sim}7$ hours at ${\sim}204$\,ms/step with a $6.3$\,GB memory footprint. Evaluation and simulation run on nodes with $4{\times}$A100 (one evaluation lane per GPU) using Isaac Gym Preview~4; the software stack is Python~3.10.20 with PyTorch~2.11.0 (CUDA~12.8), Open3D~0.16.0, and POT~0.9.6 under Linux (kernel~6.4). Training uses seed~0; evaluation seeds are $100{+}\mathrm{NFE}$, shared across all methods. Each reported configuration is a single training run, and the objective ablation of Appendix~\ref{sec:ablation} averages three seeds; the lift test is deterministic given the exported grasps. Table~\ref{tab:hyper} lists the hyperparameters of both stages for the two reported models.

\begin{table}[h]
\centering
\small
\setlength{\tabcolsep}{3pt}
\begin{tabular}{lccc}
\toprule
Hyperparameter & Stage 1 ($\alpha$-Flow) & Stage 2: GMF-SG & Stage 2: GMF-JVP \\
\midrule
Objective & $\alpha$-Flow (JVP-free) & \multicolumn{2}{c}{FM boundary + consistency} \\
Initialization & from scratch & \multicolumn{2}{c}{Stage-1} \\
Steps & 18k & 102k & 102k \\
$\alpha$ schedule & $1\to 0.2$ (sigmoid) & --- & --- \\
Boundary term & --- & \multicolumn{2}{c}{flow matching} \\
Consistency term & --- & geodesic semigroup \eqref{eq:sg-loss-geo} & differential identity (JVP) \\
Consistency weight & --- & $1.0$ & $1.7$ \\
Jacobian form & --- & --- & inverse (left) \\
Rotation Huber radius & --- & --- & $100$ \\
Coupling & \multicolumn{3}{c}{independent (GMF-SG) / per-object optimal transport (GMF-JVP), both stages} \\
Loss weights (rot, trans) & \multicolumn{3}{c}{$(0.5, 0.5)$} \\
Loss normalization & \multicolumn{3}{c}{off} \\
Rotation output frame & \multicolumn{3}{c}{spatial} \\
Two-time input ($s\leq t$) & \multicolumn{3}{c}{yes} \\
Time sampler & \multicolumn{3}{c}{$t \sim \mathrm{Unif}(0,1]$, $s \sim \mathrm{Unif}(0,t)$ ($s{=}t$ w.p.\ $0.25$)} \\
CFG dropout / guidance & \multicolumn{3}{c}{$0.2$ / $2.0$} \\
EMA decay & \multicolumn{3}{c}{$0.999$ (EMA weights evaluated)} \\
Optimizer & \multicolumn{3}{c}{Adam, lr $10^{-4}$, weight decay $10^{-6}$} \\
Batch & \multicolumn{3}{c}{4 objects $\times$ up to 256 grasps} \\
Devices & \multicolumn{3}{c}{$1\times$ NVIDIA A100 (40\,GB)} \\
\bottomrule
\end{tabular}
\caption{Training hyperparameters of the two reported models. The
network, schedule, optimizer, and budget are shared; the two differ in the
consistency term, its weight, and the coupling.}
\label{tab:hyper}
\end{table}

\paragraph{Values tried and how the final setting was chosen.}
Hyperparameters carried over from EquiGraspFlow (network widths, optimizer, batch composition, point-cloud scaling) were not re-tuned. The values we did search were tuned under a reduced protocol: a single category trained for 30k steps; the selected setting was then trained at full scale, so no selection was made on the reported four-category runs. For the consistency weight of GMF-JVP, we tried $\{0.05, 1.7\}$ and kept $1.7$, the value at which the two loss terms contribute comparably (their raw magnitudes stand in a ratio of about $0.6$, so the smaller weight leaves the consistency term inactive). For the Huber radius on the rotation residual, we tried $\{20, 50, 100\}$; the three differ by at most $0.007$ EMD, and we kept the loosest, which almost never fires. For the number of objects per step, we tried $\{4, 16\}$ and kept $4$, since the larger batch gave no improvement at a matched number of steps. The $\alpha$-Flow warm-up fraction was tried at $\{0, 0.15\}$ and the training length at $\{120\mathrm{k}, 200\mathrm{k}\}$: the warm-up is optional (Appendix~\ref{sec:ablation}) and 200k steps leave EMD flat and move NFE-20 success by at most $1.6$ points, so we report 18k${+}$102k steps. Selection used the four-category averages of the evaluation protocol: EMD for GMF-SG, and simulated success at NFE~5 for GMF-JVP.

\section{Training Algorithm}
\label{sec:algo}
Algorithm~\ref{alg:training} summarizes one training iteration of the decomposed
SE(3) MeanFlow objective (Stage~2; Stage~1 replaces lines 6--11 with the
$\alpha$-Flow loss of Section~\ref{sec:alpha-flow}, which additionally samples a
second time $s \sim \mathrm{Unif}(0,t)$, set to $t$ with probability $0.25$).
Point clouds are mean-centered as a preprocessing step, so the translation prior of
Section~\ref{sec:problem} is standard normal in the centered space. The map
$\Phi_\theta(s,t,R,x)$ is the average-velocity step \eqref{eq:flow-map}, i.e.,\
$\big(\exp(-(t-s)[\omega^{\mathrm{avg}}_\theta])R,\;
x-(t-s)v^{\mathrm{avg}}_\theta\big)$.

\begin{algorithm}[h]
\caption{GraspMeanFlow training (Stage 2) }
\label{alg:training}
\begin{algorithmic}[1]
\Require dataset $\mathcal{D}$, encoder $f_{\mathrm{enc}}$, two-time
  field $f_\theta$, weight $\lambda_{\mathrm{sg}}$, dropout
  $p_{\mathrm{uncond}}$
\While{not converged}
  \State Sample $(\mathcal{P}, R_0, x_0) \sim \mathcal{D}$;\;\;
    $z \gets f_{\mathrm{enc}}(\mathcal{P})$;\;\;
    $z \gets 0$ w.p.\ $p_{\mathrm{uncond}}$
  \State Sample $R_1 \sim \mathrm{Unif}(\mathrm{SO}(3))$,\;
    $x_1 \sim \mathcal{N}(0, I)$,\;
    $t \sim \mathrm{Unif}(0, 1]$
  \State $\omega_t \gets \log(R_1 R_0^\top)^{\vee}$;\;\;
    $v_t \gets x_1 - x_0$
  \State $R_t \gets \exp\!\big(t\,[\omega_t]\big)\,R_0$;
    \;\; $x_t \gets (1{-}t)\,x_0 + t\,x_1$
  \State $(\omega^{\mathrm{avg}}_\theta, v^{\mathrm{avg}}_\theta)
    \gets f_\theta(t,t,R_t, x_t)$
  \State $\mathcal{L}_{\mathrm{fm}} \gets
    \lVert \omega^{\mathrm{avg}}_\theta - \omega_t \rVert^2
    + \lVert v^{\mathrm{avg}}_\theta - v_t \rVert^2$
  \State $s \sim \mathrm{Unif}(0,t)$;\;\; $m \sim \mathrm{Unif}(s,t)$
  \State $(R_c, x_c) \gets
    \Phi_\theta\big(s, m,\Phi_\theta(m,t,R_t, x_t )\big)$
    \Comment{composed; stop-grad}
  \State $(R_d, x_d) \gets \Phi_\theta(s, t,R_t, x_t)$
    \Comment{direct}
  \State $\mathcal{L}_{\mathrm{sg}} \gets
    \lVert \log(R_d R_c^\top)^{\vee} \rVert^2
    + \lVert x_d - x_c \rVert^2$
  \State Adam step on
    $\mathcal{L}_{\mathrm{fm}} + \lambda_{\mathrm{sg}}
    \mathcal{L}_{\mathrm{sg}}$;\; EMA update
\EndWhile
\end{algorithmic}
\end{algorithm}

\section{Sampling Algorithm}
\label{sec:inference}

We report two samplers. The \emph{Euler} sampler applies \eqref{eq:few-step}
directly on the step schedule: each step subtracts the interval-averaged
velocity over $[t_{k-1},t_k]$, and no pseudocode beyond \eqref{eq:few-step} is
needed. The \emph{endpoint-style} sampler of
Algorithm~\ref{alg:exp-inference} instead queries the field over $[0,t]$ at
every step, forms the data-endpoint prediction that the interval-averaged field
implies, and moves the current pose a fraction $\eta$ of the way toward it. This
is available only because the field is interval-averaged: an instantaneous field
has no endpoint to predict.

Two rotation schedules are covered. The \textsc{linear} schedule uses the
natural remaining-time rate $\eta=\Delta t/t$, which reproduces the Euler update
on the geodesic. The \textsc{exp} schedule uses a constant rate
$\eta=\min(c\,\Delta t,1)$, following the exponential rotation scheduler of
ReQFlow~\citep{yue2025reqflow}; we use $c=10$ and $t_{\min}=10^{-6}$. At the step sizes
reported in the paper ($\mathrm{NFE}\le10$), $\eta$ saturates at $1$, so each
update snaps the rotation to the currently predicted endpoint, which favors
high-feasibility modes when the step budget is small. The accelerated schedule
applies to the rotation only; the translation update is linear in both.
Equivariance of both schedules is verified in Appendix~\ref{sec:mf-theory}.

\begin{algorithm}[h]
\caption{Few-step inference with endpoint prediction (linear and
exponential rotation schedules)}
\label{alg:exp-inference}
\begin{algorithmic}[1]
\Require prior sample $(R_1,x_1)$; guided two-time field
$(\omega^{\mathrm{avg}}_\theta,v^{\mathrm{avg}}_\theta)$; steps $T\ge2$;
$t_{\min}>0$; rotation schedule $\in\{\textsc{linear},\textsc{exp}\}$;
exp rate $c>0$
\Ensure $(\hat R_0,\hat x_0)$
\State $\{t_i\}_{i=0}^{T-1}\gets\mathrm{linspace}(1,t_{\min},T)$
\State $(R,x)\gets(R_1,x_1)$
\For{$i=0,\dots,T-2$}
  \State $t\gets t_i$;\;\; $\Delta t\gets t_i-t_{i+1}$
  \State $(\bar\omega,\bar v)\gets
    (\omega^{\mathrm{avg}}_\theta,v^{\mathrm{avg}}_\theta)(t-\Delta t,t,R,x)$
    \Comment{average velocity over $[t-\Delta t,t]$}
  \State $\hat R_0^\theta\gets\exp(-t[\bar\omega])\,R$;\;\;
    $\hat x_0^\theta\gets x-t\,\bar v$
    \Comment{predicted data endpoint}
  \State $u\gets\log\big(\hat R_0^\theta R^\top\big)^{\vee}$
    \Comment{log-map from $R$ toward $\hat R_0^\theta$}
  \If{schedule $=$ \textsc{linear}}
    \State $\eta\gets\Delta t/t$ \Comment{natural ODE rate (remaining time)}
  \Else
    \State $\eta\gets\min(c\,\Delta t,1)$
      \Comment{\textsc{exp}: constant rate $c$, capped at the endpoint}
  \EndIf
  \State $R\gets\exp(\eta[u])\,R$
  \State $x\gets x-\Delta t\,\bar v$
    \Comment{translation: linear in both schedules}
\EndFor
\State $(\bar\omega,\bar v)\gets
  (\omega^{\mathrm{avg}}_\theta,v^{\mathrm{avg}}_\theta)(0,t_{\min},R,x)$
\State $(\hat R_0,\hat x_0)\gets
  \big(\exp(-t_{\min}[\bar\omega])\,R,\;x-t_{\min}\,\bar v\big)$
  \Comment{final full jump to the endpoint}
\end{algorithmic}
\end{algorithm}

\begin{algorithm}[h]
\caption{GraspMeanFlow-JVP training (Stage 2)}
\label{alg:training_ot}
\begin{algorithmic}[1]
\Require dataset $\mathcal{D}$, encoder $f_{\mathrm{enc}}$, two-time
  field $f_\theta$, weight $\lambda_{\mathrm{mf}}$, dropout
  $p_{\mathrm{uncond}}$, Huber radius $c$
\While{not converged}
  \State Sample objects $\{\mathcal{P}^o\}_{o=1}^{B}$ with grasp sets
    $\{(R_0^{o,i}, x_0^{o,i})\}_{i=1}^{K} \sim \mathcal{D}$;\;\;
    $z^o \gets f_{\mathrm{enc}}(\mathcal{P}^o)$;\;\;
    $z^o \gets 0$ w.p.\ $p_{\mathrm{uncond}}$
  \State Sample $R_1^{o,i} \sim \mathrm{Unif}(\mathrm{SO}(3))$,\;
    $x_1^{o,i} \sim \mathcal{N}(0, I)$
  \For{each object $o$}
    \Comment{per-object OT coupling}
    \State $C^o_{ij} \gets
      w_R\lVert\log(R_1^{o,j}(R_0^{o,i})^{\top})^{\vee}\rVert^2
      + w_x\lVert x_1^{o,j} - x_0^{o,i}\rVert^2$
    \State $\sigma^o \gets
      \arg\min_{\sigma \in \mathfrak{S}_K} \sum_i C^o_{i\,\sigma(i)}$;\;\;
      $(R_1^{o,i}, x_1^{o,i}) \gets (R_1^{o,\sigma^o(i)}, x_1^{o,\sigma^o(i)})$
  \EndFor
  \State Sample $t \sim \mathrm{Unif}(0,1]$,\;\; $s \sim \mathrm{Unif}(0,t)$
  \State $\omega_t \gets \log(R_1 R_0^\top)^{\vee}$;\;\;
    $v_t \gets x_1 - x_0$
  \State $R_t \gets \exp\!\big(t\,[\omega_t]\big)\,R_0$;
    \;\; $x_t \gets (1{-}t)\,x_0 + t\,x_1$
  \State $(\omega^{\mathrm{avg}}_\theta, v^{\mathrm{avg}}_\theta)
    \gets f_\theta(t,t,R_t,x_t)$
    \Comment{flow-matching boundary}
  \State $\mathcal{L}_{\mathrm{fm}} \gets
    \lVert \omega^{\mathrm{avg}}_\theta - \omega_t \rVert^2
    + \lVert v^{\mathrm{avg}}_\theta - v_t \rVert^2$
  \State $\dot s \gets 0$;\;\; $\dot t \gets 1$;\;\;
    $\dot R_t \gets [\omega_t] R_t$;\;\; $\dot x_t \gets v_t$
    \Comment{trajectory tangent}
  \State $\big(\bar\omega_\theta, \bar v_\theta\big),\;
    \big(\dot{\bar\omega}_\theta, \dot{\bar v}_\theta\big)
    \gets \mathrm{jvp}\big(f_\theta;\;(s,t,R_t,x_t),\;
    (\dot R_t,\dot x_t,\dot s,\dot t)\big)$
  \State $\omega^{s\to t}_\theta \gets (t-s)\,\bar\omega_\theta$
  \State $\omega_{\mathrm{tgt}} \gets
    J\big(\mathrm{sg}(\omega^{s\to t}_\theta)\big)^{-1}\omega_t
    - (t-s)\,\dot{\bar\omega}_\theta$;\;\;
    $v_{\mathrm{tgt}} \gets v_t - (t-s)\,\dot{\bar v}_\theta$
  \State $\mathcal{L}_{\mathrm{mf}} \gets \|\bar{\omega}_{\theta}-\text{sg}(\omega_{\text{tgt}})\|^2
    + \lVert \bar v_\theta - \mathrm{sg}(v_{\mathrm{tgt}})\rVert^2$
    \Comment{Huber on the rotation residual}
  \State Adam step on
    $\mathcal{L}_{\mathrm{fm}} + \lambda_{\mathrm{mf}}
    \mathcal{L}_{\mathrm{mf}}$ with gradient-norm clipping;\; EMA update
\EndWhile
\end{algorithmic}
\end{algorithm}

\section{Evaluation Protocol}
\label{sec:protocol}

\paragraph{EMD.}
For each test instance, we generate as many grasps as the instance's ground-truth set contains and compute a Hungarian matching under the cost $$\sqrt{\lVert\log(R_g^\top R_{gt})^{\vee}\rVert_2^2+\lVert x_g-x_{gt}\rVert_2^2}$$ with translations in meters (the benchmark loader's $\times 8$-normalized coordinates rescaled back). Scores are averaged over the three rotations and all objects of a category, then macro-averaged over categories. Our implementation agrees with EquiGraspFlow's released \texttt{SE3\_geodesic\_dist} to four decimals on identical samples, and running their model at its native setting reproduces their published EMD within $0.004$.

\paragraph{NFE-1 success.}
The main success table starts at NFE~2; for completeness, average NFE-1 success rates are: SE(3)-DiF $6.7\%$, BRIDGER $46.0\%$, EquiGraspFlow $17.5\%$, GraspMeanFlow (Euler) $18.7\%$. BRIDGER's NFE-1 spike (and its NFE-2 collapse and rising EMD beyond NFE~5) traces to its released sampler, which suppresses noise over the final $30\%$ of the trajectory and amplifies the learned drift fivefold; at $T{=}1$ this yields a single strongly mode-seeking step from its heuristic prior. Consistent with this, its sample spread doubles from $T{=}1$ to $T{=}2$ and then shrinks monotonically. At its native $T{=}40$ setting, BRIDGER averages $92.6\%$.

\section{Simulation Details}
\label{sec:sim}
Success is evaluated with a floating-gripper lift-and-hold test ported from SE(3)-DiffusionFields' public \texttt{isaac\_evaluation}: the Panda hand closes with constant torque, gravity is applied to the object, and the grasp succeeds if the object stays near its held pose. Table~\ref{tab:simparams} lists all simulator settings, which are identical for every method.

\begin{table}[h]
\centering
\small
\setlength{\tabcolsep}{6pt}
\begin{tabular}{ll}
\toprule
Simulation step / substeps & $1/250$\,s~/~2 \\
Solver position iterations & 25 \\
Finger closing effort / steps & 25\,N~/~120 \\
Gravity / hold duration & $9.81\,\text{m/s}^2$~/~150 steps ($0.6$\,s) \\
Success threshold & $0.30$\,m displacement \\
Friction coefficient & 3 \\
Convex decomposition & V-HACD, $2{\times}10^5$ resolution \\
Grasps per instance & 100 \\
Rotated instances (B/L/M/P) & 30~/~105~/~66~/~48 \\
\bottomrule
\end{tabular}
\caption{Isaac Gym lift-test settings, identical for all methods. Each
method--NFE cell simulates $24{,}900$ grasps.}
\label{tab:simparams}
\end{table}

The harness is validated in both directions: ground-truth ACRONYM grasps score $97.8\%$ on average while negative controls score ${\sim}0\%$. It is insensitive to judge settings --- halving the success threshold to $0.15$\,m or reducing friction to 2 moves every method by at most one point and changes no ranking. One mesh whose convex decomposition fails in PhysX is excluded identically for all methods. Removing the test-time $\mathrm{SO}(3)$ rotations changes aggregate metrics negligibly for both equivariant models.

\section{Full Quantitative Results}
\label{sec:dense}
Table~\ref{tab:emd} reports the category-averaged EMD over the NFE grid for both configurations. GMF-SG is lowest through NFE~5; its single evaluation already matches what EquiGraspFlow reaches with two to five ($0.418$ against $0.420$ and $0.416$), and EquiGraspFlow needs ten evaluations to draw level ($0.002$) and twenty to lead by $0.012$. GMF-JVP trails GMF-SG by $0.04$ to $0.06$ at every budget, and both configurations stay far below every baseline at NFE~1 and~2.

\begin{table}[h]
\centering
\small
\setlength{\tabcolsep}{4.0pt}
\begin{tabular}{@{}lccccc@{}}
\toprule
Method & NFE 1 & 2 & 5 & 10 & 20 \\
\midrule
SE(3)-DiF       & 0.694 & 0.628 & 0.512 & 0.456 & 0.449 \\
BRIDGER         & 0.717 & 0.657 & 0.547 & 0.598 & 0.650 \\
EquiGraspFlow   & 0.734 & \textbf{0.420} & \textbf{0.416} &
                  \textbf{0.380}$^{*}$ & \textbf{0.369}$^{*}$ \\
\midrule
GMF-SG (Euler)  & \textbf{0.418}$^{*}$ & \textbf{0.399}$^{*}$ &
                  \textbf{0.390}$^{*}$ & \textbf{0.382} & \textbf{0.381} \\
GMF-JVP (Euler) & \textbf{0.466} & 0.456 & 0.432 & 0.425 & 0.422 \\
\bottomrule
\end{tabular}
\caption{EMD on the randomly rotated four-category test sets, averaged over
categories, with Euler integration for both GraspMeanFlow variants. Lower is
better. Best and second-best in each column are in bold; the best is marked
with $^{*}$.}
\label{tab:emd}
\end{table}

Table~\ref{tab:success} breaks success down by category and NFE; Figure~\ref{fig:succ_cat5} plots the NFE-5 column. The exp-$\mathrm{SO}(3)$
sampler holds the best Bowl results at NFE~5, 10, and~20.

Success under Euler integration, for both configurations and at every budget, is
listed in Table~\ref{tab:posttrain}, alongside the post-training results.

\section{Additional Ablations}
\label{sec:ablation}

\paragraph{Objective ablation.}
Table~\ref{tab:objablation} compares one-step EMD for five objectives under a matched lighter protocol (per-category models warm-started from a shared $\alpha$-annealed initialization, three seeds). Adding the semigroup term improves the JVP MeanFlow objective on every category, and the flow-matching boundary is the most stable anchor. The MF column uses the body-frame trajectory tangent paired with the left Jacobian, which diverges on the rotationally symmetric Bowl; the frame-consistent inverse-Jacobian target used by GMF-JVP is described in the main text.

\paragraph{Coupling.}
The two reported configurations differ in both the consistency term and the
prior--data coupling: GMF-SG uses independent pairing and GMF-JVP per-object
optimal transport (Appendix~\ref{sec:ot-coupling}). The four-way comparison
that would separate the two effects was outside our compute budget, so the gap
between the two configurations should not be attributed to either factor
alone. We report them as two instantiations of one framework rather than as a
controlled comparison of consistency terms. The comparison against
EquiGraspFlow is unaffected for GMF-SG, which shares the baseline's independent
coupling.

\begin{table}[h]
\centering
\small
\setlength{\tabcolsep}{4pt}
\begin{tabular}{lccccc}
\toprule
& MF & MF+sg & $\alpha$ & $\alpha$+sg & FM+sg \\
\midrule
Mug & 0.634 & 0.599 & 0.568 & 0.551 & \textbf{0.549} \\
Laptop & 0.640 & 0.610 & 0.592 & 0.557 & \textbf{0.509} \\
Pencil & 0.553 & 0.503 & 0.420 & 0.411 & \textbf{0.409} \\
Bowl & 1.110 & 0.978 & \textbf{0.513} & 0.763 & 0.534 \\
\midrule
Average & 0.734 & 0.673 & 0.523 & 0.571 & \textbf{0.500} \\
\bottomrule
\end{tabular}
\caption{One-step EMD by training objective (MF = JVP MeanFlow identity, $\alpha$ = $\alpha$-annealed instantaneous--average interpolation, sg = semigroup consistency; three-seed means under the ablation protocol).}
\label{tab:objablation}
\end{table}

\paragraph{Boundary variants.}
Replacing the flow-matching boundary with an endpoint-prediction boundary is competitive overall and notably better on the rotationally symmetric Bowl (one-step EMD ${\sim}0.40$ vs.\ ${\sim}0.53$), suggesting endpoint regression suits symmetric, multimodal targets; we keep the flow-matching boundary as the default for its robustness across categories.

\paragraph{Warm-up and training length.}
Training the decomposed objective from scratch is on par with the $\alpha$-annealed warm-up (NFE-20 success $93.4\%$ vs.\ $94.2\%$ under the unrotated Euler ablation protocol; EMD curves overlap), so the warm-up is optional. Extending training from 120k to 200k steps leaves EMD flat and moves NFE-20 success by at most $1.6$ points, so all reported results use the 120k checkpoint.

\section{Latency Details}
\label{sec:latency}

Table~\ref{tab:latency} reports wall-clock latency per NFE setting,
measured on a single A100 with batch size one, timing model inference
only (dataset loading, mesh handling, and simulation excluded) after
warm-up. Because GMF-SG and GMF-JVP share the same network, their
latency is identical at every step count and is reported once.

\begin{table}[h]
\centering
\small
\setlength{\tabcolsep}{4.5pt}
\begin{tabular}{lccccc c}
\toprule
& NFE 1 & 2 & 5 & 10 & 20 & native \\
\midrule
GraspMeanFlow & 31.5 & 32.5 & 41.4 & 56.3 & 87.0 & -- \\
EquiGraspFlow & 34.1 & 33.8 & 44.7 & 62.8 & 99.0 & 315.0 \\
BRIDGER & 8.3 & 10.7 & 17.1 & 28.6 & 51.4 & 97.1 \\
SE(3)-DiF & 10.9 & 16.6 & 33.7 & 63.1 & 119.2 & 1338.7 \\
\bottomrule
\end{tabular}
\caption{Wall-clock latency (ms, A100, batch~1, model inference only,
after warm-up). NFE columns run every method's own sampler at matched
step counts; \emph{native} is the released default configuration, at
which each baseline's published quality is obtained: RK-MK with $20$
steps ($80$ field evaluations) for EquiGraspFlow, $40$ steps for
BRIDGER, and annealed Langevin with $240$ evaluations for
SE(3)-DiffusionFields. The two GraspMeanFlow configurations share a
network and therefore a latency. The point-cloud encoder
(${\sim}29$\,ms) is evaluated once regardless of step count and
dominates the equivariant models at small NFE; the baselines use smaller
networks and are cheaper per step. In batched mode (batch~100),
GraspMeanFlow takes $0.34$\,ms/grasp at NFE~1 (a 100-candidate set
in ${\sim}34$\,ms).}
\label{tab:latency}
\end{table}

\section{Post-Training with Success-Filtered ReFlow}
\label{sec:posttrain}
\begin{table}[h]
\centering
\small
\setlength{\tabcolsep}{5pt}
\begin{tabular}{llcccc}
\toprule
Model & Sampler & NFE 2 & 5 & 10 & 20 \\
\midrule
GMF-SG            & Euler                & 36.8 & 68.3 & 81.7 & 94.7 \\
GMF-SG            & exp-$\mathrm{SO}(3)$ & 19.2 & 85.5 & 90.6 & 91.1 \\
GMF-SG + ReFlow   & Euler                & 48.6 & 83.0 & 89.7 & 93.4 \\
GMF-SG + ReFlow   & exp-$\mathrm{SO}(3)$ & 7.8 & 76.8 & 90.2 & 91.8 \\
\midrule
GMF-JVP           & Euler                & \textbf{53.9} & 81.7 & 93.2 & 94.6 \\
GMF-JVP           & exp-$\mathrm{SO}(3)$ & 15.4 & \textbf{91.1} & \textbf{96.3} & \textbf{95.6} \\
GMF-JVP + ReFlow  & Euler                & 47.6 & 83.0 & 92.0 & 92.2 \\
GMF-JVP + ReFlow  & exp-$\mathrm{SO}(3)$ & 8.2 & 76.7 & 92.4 & 94.3 \\
\bottomrule
\end{tabular}
\caption{Four-category average success rate (\%) for every sampler and
post-training combination. Post-training improves the Euler sampler of
GMF-SG but degrades the endpoint-style sampler of both configurations,
and on GMF-JVP it helps at no budget except Euler at NFE~5.}
\label{tab:posttrain}
\end{table}

The model reported in the main text is trained only on the ACRONYM grasp
annotations. A natural question is whether the simulator itself can be used as a
training signal: the lift test provides a binary label for any generated grasp,
so the model can be refined on its own successful samples. We report such an
experiment here. It improves the Euler sampler substantially at small step
budgets but does not surpass the endpoint-style sampler of
Section~\ref{sec:sampling}, which is why the main text reports the pre-trained
model only.

\paragraph{Procedure.}
Starting from the converged Stage-2 checkpoint, we generate (noise, grasp) pairs
on the training objects with 20-step Euler sampling, discard the pairs whose
grasp fails the simulated lift test of Appendix~\ref{sec:sim} ($94.3\%$ pass),
and continue training for 15k steps on the surviving pairs. This is a
success-filtered instance of ReFlow~\citep{liu2022reflow}: the coupling between
prior and data samples is replaced by the model's own transport, so the
straightened paths are those the model already traverses successfully. All other
settings follow the Stage-2 column of Table~\ref{tab:hyper}, except that the
batch uses 8 objects instead of 4 and the coupling is self-generated rather than
independent. The filtered pairs are regenerated once at the start of the stage
and not refreshed.

\paragraph{Results.}
Table~\ref{tab:posttrain} reports every combination of the two configurations
with the two samplers, before and after post-training.

On GMF-SG the Euler sampler gains $11.8$, $14.7$, and $8.0$ points at NFE~2, 5,
and~10. This is the effect ReFlow is designed for: independently drawn (noise,
grasp) pairs induce curved transport that few steps cannot follow, and
re-pairing each grasp with the noise the model itself maps to it straightens
those paths. At NFE~20 the gain reverses, costing $1.3$ points, consistent with
the same reading --- once the solver has enough steps, the restricted coupling
is a loss of coverage rather than a gain in straightness.

On GMF-JVP the stage gives nothing. Its per-object optimal-transport assignment
(Appendix~\ref{sec:ot-coupling}) already supplies a straightened coupling during
training, so the benefit ReFlow offers has already been collected; only the cost
of an extra stage on self-generated data remains, and every budget except Euler
at NFE~5 gets worse.

That cost lands hardest on the endpoint-style sampler, in both configurations.
The exp-$\mathrm{SO}(3)$ sampler commits the rotation to the currently predicted data
endpoint at every step, so it needs the interval field's endpoint prediction to
be accurate across the whole interval; refitting on a finite set of the model's
own generations, built once and never refreshed, is precisely a loss of endpoint
accuracy. Euler is more forgiving because it consumes only the local average
velocity over a short step. Accordingly, at NFE~5 the endpoint sampler drops
from $85.5$ to $76.8$ on GMF-SG and from $91.1$ to $76.7$ on GMF-JVP.

We therefore report the pre-trained models throughout the main text, and record
this experiment as a negative result: success-filtered ReFlow helps only a model
whose coupling is not already transport-optimal, and it pays for the attempt by
blurring the endpoint predictions the sampler we report depends on.


\paragraph{Caveat.}
The filtering threshold is the simulator's binary outcome, which discards all information about \emph{how} a grasp failed. A soft-weighted variant, or one that refreshes the generated pairs periodically rather than once, may behave differently; we did not explore either.

\end{document}